\let\originalleft\left
\let\originalright\right
\renewcommand{\left}{\mathopen{}\mathclose\bgroup\originalleft}
\renewcommand{\right}{\aftergroup\egroup\originalright}
\newcommand{\bP}[2][]{\Pr\ifthenelse{\isempty{#1}}{}{_{#1}}\left[#2\right]}
\newcommand{\bE}[2][]{\mathop\mathbb{E}\ifthenelse{\isempty{#1}}{}{_{#1}}\left[#2\right]}
\newcommand{\bI}[2][]{\mathop\mathbb{I}\ifthenelse{\isempty{#1}}{}{_{#1}}\left[#2\right]}
\newcommand{\Var}[2][]{\mathbf{Var}\ifthenelse{\isempty{#1}}{}{_{#1}}\left[#2\right]}
\def\E{\mathbb{E}}
\def\N{\mathbb{N}}
\title{Feel-Good Thompson Sampling for Contextual Bandits: a Markov Chain Monte Carlo Showdown}
\definecolor{mydarkblue}{rgb}{0,0.08,0.45}
\theoremstyle{plain}
\newtheorem{theorem}{Theorem}[section]
\newtheorem{remark}{Remark}[theorem]
\newtheorem{lemma}[theorem]{Lemma} 
\def\E{\mathbb{E}}
\def\N{\mathbb{N}}
\theoremstyle{definition}
\newtheorem{assumption}[theorem]{Assumption}
\newcommand\R{\mathbb{R}}
\xdef\csname m\x\endcsname{\noexpand\mathbf{\x}}
\xdef\csname om\x\endcsname{\noexpand\overline{\noexpand\mathbf{\x}}}
\xdef\csname c\x\endcsname{\noexpand\mathcal{\x}}
\renewcommand{\tilde}{\widetilde}
\renewcommand{\hat}{\widehat}
\DeclareMathOperator{\tr}{tr}
\def\E{\mathbb{E}}
\def\N{\mathbb{N}}
\newcommand{\Comment}[1]{\textcolor{gray}{\footnotesize\textit{// #1}}}
\begin{document}

\author{Emile Anand$^{* \dagger}$ \\ Georgia Institute of Technology \\ School of Computer Science \\ Atlanta, GA, 30308 \\ \texttt{emile@gatech.edu}\And Sarah Liaw$^{* \ddagger}$ \\ Havard University \\ School of Engineeering and Applied Sciences \\ Cambridge, MA, 02138 \\ \texttt{sliaw@g.harvard.edu
}}

\makeatletter
\newcommand{\@trackname}{Conference on Neural Information Processing Systems (NeurIPS 2025).}
\makeatother
\maketitle

\def\thefootnote{*}\footnotetext{These authors contributed equally to this work.}

\def\thefootnote{$\dagger$}\footnotetext{Work done while an intern at Windsurf and Cognition AI.}
\def\thefootnote{$\ddagger$}\footnotetext{Work done while an undergraduate at Caltech.}
\renewcommand{\thefootnote}{\arabic{footnote}}

\begin{abstract}
Thompson Sampling (TS) is widely used to address the exploration/exploitation tradeoff in contextual bandits, yet recent theory shows that it does not explore aggressively enough in high-dimensional problems. Feel-Good Thompson Sampling (FG-TS) addresses this by adding an optimism bonus that biases toward high-reward models, and it achieves the asymptotically minimax-optimal regret in the linear setting when posteriors are exact. However, its performance with \emph{approximate} posteriors, common in large-scale or neural problems, has not been benchmarked. We provide the first systematic study of FG-TS and its smoothed variant (SFG-TS) across fourteen real-world and synthetic benchmarks. To evaluate their robustness, we compare performance across settings with exact posteriors (linear and logistic bandits) to approximate regimes produced by fast but coarse stochastic-gradient samplers. Ablations over preconditioning, bonus scale, and prior strength reveal a trade-off: larger bonuses help when posterior samples are accurate, but hurt when sampling noise dominates. FG-TS generally outperforms vanilla TS in linear and logistic bandits, but tends to be weaker in neural bandits. Nevertheless, because FG-TS and its variants are competitive and easy-to-use, we recommend them as baselines in modern contextual-bandit benchmarks. Finally, we provide source code for all our experiments in \url{https://github.com/SarahLiaw/ctx-bandits-mcmc-showdown}. 
 \end{abstract}

\section{Introduction}
The stochastic multi-armed bandit model \citep{berry1985bandit,bubeck2012regret,lattimore2020bandit} is one of the most prevalent frameworks for sequential decision making under uncertainty. Among its variants, the contextual bandit \citep{langford2007epoch} has enabled vast breakthroughs in numerous real-world applications such as recommendation \citep{yue2012k}, predictive control \citep{NEURIPS2023_a7a7180f}, healthcare \citep{durand2018contextual}, and prompt optimization \citep{dwaracherla2024efficient}. In each round, the contextual bandit observes a $d$-dimensional feature vector (the ``context'') for each of its arms, pulls one of them, and receives a reward. The bandit's goal is to choose actions to maximize the total reward over a finite horizon $T$. This limited bandit feedback highlights the exploration/exploitation dilemma: should one choose the myopically better arm to maximize an immediate reward, or an under-sampled arm to potentially improve future rewards? \looseness=-1

Hence, designing efficient deep exploration algorithms has been a key aspect of contextual bandit research \citep{bubeck2009pure,chu2011contextual,russo2016information,xu2022neural}. One popular approach is \emph{Thompson Sampling} (TS), which computes the posterior distribution of each arm being optimal for the context, and samples an arm from this distribution \citep{thompson1933likelihood}. TS has found practical success due to its ease of implementation \citep{chapelle2011empirical,jose2024thompson} and impressive empirical performance. However, in high-dimensional scenarios, it is believed that TS produces suboptimal outcomes \citep{abbasi2011improved,agrawal2013thompson,abeille2017linear,hamidi2020frequentist} due to insufficient exploration.  \looseness=-1

To correct this, \cite{zhang2022feel} introduced Feel-Good Thompson Sampling (FG-TS) which proposes a modified likelihood function in TS by adding a \emph{feel-good} bonus term that enforces more aggressive exploration. In the fundamental setting of linear contextual bandits, where the information-theoretic regret lower bound is $\smash{\Omega(d\sqrt{T})}$ \citep{russo2016information}, it had been shown that TS achieves $\smash{O(d\sqrt{dT})}$ frequentist regret bounds \citep{agrawal2013thompson}, and \cite{hamidi2020frequentist} provided matching lower bounds. Conversely, FG-TS obtains the optimal regret of $\smash{O(d\sqrt{T})}$ \citep{zhang2022feel}. Recently, \cite{huix2023tight} adjusted the algorithm by smoothening the bonus term, resulting in the smoothed Feel-Good Thompson Sampling (SFG-TS) algorithm. This, in turn, smoothens the posteriors, making FG-TS amenable to Markov Chain Monte-Carlo (MCMC) methods which have been extensively studied for their use in efficient sampling \citep{diaconis2009markov,gelman2013bayesian}.  \looseness=-1

MCMC enables tractable Thompson Sampling because the sequence of posterior distributions can otherwise be computationally expensive to sample from in practice, as it requires inverting a $\smash{d\times d}$ matrix, which takes $\smash{O(d^3)}$ time using Cholesky decomposition. Moreover, while one rarely has access to the full posterior, it is generally easier to efficiently approximate the posterior by sampling from it. This connection between contextual bandits and MCMC has been widely explored \citep{may2012optimistic,8187198,riquelme2018deep}, and there is a line of recent work in the literature which uses MCMC methods at each round to obtain approximate samples from the target posterior distribution in Thompson sampling \citep{mazumdar2020thompson,xu2022neural,xu2022langevin,yi2024improving}. \looseness=-1

While the effect of approximating the posterior distribution in Thompson Sampling is well-studied in the literature \citep{riquelme2018deep}, we study the problem of what happens when the posterior approximations are inaccurate in Feel-Good Thompson Sampling from an empirical standpoint. While adding an optimism bonus in FG-TS theoretically improves the theoretical regret for linear contextual bandits, understanding the impact of the degree of optimism and the quality of the approximated posterior is critical: in certain instances, a mismatch in posteriors may not hurt in terms of decision making, and we will still end up with good decisions. Unfortunately, in other cases, the mismatch in posteriors together with its induced feedback loop, coupled with the mismatch in the degree of optimism, could degenerate in a significant loss of performance. We would like to understand the key aspects that influence either outcome. This is an important practical concern, since \cite{zhang2022feel}'s theoretical analysis of FG-TS assumes exact posteriors to favor simplicity of the analysis, but what is its impact? Therefore, the main question we address in this paper is how approximate model posteriors affect the performance of FG-TS (and its smoothed variants) in contextual bandits.  \looseness=-1

In this paper, we develop a benchmark for exploration methods, and provide extensions to deep neural networks. We examine a wide class of exploration algorithms that have been studied for the past two decades \citep{bubeck2009pure,mnih2015human,riquelme2018deep,kveton2020randomized,xu2022neural,pmlr-v247-lin24a,anand2025meanfield,lin2023online2}: we compare a variety of well-established and recent posterior sampling algorithms for approximating the posterior distribution, under the lens of FG-TS and its smoothed variants for contextual bandits. This includes a novel optimal-regret Hamiltonian Monte-Carlo algorithm for SFG-TS, building on the works of \cite{zhang2022feel,huix2023tight}. We also provide a theoretical analyses of the runtime complexities, convergence guarantees, and regret bounds in \cref{sec: appendix: alg descriptions}. Finally, all code and implementations to reproduce the experiments will be available open-source to provide a reproducible benchmark for future development.\footnote{Our PyTorch implementation is available  
\href{https://github.com/SarahLiaw/ctx-bandits-mcmc-showdown}{here} and our pip installation instructions are available \href{https://pypi.org/project/ctx-bandits-mcmc/1.0.1/}{here}.} \looseness=-1

 \textbf{Contributions.} Our key contributions are outlined below.
\begin{enumerate}[leftmargin=*, labelsep=0.5em]
    \item \textbf{Linear and logistic contextual bandits.} We compare the performance of FG-TS (and its smoothed variants) with TS in synthetic (and wheel) datasets through a variety of MCMC algorithms, such as Langevin Monte Carlo (LMC), Metropolis-Adjusted Langevin Algorithms (MALA), and Hamiltonian Monte Carlo (HMC). We explore the impact of preconditioning, damping, stochastic variance-reduced gradient methods, and modifying the levels of feel-good bonus' and inverse temperature parameters in the MCMC algorithms. Finally, we compare the performance of these methods with well-studied algorithms, such as $\epsilon$-Greedy, Lin-UCB, Neural-Linear, and Lin-TS. \looseness=-1
    \item \textbf{Neural contextual bandits.} We study the performance of FG-TS and SFG-TS with TS in real-world datasets, through the Langevin Monte Carlo (LMC) MCMC algorithm. In the neural setting, the additional computations done by more sophisticated MCMC algorithms render the computations too expensive to be practical. Nevertheless, we compare the performance of these methods with well-studied algorithms, such as Neural-$\epsilon$-Greedy and Neural UCB. \looseness=-1

\end{enumerate}

Our experiments show that FG-TS performs best (through the LMC algorithm) in the linear contextual bandit setting, and that SFG-TS performs best (through the MALA algorithm) in the logistic setting. However, FG-TS (and its smoothed variants) tends to be weaker in the neural contextual bandit setting. Perhaps surprisingly, we observe several instances where its performance degenerates rapidly, where TS remains far more competitive. A priori, one might expect FG-TS to be more amenable to real-world datasets because it favors more aggressive exploration (e.g., by escaping local optima or handling sparse or heterogeneous contexts); however, our experiments indicate that this is not the case.\looseness=-1

In \cref{sec: decision_making_via_fg_ts}, we discuss TS and its Feel-Good variants, and present the contextual bandit problem. We introduce the various algorithmic approaches we consider for approximating the posterior distributions in \cref{sec:posterior_approximation_algorithms}. Finally, we describe our experimental setup and discuss our results in \cref{sec:empirical_evaluation,sec:discussion}. \looseness=-1

\section{Decision Making via Feel-Good Thompson Sampling}
\label{sec: decision_making_via_fg_ts}

Many sequential decision-making problems can be framed through the contextual bandits framework. In each round $t\in[T]$, an agent observes an action set (or a context) $\smash{\cX_t\subseteq \R^d}$ and (based on its internal context-adaptive algorithm) chooses an arm or action represented by a feature vector $\smash{x_t\in \cX_t}$. A feedback reward $r_t \coloneqq r_t(x)$ is then generated and returned to the algorithm. In contextual bandit problems, the mean reward of an action $x\in \R^d$ is given by a reward generating function $f_{\theta}(x)$ and the observed reward is $\smash{r_t(x) = f_{\theta^*}(x) + \xi_t}$, where $\theta^* \in \R^{m}$ is an unknown weight parameter shared across all the arms and $\{\xi\}_{t>0}$ is a random noise sequence. For instance, in linear contextual bandits \citep{agrawal2013thompson,sivakumar2020structured}, $\theta^* \in \R^d$ and $\smash{f_{\theta^*}(x)  = x^\top \theta^*}$; in generalized linear bandit \citep{filippi2010parametric,kveton2020randomized}, we have $\smash{f_{\theta^*}(x) = \mu(x^\top \theta^*)}$ for some link function $\mu(\cdot)$; and for neural contextual bandits \citep{xu2018global,zhang2020neural,terekhov2025contextual}, $f_{\theta^*}(x)$ is a neural network, where $\theta^*$ is the concatenation of all weight parameters and $x$ is the input.  \looseness=-1

At time $T$, the total reward gained by the algorithm is $\smash{r = \sum_{t=1}^T r_t}$. Then, the objective of any bandit algorithm is to maximize $r$, which is equivalent to minimizing the \emph{cumulative pseudo regret} \citep{lattimore2020bandit, lin2022online} $\smash{R(T) = \E[\sum_{t=1}^T r(x_t^*) - r(x_t)]}$. Here, $x_t^*$ is the arm chosen by the optimal hindsight policy which selects actions by maximizing the expected reward.  \looseness=-1

\textbf{Thompson Sampling.} TS \citep{thompson1933likelihood} as presented in \cref{alg: thompson_sampling}, is a bandit algorithm that is popular for its elegance and practical efficiency \citep{agrawal2013thompson, chapelle2011empirical}. TS requires that one only needs to sample from the posterior distribution of the reward function. At each round, it draws a sample and takes a greedy action under the optimal policy for the sample. The posterior distribution is then updated after the result of the action is observed. \cref{alg: thompson_sampling} considers the standard setting where the likelihood is given by $\smash{L^{\text{TS}}(\theta,x,r) = \eta(f_\theta(x) - r)^2}$ for some $\smash{\eta>0}$. \looseness=-1
\begin{algorithm}[H]
      \caption{Thompson Sampling for Contextual Bandits}
      \begin{algorithmic}
          \STATE \textbf{Input:} Likelihood function $L^\text{TS}(\theta,x,r)$, reward model function $f_\theta(x), \theta_{0}\sim p_0(\cdot)$
          \FOR{time $t=1,2,\dots,T$}
          \STATE Observe context $\cX_t \subseteq \R^d$,
          \STATE Let $\cL_t(\theta) = \sum_{s=1}^{t-1} L^\text{TS}(\theta,x_s,r_s)$
          and sample $\theta_{t}\sim \exp(-\cL_t(\theta)) p_0(\theta)$, 
          \STATE Play arm $x_t = \arg\max_{x\in \cX_t} f_{\theta_{t}}(x)$ and observe reward $r_t$.
          \ENDFOR
      \end{algorithmic}
      \label{alg: thompson_sampling}
    \end{algorithm}
\textbf{Feel-Good Thompson Sampling.} FG-TS \citep{zhang2022feel} extends \cref{alg: thompson_sampling} by defining a \emph{feel-good} likelihood. Given a constant $b$ and tuning parameter $\lambda>0$, the feel-good likelihood is given by:
\begin{equation}\label{equation: FG-TS loss}
    \cL_t(\theta) = \sum_{s=1}^{t-1} L^{\text{FG}}(\theta,x_s,r_s), \quad L^{\text{FG}}(\theta,x,r) = \eta(f_{\theta}(x) - r)^2 - \lambda \min(b,f_\theta(x)),
\end{equation}
\textbf{Smoothed Feel-Good Thompson Sampling}. SFG-TS \citep{huix2023tight} proposes the smoothening:
\begin{equation}\label{equation: SFG-TS loss}
    \cL_t(\theta) = \sum_{s=1}^{t-1} L^{\text{SFG}}(\theta,x_s,r_s), \quad L^{\text{SFG}}(\theta,x,r) = \eta (f_{\theta}(x) - r)^2 - \lambda\left(b-\Phi_s(b-f^\star_\theta)\right),
\end{equation}
where $\smash{f^\star_\theta = \max_{x\in \cX}f_\theta(x)}$ and $\smash{\Phi_s(u)=\log(1+\exp(su))/{s}}$, for \smash{$u\in\R$} and parameters $\lambda,\eta,b,s \in \R^+$. In linear contextual bandits, \cite{zhang2022feel} showed that FG-TS obtains the minimax-optimal regret of $O(d\sqrt{T})$, and \cite{huix2023tight} shows that this regret guarantee is maintained for SFG-TS. \looseness=-1

\section{Algorithms}
\label{sec:posterior_approximation_algorithms}
This section describes the different algorithmic design principles we considered in our simulations in \cref{sec:empirical_evaluation}. These algorithms include (Neural-)$\epsilon$-Greedy, Linear UCB, LinTS, Neural UCB, and our main focus: Markov Chain Monte Carlo methods, with its many variants. \looseness=-1

\textbf{Linear Methods.} Linear methods follow the exact closed-form updates for Bayesian linear regression in linear contextual bandits \citep{chapelle2011empirical,gelman2013bayesian}. Let $\lambda>0$ be a tunable parameter. We introduce the {design matrix} $\mV_t$, the {cumulative response} vector $b_t$, and the ridge regression parameter $\hat{\theta}_t$ given by ${\mV_t=\lambda\mI + \sum_{s=1}^{t-1} x_s x_s^\top, b_t = \sum_{s=1}^{t-1} r_s x_s}$, and $\smash{\hat{\theta}_t = \mV_t^{-1} b_t}$ (respectively).
At round $t$, \underline{LinUCB} computes a confidence bound ${p_t(x) = x^\top \hat{\theta}_t + \alpha(x^\top \mV_t^{-1} x)^{1/2}}$, for some exploration parameter $\alpha>0$, and picks $\smash{x_t = \mathrm{argmax}_x p_t(x)}$. On the other hand, \underline{LinTS} samples $\smash{\theta_t\sim \cN(\hat{\theta}_t, v_t \mV_t^{-1})}$, where $v_t>0$ is a scaling parameter, and picks $\smash{x_t = \mathrm{argmax}_x x^\top \theta_t}$. Finally, \cite{riquelme2018deep} studies a \underline{Neural-Linear} variation of these linear methods that applies Bayesian linear regression on top of the regression of the last layer of a neural network. Similarly, we further benchmark such neural-linear approaches for the neural MCMC contextual bandits settings. \looseness=-1

\textbf{Neural Methods.} Neural methods replace the raw actions $x\in\R^d$ by a learned embedding $\phi_\theta(x)\in \R^m$ generated by a neural network with SGD-learned parameters $\theta$ \citep{xu2022neural}. Exploration is done only on the last-layer representation: let $\lambda>0$ be a tunable parameter and $\phi_s = \phi_\theta(x_s)$ be the last-layer feature embedding. At each round $t$, \underline{NeuralUCB} maintains the Gram matrix
    $\smash{\mZ_t \coloneqq \lambda\mI + \sum_{s=1}^{t-1} \phi_s \phi_s^\top}$ be the Gram matrix. It then computes the confidence bound (UCB score) $\smash{p_t(x) = r(x) + \alpha({\phi_t \mZ_t^{-1}\phi_t})^{1/2}}$ and picks the action $x_t = \arg\max_{x} p_t(x)$. Similarly, \underline{NeuralTS} (Neural Thompson Sampling) considers weight uncertainty across all features of the neural network when updating its posteriors \citep{zhang2020neural}: its mean is the neural network approximator, and its variance is built upon the neural tangent kernel (NTK) features of the neural network \citep{10.5555/3327757.3327948,arora2019finegrainedanalysisoptimizationgeneralization,chaudhari2024peertopeerlearningdynamicswide}, given by $\smash{\hat{\Theta}(x,x')\coloneqq\left(\frac{\partial f(x,\theta)}{\partial \theta}\right)\left(\frac{\partial f(x',\theta)}{\partial \theta}\right)}^\top$. \looseness=-1

\textbf{Greedy Methods.}
\underline{$\epsilon$-Greedy} balances the exploration/exploitation tradeoff by selecting the arm whose current reward estimate is highest with probability $1-\epsilon$, and selecting a random arm otherwise. Next, the algorithm updates its estimate (through a table or linear model) for that arm. \underline{Neural $\epsilon$-Greedy} extends this by selecting a random action with probability $\epsilon$ for some decaying schedule of $\epsilon$. \looseness=-1

\subsection{Markov Chain Monte Carlo Methods}
While drawing from the posterior distribution of the model parameters at each round is easy in conjugate linear-Gaussian models, it becomes intractable when moving to realistic, non-linear, or high-dimensional models \citep{xu2022langevin,huix2023tight}.  In MCMC, only noisy gradient descent updates are needed for proper exploration in bandit problems. This is especially appealing for deep neural networks as one rarely has access to the full posterior, but can approximately sample from it. \looseness=-1
\begin{algorithm}[H]
      \caption{MCMC Thompson Sampling}
      \begin{algorithmic}
          \STATE \textbf{Input:} Loss function $\cL_t(\theta)$, reward model function $f_\theta(x), \theta_{1,0}=0, K_0=0$
          \FOR{time $t=1,\dots,T$}
          \STATE Observe context $\cX_t \subseteq \R^d$
          \STATE Set $\theta_{t,0} = \theta_{t-1,K_{t-1}}$
          
          \Comment {MCMC-TS replaces exact posterior sampling in \cref{alg: thompson_sampling} with approximate posterior sampling.}
          \FOR{$k=1,\dots,K_t$}
          \STATE MCMC update to compute $\theta_{t,k+1}$
          \ENDFOR
          \STATE Play arm $x_t = \arg\max_{x\in \cX_t} f_{\theta_{t,K_t}}(x)$ and observe reward $r_t$
          \ENDFOR
      \end{algorithmic}
      \label{alg: MCMC_TS}
    \end{algorithm}    
\textbf{Langevin Monte Carlo (LMC).} \cite{xu2022langevin} proposed a Langevin MCMC to approximate the unknown posterior distribution of the parameter $\mathbf{\theta}^*$ upto a high precision: given a schedule of step-sizes $\{\eta_t\}_{t\geq 1}$ and inverse temperatures $\{\beta_t\}_{t\geq 1}$, LMC\footnote{The Euler-Maruyama discretization of the Langevin SDE: $\smash{\mathrm d\theta_s = -\nabla \cL_t(\theta_s) + \sqrt{2\beta_t^{-1}}\mathrm dB(s)}$} samples a standard normal vector $\epsilon_{t,k}\sim \cN(\mathbf{0},\mI)$ and lets $\smash{\theta_{t,k+1} = \theta_{t,k} - \eta_t \nabla \cL_t(\theta_{t,k-1}) + ({2\eta_t \beta_t^{-1} })^{1/2} \epsilon_{t,k}}$. For sufficiently large epoch lengths $K_t$, this approximately converges to the posterior distribution ${\pi_t(\theta)\propto e^{-\beta_t L_t(\theta)}}$. \looseness=-1

\textbf{Metropolis-Adjusted Langevin Algorithms (MALA).} The discretization error of LMC has a bias on the order of the step-size $\eta_t$. To correct the LMC discretization bias, \cite{huix2023tight} adds a Metropolis filter at each iteration: with probability $\smash{1-\min\{1, \frac{\exp({\cL_t({\theta}_{t,k})}}{\exp(\cL_t({\theta}_{t,k+1})}\}}$, set $\smash{\theta_{t,k+1} = {\theta}_{t,k}}$. \looseness=-1

\textbf{Hamiltonian Monte Carlo (HMC).} In many settings, HMC is believed to outperform other MCMC algorithms such as MALA \citep{huix2023tight} or LMC \citep{xu2022langevin}, and is equipped with rich theoretical guarantees \citep{chen2019optimal,apers2024hamiltonian}. HMC moves by integrating the Hamiltonian dynamics $H:\R^d\times\R^d\to \R$ which captures the sum of the potential and kinetic energies of a particle as a function of its position $\theta \in \R^d$ and velocity $v\in \R^d$. We let $\smash{H(\theta,v) = \cL_t(\theta) + \frac{1}{2}\|v\|^2}$. $\smash{(\theta_t, v_t)}$ follows a discretization of the \emph{Hamiltonian curve} trajectory: $\smash{\frac{\mathrm d\theta}{\mathrm dt} =v, \frac{\mathrm dv}{\mathrm dt} =-\nabla \cL_t(\theta)}$. We give implementation details and theoretical guarantees in \cref{sec: appendix: algorithm_details}. \looseness=-1

\textbf{Stochastic Variance-Reduced Gradients.} Although the updates in the above MCMC algorithms are presented as a full gradient descent step plus an isotropic noise, we can replace the full gradient $\nabla L_t(\theta_{t,k-1})$ with a variance-reduced stochastic gradient (SVRG) of the loss function $L_t(\theta_{t,k-1})$ computed from a mini-batch $\cB_k$ of data \citep{welling2011bayesian,dubey2016variance}. Let $\epsilon_{t}\sim \cN(\mathbf{0},\mI)$ and let \(U(\theta)\) be the full-data loss. Then, the SVRG modification is given by:
\begin{align}
\tilde{\mu} = \nabla U(\tilde{\theta}) = \nabla U_{\mathrm{data}}(\tilde{\theta}) &+ 2\alpha\|\tilde{\theta}\| , \quad\quad
g_k(\theta_t) = \nabla U_{\mathrm{data}}^{(\mathcal{B}_k)}(\theta_t) - \nabla U_{\mathrm{data}}^{(\mathcal{B}_k)}(\tilde{\theta}) + \tilde{\mu}, \\
\theta_{t+1} &= \theta_t - \eta_t \, g_k(\theta_t) + \sqrt{2 \eta_t \beta^{-1}} \epsilon_t.\nonumber
\end{align}
We show in \cref{sec: appendix: algorithm_details} that SVRG maintains the theoretical guarantees of the full-gradient version.  \looseness=-1

\textbf{Preconditioning.} Preconditioning can improve the convergence rate of the MCMC methods by a factor of $\kappa_t$, where ${\kappa_t={\lambda_{\text{max}}(\mV_t)}/{\lambda_{\text{min}}(\mV_t)}}$ is the condition number of design matrix $\mV_t$ \citep{li2016preconditioned}. Preconditioned-LMC samples a standard normal vector ${\epsilon_{t,k}\sim \cN(\mathbf{0},\mI)}$ and performs the modified LMC update: $\smash{\theta_{t,k+1} = \theta_{t,k} - \eta_t \mV_t^{-1} \nabla L_t(\theta_{t,k-1}) + (2\eta_t\beta_t^{-1})^{1/2} \mV_t^{-1/2} \epsilon_{t,k}}$. The preconditioned variants of MALA and HMC follow similarly, with minutiae appearing in \cref{sec: appendix: alg descriptions}. \looseness=-1

\textbf{Underdamped Langevin Monte Carlo.} {Discretized Langevin updates can be viewed as an overdamped discretization of the Langevin SDE. We benchmark the analogous 
underdamped (kinetic) LMC algorithm (ULMC) \citep{zhang2023improved}. Like Hamiltonian Monte Carlo, ULMC tracks position $\theta$ and velocity $v$ with a damping coefficient $\gamma>0$ and learning rate $\eta$. ULMC's update is given by the Ornstein-Uhlenbeck half-update mechanism given by:
\begin{align}\!\!\! v_{t, k+\frac{1}{2}}\!=\!(1-\gamma\eta)
 v_{t,k}\!-\!\eta\nabla U(\theta_{t,k}) \!+\! \sqrt{2\gamma\eta}\xi_{t,k},\quad
 \theta_{t,k+1}\!=\!\theta_{t,k} + \eta v_{t,k+\frac{1}{2}},\quad
 v_{t,k+1} = v_{t,k+\frac{1}{2}}  \end{align} Here, $\xi_{t,k}\sim\cN(\mathbf{0},\mathbf{I})$ is sampled at each iteration. Notably, the higher-order velocity term $v$ allows ULMC to mix faster in rough landscapes (at the cost of extra memory and tuning); moreover, ULMC has been deeply analyzed and has robust theoretical guarantees \citep{zhang2023improved}.} \looseness=-1

\section{Empirical Evaluation}
\label{sec:empirical_evaluation}

In this section, we present an empirical evaluation on two simulated bandit generators (linear and logistic) and a suite of real-world classification tasks. Experiments are conducted on AWS EC2 servers with P3/G4 GPUs (6 vCPUs), Google Cloud Compute servers with L4 GPUs and N2 instances (8 vCPUs), and NVIDIA A100 GPU. \looseness=-1

\subsection{Implementation Details} \label{sec:impl_details}

\textbf{Metrics.} To assess the performance of the algorithms, we report two metrics: cumulative regret ($R_T$) and simple regret ($\bar{R}_{\text{simp},T})$, where $\smash{R_T = \sum_{t=1}^{T} (r_t^\star - r_t)},$ and $\smash{\bar{R}_{\text{simp},T} = \frac{1}{500} (\sum_{t=T-499}^{T} (r_t^{\star} - r_t)} - R_{T-499}),$ 
where $r_t^{\star}$ is the reward of the optimal policy. As in \cite{riquelme2018deep}, we include simple regret as it serves as a useful proxy for the quality of the final policy \citep{bubeck2009pure}.  \looseness=-1

\textbf{Error Bars.} Our error bars report on the mean $\pm$ sample standard deviation over 10 seeds in the linear and logistic bandit settings, and over 5 seeds in the 6 real-world classification sets. As in \cite{riquelme2018deep}, we do not use a data buffer as the datasets are relatively small, and to avoid catastrophic forgetting. Therefore, all observations are uniformly sampled in each stochastic mini-batch. \looseness=-1

\vspace{-0.2cm}

\subsection{Testbeds, Baselines, and Common Experimental Setup}
\textbf{Linear Contextual Bandit.} To assess the performance in a setting with a tractable true posterior, we use a linear contextual bandit environment. At each round $t \in [T]$, the agent observes a context $\smash{\cX_t\sim\mathcal N(\mathbf 0_{4}, \mI_4)}$, chooses an action $x_t\in[K]$ with $K=5$, and receives a noisy linear reward $\smash{r_t=\phi(\cX_t,x_t)^\top\theta^\star+\varepsilon_t}$, where $\smash{\varepsilon_t\sim\mathcal N(0,\sigma^2), \sigma = 0.5}$ and $  \smash{\theta^\star \in \mathbb R^{20}}$. We place a Gaussian prior, $\smash{\theta_0 \sim \mathcal N\bigl(\mathbf0_{20},\sigma_0^2 \mI\bigr)}$ with $\sigma_0=0.01$. The feature map $\phi: \mathbb R^4 \times [K] \to \mathbb R^{20}$ is the standard block concatenation $(\phi(\cX_t,0),\dots,\phi(\cX_t,K))$ where $\smash{\phi(\cX_t,i)=e_i\cdot \cX_t}$ (where $e_i$ is the $i$'th standard basis vector). Now, $\theta^\star$ naturally decomposes into $k$ context-specific blocks. The time horizon is $T=10\,000$. Since the likelihood and prior are both Gaussian, Thompson Sampling admits a closed-form posterior update, so we have a convenient ground-truth baseline for our approximate sampling methods. \looseness=-1

\textbf{Wheel Bandit.} 
The wheel bandit, as defined in \cite{riquelme2018deep}, is a contextual bandit problem with the following structure. Let $d = 2$ be the context dimension and $\delta \in (0,1)$ be the exploration parameter. Contexts are sampled uniformly at random from the unit circle in $\mathbb{R}^2$, denoted as $X \sim \mathcal{U}(D)$. The problem consists of $k = 5$ possible actions $a_1,\dots,a_5$. Action $a_1$ provides reward $r \sim \mathcal{N}(\mu_1, \sigma^2)$, independent of context. In the inner region, where $\|X\| \leq \delta$, $a_2,\dots,a_5$ are sub-optimal with $r \sim \mathcal{N}(\mu_2, \sigma^2)$, where $\mu_2 < \mu_1$. In the outer region, where $\|X\| > \delta$, the optimal action depends on the quadrant of the context $X = (X_1, X_2)$ where for $(X_1 > 0, X_2 > 0), a_2$ is optimal, $(X_1 > 0, X_2 < 0), a_3$ is optimal, $(X_1 < 0, X_2 < 0)$, $a_4$ is optimal, and $(X_1 < 0, X_2 > 0), a_5$ is optimal. The optimal action provides $r \sim \mathcal{N}(\mu_3, \sigma^2)$ for $\mu_3 \gg \mu_1$, whereas other actions (including $a_1$) provide $r \sim \mathcal{N}(\mu_2, \sigma^2)$. We set $\mu_1 = 1.2$, $\mu_2 = 1.0$, $\mu_3 = 50.0$, and $\sigma = 0.01$, and let the horizon of the game be $T=5000$. As the probability of a context falling in the high-reward region is $1 - \delta^2$, we expect algorithms to get stuck repeatedly selecting $a_1$ for large $\delta$. \looseness=-1

\textbf{Logistic Contextual Bandit.} We further consider a logistic contextual bandit to introduce non-linear reward dependencies. With $\smash{T=10,000}$ and $\smash{K=50}$ arms, at each round $\smash{t \in [T]}$, the learner observes a collection of arm‐specific context vectors $\smash{\cX_{t,a}  \sim \mathcal{N}\bigl(\mathbf 0_{20},\,I_{20}\bigr),}$ where $\smash{a=1, \dots,50}$ each of which is then normalized to unit norm. The learner selects an arm $a_t \in [K]$ and obtains a Bernoulli reward $\smash{r_t \sim\mathrm{Bern} \bigl(\sigma\bigl(\phi(\tilde \cX_{t,a_t})^\top\theta^\star\bigr)\bigr),}$ where $\smash{\sigma(u) = \frac{1}{1+e^{-u}}}$, $\smash{\theta^\star \sim\mathcal{N}\bigl(\mathbf0_{20},\mI_{20}\bigr)}$ (scaled to unit norm), and $\smash{\phi: \R^{20}\to\R^{20}}$ where $\smash{\phi(\cX_t)=\cX_t}$ is the identity feature map extracting the $20$-dimensional context for each arm. We place a Gaussian prior $\smash{\theta_0 \sim\mathcal{N} \bigl (\mathbf0_{20}, \sigma_0^2 \mI\bigr)}$, with $\sigma_0=0.01$. \looseness=-1

\textbf{Real-World Tasks.}
Following \cite{riquelme2018deep}, we evaluate our MCMC Thompson Sampling algorithms on UCI datasets—\textsc{Adult}, \textsc{Shuttle}, \textsc{MagicTelescope}, \textsc{Mushroom}, \textsc{Jester}, \textsc{Covertype}, and \textsc{RestaurantRatings}—plus the \textsc{Financial} dataset and two vision benchmarks, \textsc{MNIST\_784} and \textsc{CIFAR-10}~\citep{krizhevsky2009learning,elmachtoub2017practical}. These are standard datasets studied in contextual bandits and multi-agent settings \citep{riquelme2018deep,chaudhari2024peertopeerlearningdynamicswide}, as they span a variety of features such as reward stochasticity, size, and number of optimal actions. Following the protocol of \cite{riquelme2018deep,kveton2020randomized}, each $N$-class example $x\!\in\!\R^{d}$ is converted into $K$ arm–context vectors
$\smash{x^{(1)},\dots,x^{(k)}}$, where $x^{(i)}=x\cdot e_i$ where $e_i$ is the $i$'th standard basis vector of $\R^d$. 
Pulling arm $j$ returns reward~$1$ if $j$ matches the ground-truth label, and $0$ otherwise. We provide statistics of the benchmark datasets in \cref{sec: appendix: real_world_datasets}. \looseness=-1

\textbf{Network Architectures and Training Protocol.} We adopt the training protocol and hyperparameter search strategy of \cite{xu2022langevin}: each method chooses the better of a fully connected two-layer MLP (width $100$) and four-layer MLP (width $50$);
the activation is selected from \{\texttt{ReLU},\,\texttt{LeakyReLU}\}.
Networks are updated for $100$ SGD steps every round. FG-augmented variants inherit identical network architectures, optimizers, and training schedules as their non-FG counterparts to ensure fair comparisons.
Unless stated otherwise, we run each bandit for $T=10{,}000$ rounds
(\textsc{Covertype}: $T=15{,}000$).
Additional implementation minutiae appear in Appendix~A of \cite{riquelme2018deep}. \looseness=-1

\section{Results}
\begin{table*}[t]
  \centering
  \caption{Final cumulative regret for synthetic datasets. We report mean $\pm$ sample std. The feel‐good parameter $\lambda$ is set to $0.5$ in each of the FG and SFG variants.}
  \label{tab:linear-log-results}
  \vspace{0.5ex}
  \footnotesize
  \begin{tabular}{lcccc}
    \toprule
    Algorithm & $\substack{\text{Linear-20d} \\ (\beta=10^{3})}^\star$ & $\substack{\text{Linear-20d} \\ (\beta=1)})$ & $\substack{\text{Linear-40d} \\ (\beta=10^{3})}$ & $\substack{\text{Logistic-20d} \\ (\beta=10^{3})}$.\\
    \midrule
    LinUCB & 73.0 $\pm$ 13.8 & – & 126.3 $\pm$ 19.3 & 176.9 $\pm$ 41.9 \\
    EpsGreedy & 19879 $\pm$ 7454.3 & – & 31170.6 $\pm$ 2764.1 & 2899.2 $\pm$ 677.9 \\
    LinTS & 114.7 $\pm$ 8.8 & – & 204.6 $\pm$ 19.1 & 179.9 $\pm$ 53.2 \\
    LMCTS & 62.6 $\pm$ 9.5 & 94.0 $\pm$ 18.2 & 129.1 $\pm$ 16.1 & 202.7 $\pm$ 44.1 \\
    PLMCTS & 134.4 $\pm$ 19.9 & 132.3 $\pm$ 28.2 & 302.9 $\pm$ 38.5 & 889.7 $\pm$ 248.0 \\
    FGLMCTS & 213.0 $\pm$ 126.0 & 296.7 $\pm$ 174.0 & 163.6 $\pm$ 22.4 & 184.8 $\pm$ 38.0 \\
    PFGLMCTS & 204.1 $\pm$ 105.3 & 171.0 $\pm$ 27.5 & 330.2 $\pm$ 30.8 & 963.3 $\pm$ 258.6 \\
    SFGLMCTS & 178.9 $\pm$ 138.1 & 241.0 $\pm$ 143.8 & 177.2 $\pm$ 27.4 & 221.7 $\pm$ 42.1 \\
    PSFGLMCTS & 229.0 $\pm$ 149.7 & 206.8 $\pm$ 90.2 & 338.9 $\pm$ 34.8 & 771.9 $\pm$ 274.5 \\
    SVRGLMCTS & 73.2 $\pm$ 31.1 & 103.6 $\pm$ 23.8 & 19236.8 $\pm$ 15560.8 & 216.0 $\pm$ 69.7 \\
    HMCTS & 241.2 $\pm$ 107.0 & 226.7 $\pm$ 78.9 & 354.4 $\pm$ 86.7 & 449.8 $\pm$ 86.2 \\
    PHMCTS & 90.0 $\pm$ 9.2 & 94.0 $\pm$ 15.3 & 162.2 $\pm$ 12.5 & 218.9 $\pm$ 16.1 \\
    FGHMCTS & 262.5 $\pm$ 85.7 & 246.4 $\pm$ 99.9 & 399.0 $\pm$ 112.5 & 462.3 $\pm$ 79.6 \\
    PFGHMCTS & 282.7 $\pm$ 156.0 & 291.6 $\pm$ 171.5 & 211.6 $\pm$ 18.2 & 411.1 $\pm$ 65.4 \\
    SFGHMCTS & 395.9 $\pm$ 504.7 & 331.2 $\pm$ 389.2 & 416.9 $\pm$ 58.1 & 545.9 $\pm$ 280.4 \\
    PSFGHMCTS & 284.1 $\pm$ 147.4 & 303.6 $\pm$ 175.4 & 212.2 $\pm$ 26.1 & 239.8 $\pm$ 75.2 \\
    MALATS & \textbf{61.3} $\pm$ 26.6 & \textbf{56.5} $\pm$ 11.2 & \textbf{100.6} $\pm$ 10.0 & \textbf{194.0} $\pm$ 76.9 \\
    FGMALATS & 220.0 $\pm$ 159.9 & 178.7 $\pm$ 112.3 & 139.7 $\pm$ 16.1 & 212.7 $\pm$ 65.9 \\
    SFGMALATS & 189.3 $\pm$ 135.3 & 229.4 $\pm$ 162.4 & 142.1 $\pm$ 19.5 & 198.4 $\pm$ 52.3 \\

    \bottomrule
  \end{tabular}
  
\scriptsize *As implemented in~\cite{xu2022langevin}.
\end{table*}
\textbf{Linear contextual bandits.} Table \ref{tab:linear-log-results} (first 3 columns) reports the final cumulative regret (mean $\pm$ sample standard deviation over 10 seeds) in the synthetic linear bandit setting. Across both dimensions $d \in \{20, 40\}$, MALATS consistently attains the lowest cumulative regret: $<62$ when $d = 20$ (vs 62.6 for unadjusted LMCTS, 73.0 for LinUCB when $\beta = 10^{3}$) and around 100.6 when $d = 40$ (vs 129.1 for LMCTS, 126.3 for LinUCB).  Adding a feel-good optimism bonus (FGMALATS and SFGMALATS) of $\lambda=0.5$ does not improve regret in these Gaussian-conjugate settings. However, for smaller values of $\lambda$, such as $\lambda=0.01$ in \cref{table: ablation_lambda}, the performance of SFGMALATS improves significantly to 56.2 $\pm$ 22.8, surpassing the other vanilla-TS variants. When $d = 40$, PHMCTS incurs $162.2 \pm 12.5$ and SVRGLMCTS incurs 19236.8 $\pm$ 15560.8, indicating that shifting the posterior due to preconditioning and variance-reduction distorts the quadratic curvature of the true posterior. \looseness=-1

\textbf{Logistic contextual bandits ($d=20$).}
Table~\ref{tab:linear-log-results} (fourth column) reports the final cumulative regret in the logistic bandit setting. Although the reward is Bernoulli with a logistic link, in moderate signal‐to‐noise regimes, the posterior over the linear parameters is very nearly Gaussian (the log‐likelihood is strongly convex and concentrates around its mode). The main benefit of HMC to explore `curved'\footnote{This refers to the fact that the parameter $\theta$ lies in a Euclidean manifold on $\R^d$, rather than an affine subspace. } or multimodal posteriors more effectively is not useful here, since we observe that simpler samplers like LMC and MALA closely match the closed-form Gaussian baselines. As expected, the exact‐Gaussian methods—LinUCB ($176.9 \pm 41.9$) and LinTS ($179.9 \pm 53.2$)—lead overall. Among the MCMC‐based samplers, FGLMCTS comes closest at $184.8 \pm 38.0$, effectively matching LinTS with a smaller variance, while MALATS performs worse at $194.0 \pm 76.9$. \looseness=-1

\textbf{Neural contextual bandits.} \Cref{tab:final_cum_regret_dataset} reports the final cumulative regret of a subset of our algorithms on several real-world problems, with complete results deferred to \cref{appendix: neural bandit experiments}. We additionally assess policy quality via simple regret in \cref{tab:simple_regret_data}. When the posterior is approximated by SFG-TS, we observe a partial reversal: \textsc{LMCTS} becomes competitive with, and often outperforms, its feel-good extensions on the majority of the datasets. The only consistent advantage of the feel-good bonus emerges on \textsc{MagicTelescope}, where \textsc{SFGLMCTS} reduces cumulative regret by~27 and achieves the lowest simple regret of $82.8 \pm 8.9$. Meanwhile, \textsc{Neural-$\epsilon$-Greedy} and \textsc{NeuralUCB} remain competitive, attaining the lowest regret on \textsc{Mushroom} and \textsc{Adult}, respectively. Overall, the greedy neural methods yield the lowest simple regret on three tasks. \looseness=-1

\subsection{Ablation Studies}
\textbf{Feel-Good Parameter.} Our comparison on the FG parameter $\lambda$ (ceteris paribus) in the linear setting is provided in \cref{table: ablation_lambda}, where $\smash{\beta =10^3, \ d=20}$, and we sweep over $\smash{\lambda\in\{0, 0.01, 0.1, 0.5, 1.0\}}$. Here, $\lambda=0$ corresponds to vanilla TS. We sweep over $\lambda$ and $\smash{\beta\in\{1,10^3\}}$ in the linear and logistic  setting in \cref{sec: appendix: further experimental results} for $d=\{20,40\}$, and discuss the results in \cref{sec:discussion}. \looseness=-1

\textbf{Preconditioning}. We additionally list preconditioning-incorporated algorithms in \cref{tab:linear-log-results}. We explicitly contrast the effect of preconditioning in \cref{sec:appendix-ablation-preconditioning} and discuss the results in \cref{sec:discussion}. \looseness=-1

\textbf{Inverse Temperature parameter.} We experimented with two initial choices of the inverse-temperature parameter $\smash{\beta\in\{1,10^{3}\}}$ in \cref{tab:linear-log-results}. Only the $\beta = 10^3$ initialized experiments follow a $\smash{\beta_t^{-1}\propto d\log T}$ schedule, as in \cite{xu2022langevin}. Following \cite{xu2022langevin}, we use this initialization for the remainder of the experiments (logistic and neural contextual bandits). \looseness=-1

\textbf{Underdampened Langevin Monte Carlo.} We studied the impact of adding a dampening coefficient $\gamma=0.1$ in the LMC suite of experiments, while fixing $\lambda=0.01, \beta=10^3$ in the feel-good regimes. Similarly, $\beta$ followed a $\beta_t^{-1} \propto d\log T$ schedule. We discuss our results in \cref{sec: wheel bandits}. \looseness=-1

\begin{center}
\begin{table*}[t]
  \centering
  \caption{Final cumulative regret after the full horizon (10k–15k steps, 5 seeds).}
  \label{tab:final_cum_regret_dataset}
  \vspace{0.5ex}
  \footnotesize
  \begin{tabular}{lccccc}
    \toprule
    Dataset & LMCTS & FGLMCTS & SFGLMCTS & Neural-$\epsilon$-Greedy & NeuralUCB \\
    \midrule
    Adult & 2456.6 $\pm$ 36.5 & 3505.0 $\pm$ 2257.5 & 4505.6 $\pm$ 2772.0 & 2658.0 $\pm$ 362.7 & \textbf{2444.4} $\pm$ 160.1 \\
Covertype & 7594.0 $\pm$ 892.0 & 7567.8 $\pm$ 454.5 & 8006.0 $\pm$ 1035.6 & \textbf{4629.4} $\pm$ 132.3 & 4798.4 $\pm$ 102.2 \\
Magic Telescope & 2220.0 $\pm$ 40.7 & 2197.6 $\pm$ 167.7 & 2193.2 $\pm$ 34.0 & \textbf{2005.2} $\pm$ 53.5 & 2112.2 $\pm$ 16.6 \\
Mushroom & 324.6 $\pm$ 102.6 & 283.2 $\pm$ 20.2 & 440.6 $\pm$ 89.5 & \textbf{124.0} $\pm$ 41.4 & 145.6 $\pm$ 25.2 \\
Shuttle & \textbf{210.2} $\pm$ 49.0 & 214.4 $\pm$ 51.6 & 1503.0 $\pm$ 2721.0 & 372.4 $\pm$ 425.8 & 2981.2 $\pm$ 4225.9 \\
MNIST\_784 & 2854.6 $\pm$ 2945.9 & \textbf{2542.6} $\pm$ 2366.2 & 2935.0 $\pm$ 3349.5 & 3248.0 $\pm$ 1709.0 & 5442.8 $\pm$ 356.2 \\

    \bottomrule
  \end{tabular}
\end{table*}
\end{center}

\begin{table}[t]
\centering
\caption{Simple regret statistics over the last 500 steps.}
\label{tab:simple_regret_data}
  \vspace{0.5ex}
  \footnotesize
\begin{tabular}{lccccc}
\toprule
Dataset & LMCTS     & FGLMCTS   & SFGLMCTS  & Neural-$\epsilon$-Greedy & NeuralUCB    \\
\midrule
Adult & 121.6 $\pm$ 5.20 & 176.6 $\pm$ 98.04 & 220.6 $\pm$ 124.55 & 117.8 $\pm$ 7.17 & \textbf{113.6} $\pm$ 6.83 \\
Covertype & 232.0 $\pm$ 31.58 & 222.8 $\pm$ 38.15 & 245.6 $\pm$ 38.69 & \textbf{112.2} $\pm$ 12.24 & 120.8 $\pm$ 9.41 \\
Magic Telescope & 88.6 $\pm$ 2.42 & 91.4 $\pm$ 3.44 & \textbf{82.8} $\pm$ 8.91 & 86.4 $\pm$ 5.46 & 92.4 $\pm$ 10.84 \\
Mushroom & 1.2 $\pm$ 1.30 & 1.2 $\pm$ 1.30 & 1.2 $\pm$ 1.30 & \textbf{0.0} $\pm$ 0.0&  1.8 $\pm$ 2.49 \\
Shuttle & 3.6 $\pm$ 1.02 & 3.0 $\pm$ 0.89 & 13.2 $\pm$ 18.08 & \textbf{2.8} $\pm$ 0.75 & 110.6 $\pm$ 194.65 \\
MNIST\_784 & 109.4 $\pm$ 163.1 & \textbf{94.0} $\pm$ 109.6 & 124.6 $\pm$ 168.5 & 108.6 $\pm$ 98.0 & 235.6 $\pm$ 21.9 \\
\bottomrule
\end{tabular}
\end{table}

\begin{figure*}[t]
    \centering
    \begin{subfigure}{0.329\textwidth}
        \centering
        \includegraphics[width=\linewidth]{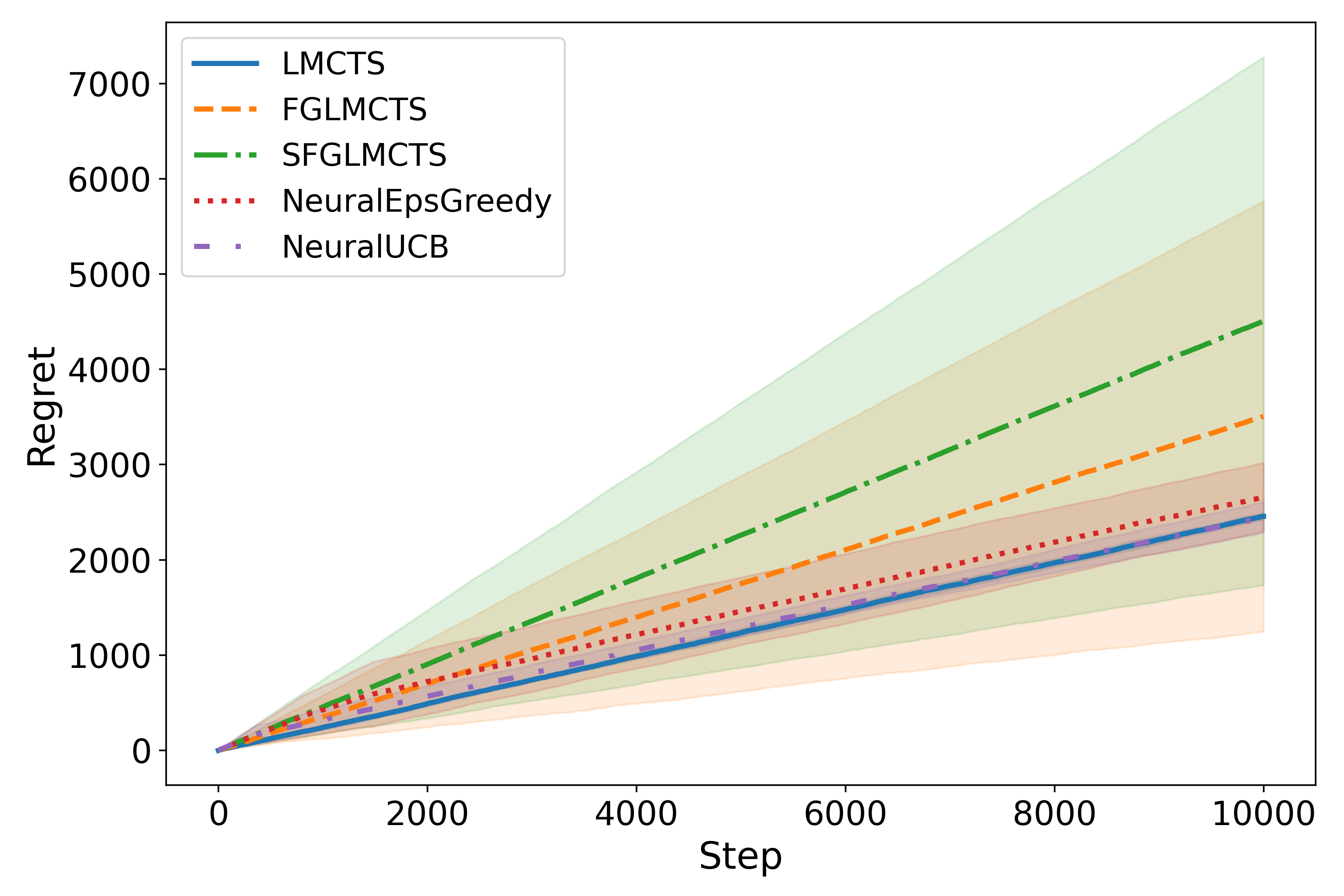}
        \caption{Adult}
        \label{fig:regret-adult}
    \end{subfigure}
    \begin{subfigure}{0.329\textwidth}
        \centering
        \includegraphics[width=\linewidth]{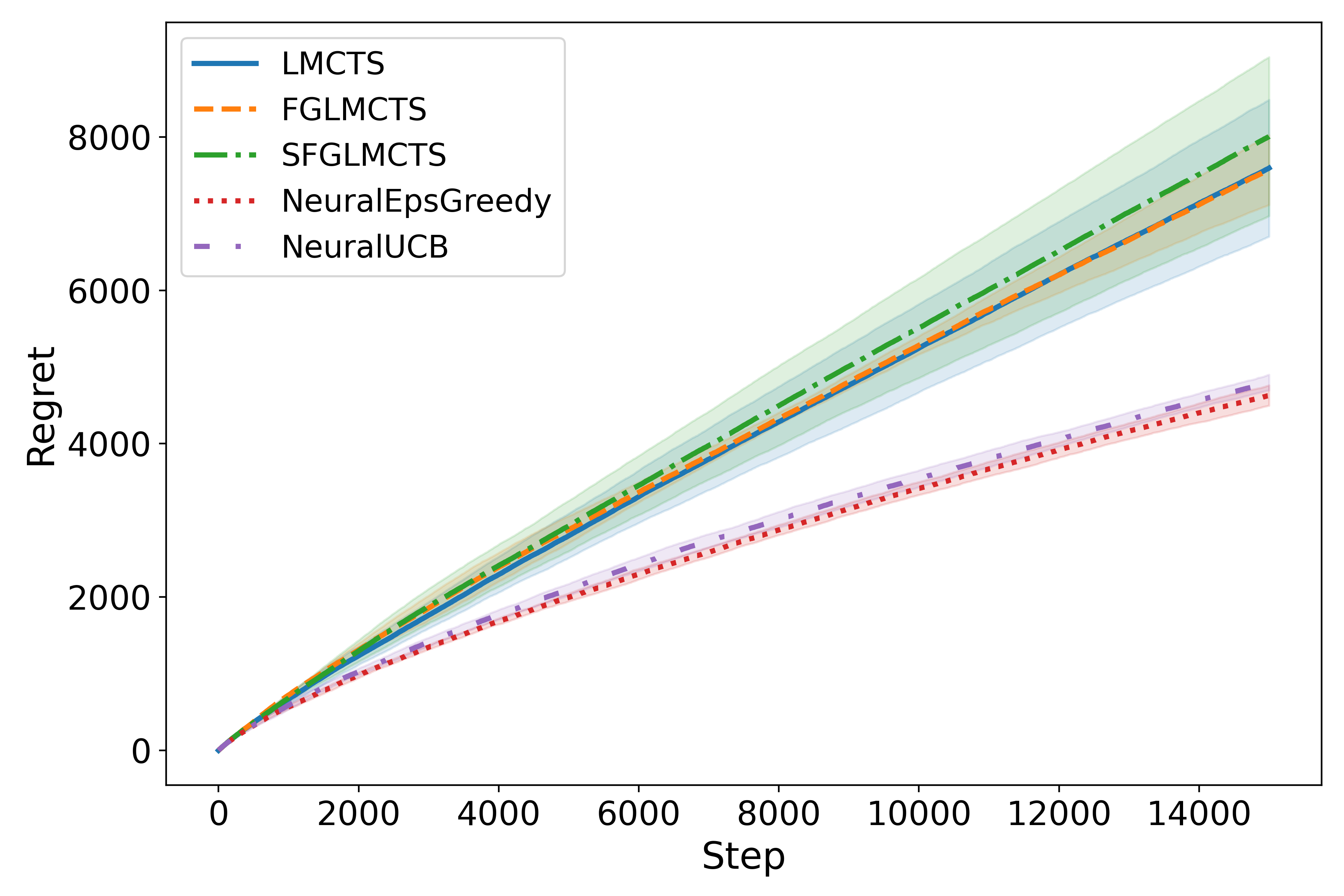}
        \caption{Covertype}
        \label{fig:regret-covtype}
    \end{subfigure}
    \begin{subfigure}{0.329\textwidth}
        \centering
        \includegraphics[width=\linewidth]{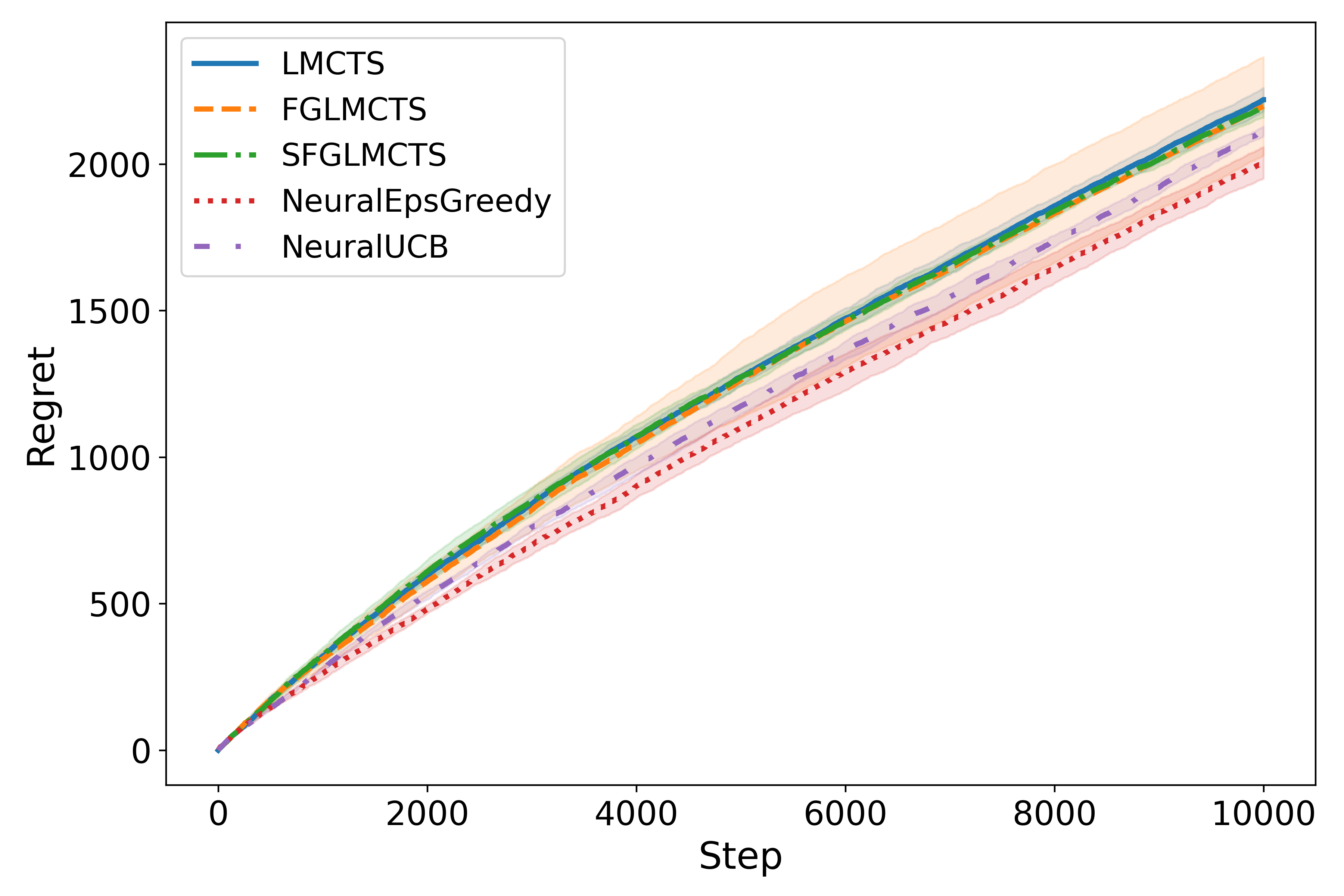}
        \caption{Magic Telescope}
        \label{fig:regret-magic}
    \end{subfigure}
    \begin{subfigure}{0.32\textwidth}
        \centering
ss        \includegraphics[width=\linewidth]{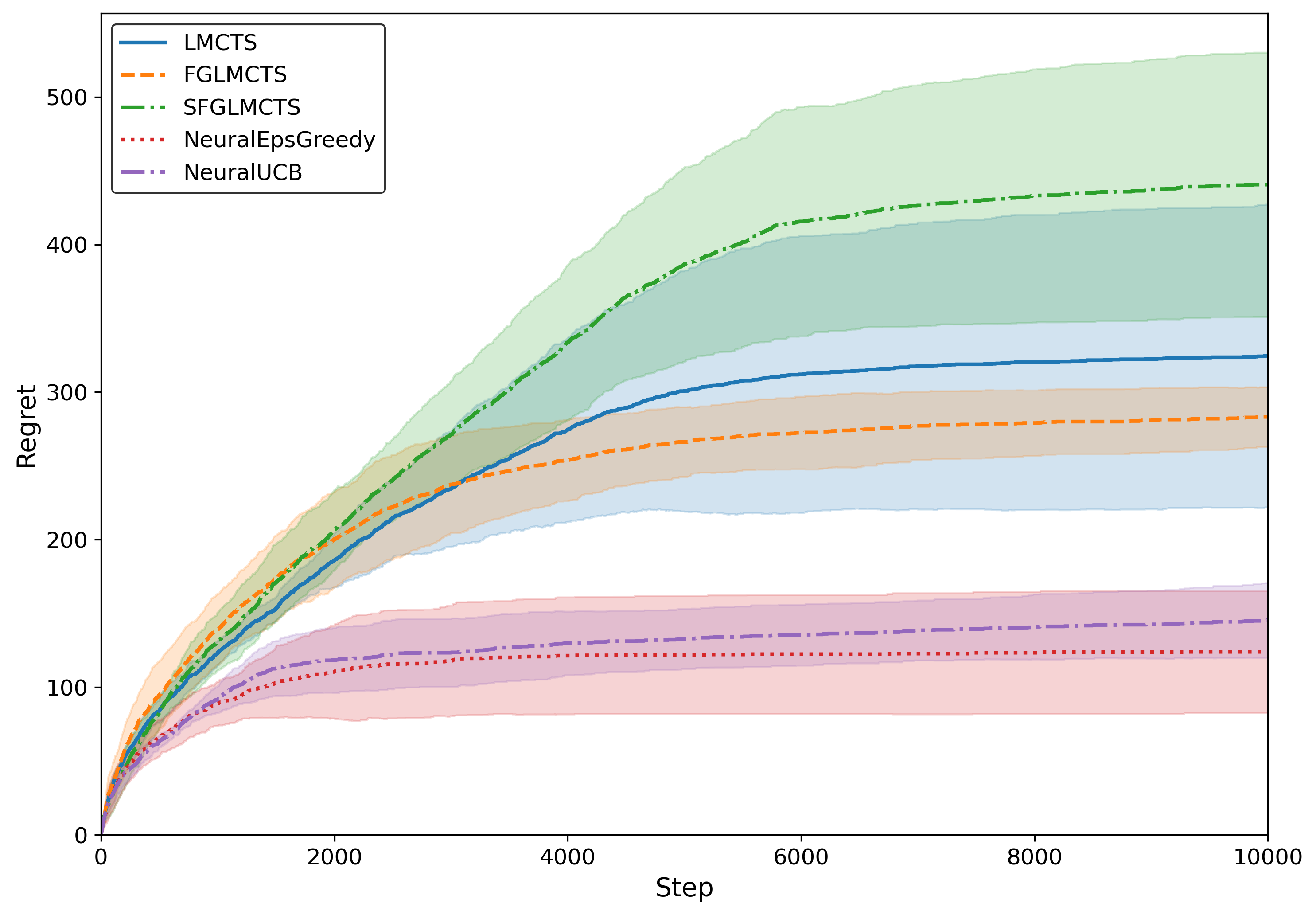}
        \caption{Mushroom}
        \label{fig:regret-mushroom}
    \end{subfigure}
    \begin{subfigure}{0.32\textwidth}
        \centering
        \includegraphics[width=\linewidth]{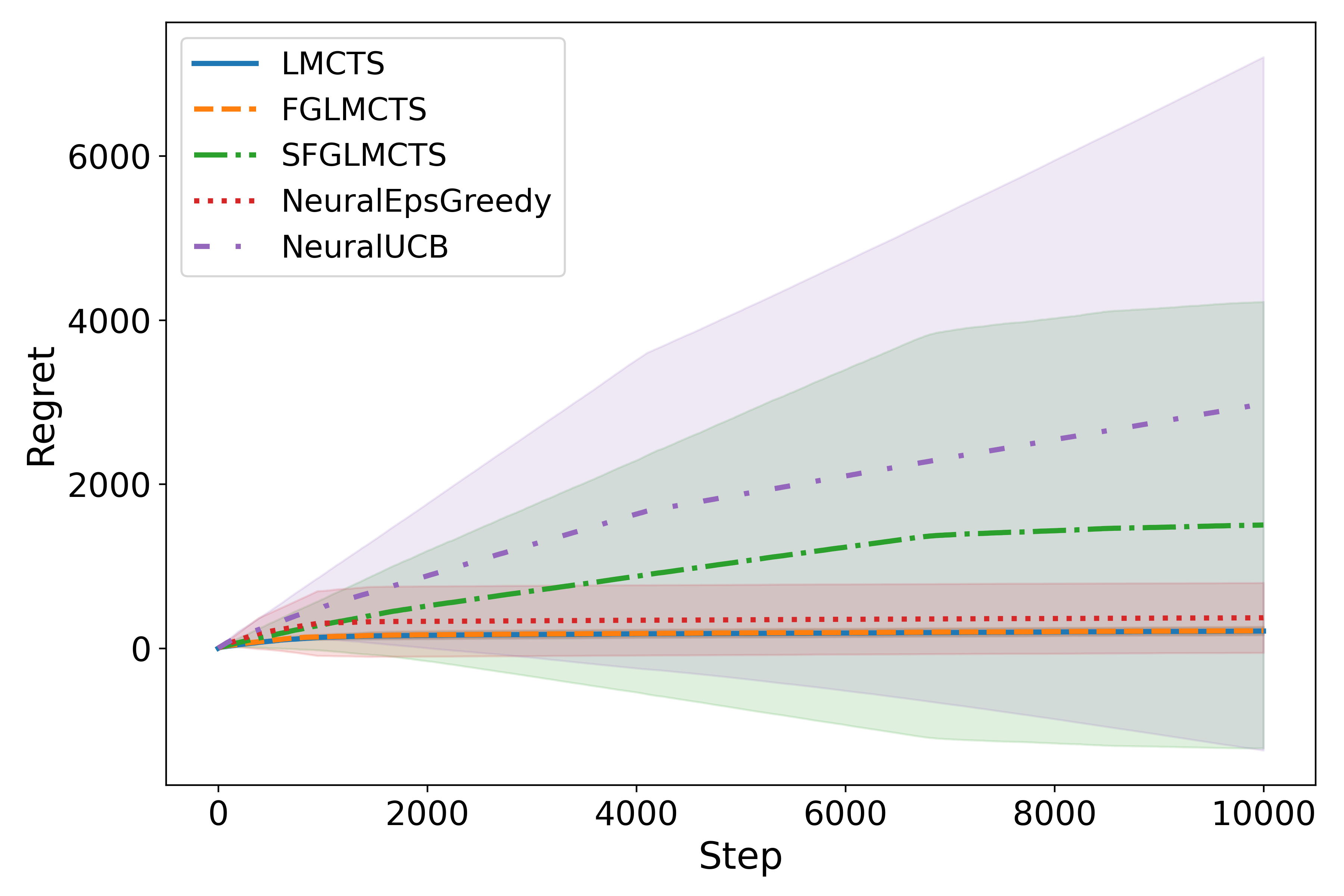}
        \caption{Shuttle}
        \label{fig:regret-shuttle}
    \end{subfigure}
    \begin{subfigure}{0.32\textwidth}
        \centering
        \includegraphics[width=\linewidth]{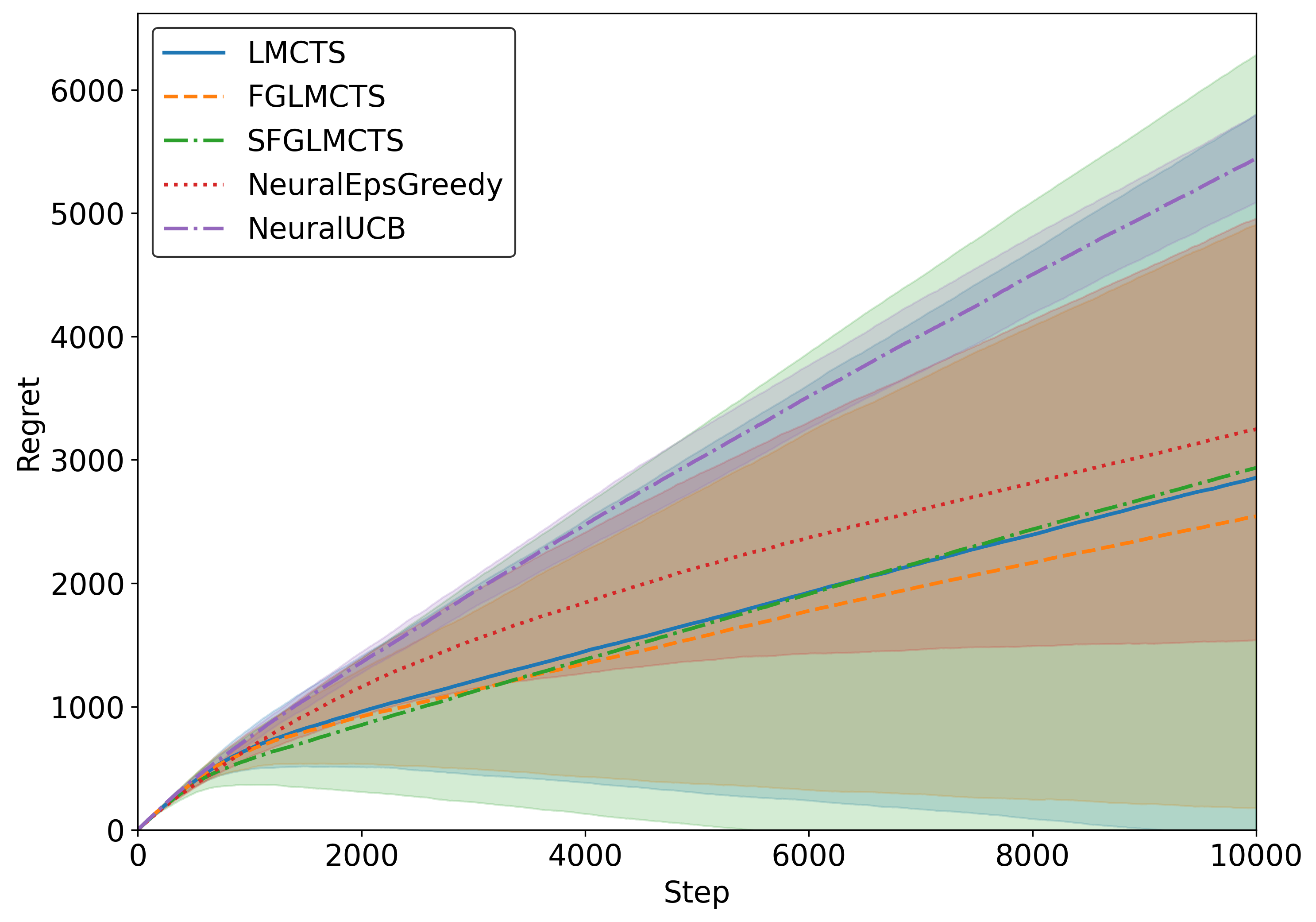}
        \caption{MNIST\_784}
        \label{fig:regret-mnist}
    \end{subfigure}
    \caption{Mean regret comparison on simulated bandit problems. The shaded band around each mean curve represents $\pm 1$ sample standard deviation across 5 independent runs.}
    \label{fig:regret-comparison}
\end{figure*}

\begin{table*}[t]
  \centering
  \caption{Final cumulative regret for synthetic datasets. Sweeps over feel-good parameter $\lambda$ at $\beta=10^{3}$.}
    \label{tab:linear-results}
  \vspace{0.5ex}
  \footnotesize
  \begin{tabular}{lccccc}
    \toprule
    Algorithm & $\lambda=0$ & $\lambda=0.01$ & $\lambda=0.1$ & $\lambda=0.5$ & $\lambda=1.0$  \\
    \midrule
    FGLMCTS     & 62.6 $\pm$ 9.5   & 62.7 $\pm$ 12.0 & \textbf{50.8} $\pm$ 15.8 & 235.8 $\pm$ 185.8 & 612.7 $\pm$ 340.4 \\
    FGMALATS    & \textbf{61.3} $\pm$ 26.6 & 62.7 $\pm$ 25.5 & 63.7 $\pm$ 14.5 & 212.8 $\pm$ 170.0 & 528.3 $\pm$ 377.2\\
    PFGLMCTS    & 134.4 $\pm$ 19.9 & 132.7 $\pm$ 19.1 & 140.3 $\pm$ 24.4 & 193.2 $\pm$ 47.2  & 332.9 $\pm$ 221.2\\
    PSFGLMCTS   & 134.4 $\pm$ 19.9 & 137.1 $\pm$ 27.5 & 130.5 $\pm$ 13.8 & 183.1 $\pm$ 37.3  & \textbf{286.1} $\pm$ 134.1\\
    SFGLMCTS    & 62.6 $\pm$ 9.5   & 65.2 $\pm$ 12.3 & 82.0 $\pm$ 32.6  & 223.8 $\pm$ 190.1 & 546.0 $\pm$ 357.6\\
    SFGMALATS   & \textbf{ 61.3} $\pm$ 26.6  & \textbf{56.2} $\pm$ 22.8 & 68.4 $\pm$ 26.2  & \textbf{173.4} $\pm$ 110.1 & 486.2 $\pm$ 347.1\\

    \bottomrule
  \end{tabular}
  \label{table: ablation_lambda}
\end{table*}
\vspace{-1.2cm}
\section{Discussion}
\vspace{-0.3cm}
\label{sec:discussion} Our experiments reveal a significant room for improvement in applying MCMC algorithms to the contextual bandit problem. For instance, as \cite{riquelme2018deep} mentions, the difficulty of online uncertainty estimation, which does not appear in supervised learning, is that the model has to be frequently updates as data is accumulated in an online fashion. Therefore, methods that converge slowly pose a natural disadvantage as they present a natural trade-off through truncating the degree of optimization in order to make the algorithm tractable. Moreover, combining methods that approximate posteriors leads to a worse posterior approximation, causing rapid performance degeneration. \looseness=-1

We discuss our main findings for each class of algorithms. \looseness=-1

\textbf{Neural-$\epsilon$-Greedy/NeuralUCB.} Empirically, Neural-$\epsilon$-Greedy attains the lowest simple regret on 4 tasks, while NeuralUCB is competitive, as seen in Tables~\ref{tab:final_cum_regret_dataset} and \ref{tab:simple_regret_data}. This is likely due to the mini-batch SGD and dropout already injecting substantial noise; adding a small, schedule-controlled $\epsilon$-greedy component yields enough exploration. On the other hand, NeuralUCB linearizes only the last layer and keeps a closed-form ridge covariance, resulting in UCB bonuses that are far less sensitive to posterior misspecification than full network-Langevin samplers~\citep{pmlr-v119-zhou20a}. \looseness=-1

\textbf{LinUCB.} LinUCB is a strong baseline in low dimensions as it achieves the minimax-optimal regret and has good performance in linear settings. However, LinUCB requires computing a matrix inverse at each step, which can be expensive as it takes $O(d^3)$ worst-case time using Cholesky decompositions.  \looseness=-1

\textbf{PHMCTS.} Adding a preconditioner to HMCTS reduces the cumulative regret from $240 \pm 35.4$ to $90 \pm 16.2$ in the linear setting. The gain confirms the sensitivity of HMC to ill-conditioned posteriors: scaling by an estimate of $\mV_t^{-1/2}$ reduces the posterior's anisotropy, allowing the leapfrog integrator to take larger, more isotropic steps~\citep{girolami2011rmhmc}.  \looseness=-1

\textbf{MALATS.} Our linear bandit experiments (with $d=\{20,40\}$) show that MALATS achieve best cumulative regret performance. The gains from MALATS-based methods can be attributed to the discretization correction and faster mixing in the posterior sampler. Unadjusted Langevin dynamics uses Euler discretization of the posterior's Langevin diffusion; while it is fast, it does not converge to the true posterior as an exact stationary distribution, and incurs a bias on the order of the step size. The bias implies that unadjusted Langevin dynamics may sample from a shifted distribution if the step is not infinitesimally small. MALATS addresses this by applying a Metropolis-Hastings correction at each step to eliminate discretization error, enabling sampling from the true posterior. \looseness=-1

\textbf{FG in Neural Contextual Bandits.} Despite its stronger theoretical guarantees in the linear setting, FG-TS degrades rapidly in neural contextual bandits, where vanilla-TS remains \emph{more} competitive. Stochastic gradients already inject substantial ``free'' exploration, and the FG bonus can push the sampler further from high-density regions. Neural posteriors are very non-Gaussian: mode-connectivity results show that SGD minima are linked by low-loss paths, forming a connected manifold rather than isolated valleys \citep{garipov2018losssurfacesmodeconnectivity,draxler2019essentiallybarriersneuralnetwork}. Such topologies defies Gaussian or mixture assumptions of well-separated modes. Moreover, the quadratic surrogates underlying FG assume symmetric curvature, while~\cite{izmailov2019averagingweightsleadswider} show that SGD converges near the edge of wide, anisotropic basins with asymmetric directions, making quadratic fits reliable only locally. Consequently, FG-induced optimism amplifies model mismatch and yields sub-optimal arm choices. Vanilla LMCTS thus remains a strong baseline. In our experiments, moderate exploration, whether through SGD noise or $\epsilon$-greedy perturbations, outperforms the optimism of FG-TS. From a different perspective, our observations that FG-TS underperforms in the neural setting is the empirical consequence of applying an algorithm based on quadratic assumptions to this complex, non-quadratic setting. \looseness=-1

\textbf{Preconditioning.} Our preconditioned samplers perform Langevin updates in a whitened parameter space injecting Gaussian noise with covariance proportional to the inverse design matrix $\mV_t^{-1}$. When $\mV_t$ is ill-conditioned, as it typically is in high dimensions, some directions receive huge stochastic kicks while others hardly move, leading to erratic exploration and exploding variance, as visible in SVRGLMCTS's large regret. These imply a necessity to regularize or clip the spectrum of $\mV_t^{-1}$ to keep noise injection balanced across directions, and to also include a Metropolis filter when using a mass-matrix preconditioner. Finally, our results on PSFGHMCTS indicate that it has a higher standard deviation than mean. This skew serves as an argument for incorporating additional metrics (such as the median) for studying the aggregate performance of MCMC contextual bandit algorithms. \looseness=-1

\textbf{Stochastic Gradient Mini-batch.} Mini-batch algorithms such as SVRGLMCTS balance computational savings against the extra gradient noise they introduce: smaller batches reduce per-round cost, yet the heightened noise perturbs Langevin proposals, lowers Metropolis–Hastings acceptance rates, and inflates posterior variance. When $d=20$, all SVRG methods remain competitive---SVRGLMCTS even matches LinUCB at $73\pm31$---showing that the control-variate suppresses noise for small $d$. However, at $d=40$, the snapshot gradient used in the control-variate grows stale: the variance of the correction term scales as $O(d)$, acceptance probabilities shrink, and the mean-regret of SVRGLMCTS degenerates to  $19 236 \pm 15 560$, leading to highly unstable actions. \looseness=-1

\textbf{Smoothening the FGTS Objective.} Once the linear posterior has already concentrated around its true mean, injecting additional “feel-good” optimism introduces bias that the SFG objective can reduce. Our experiments show that the smoothed FG objective is helpful only in regimes where the posterior is still under-concentrated—for example, at very small horizons $T$ or under high observation noise, where the additional optimism can accelerate exploration. In well-concentrated regimes, FGMALATS with $\lambda=0.01$ (without smoothening the FGTS bonus) becomes a more robust choice. \looseness=-1

\textbf{Damping.} \Cref{ulmc-variants} reports LMC results with varying damping in the linear contextual bandit setting. Overdamped (vanilla) Langevin methods consistently outperform their underdamped counterparts, with the performance gap shrinking monotonically as the damping coefficient $\gamma$ increases. Although underdamped LMC enjoys stronger theoretical guarantees, its empirical performance deteriorates rapidly, which is likely due to posterior mismatch effects that are amplified at lower damping. \looseness=-1

\textbf{Run-to-run sensitivity.} The variability in our experiments arises due to reward stochasticity, noise in mini-batch gradients within the MCMC samplers, and errors in posterior approximation with finitely many MCMC steps. Unfortunately, the contextual bandit feedback loop may amplify these errors, which is an important consideration for practitioners in the community. We reveal this sensitivity as a data point for the community to note that while MCMC methods are powerful, their application in online learning requires careful management of approximation quality for stability. Such variability in vanilla TS has already been noted \citep{mazumdar2020thompson,pmlr-v35-guha14}; however, to our knowledge, our paper is the first to systematically document and analyze this for FGTS.\looseness=-1

\section{Conclusion} 
We present the first systematic evaluation of FGTS and its smoothed variants under exact and approximate MCMC-derived posterior regimes. Our ablations analyze the effects of preconditioning, feel-good bonuses, and temperature scaling. Across 14 real-world and synthetic datasets, we find that high-fidelity posteriors combined with FGTS's optimism reduce regret in linear and logistic settings, whereas neural bandits underperform compared to vanilla TS. We also identify cases where larger bonuses cause severe performance degradation under approximate posteriors. 

\textbf{Limitations and Future Work.} Our experiments reflect the sensitivity of the samplers to learning-rate schedules, mini-batch noise, and other hyperparameters. While we used a modest grid search for tuning, our study was not exhaustive. Recent work~\citep{luo2025geometry} shows that geometry-aware variants of TS can achieve minimax-optimal regret in linear bandits. Benchmarking such algorithms, along withs noise-adaptive TS variants~\citep{NEURIPS2023_4a6824f8} and MCMC-based methods on generalized linear reward models~\citep{10.5555/3305890.3}, would be a promising direction for future work.

\textbf{Societal Impacts.} This work is empirical in nature. While it enables better understanding of the behavior of bandit algorithms in numerous synthetic and real-world benchmarks, it is not tied to any specific applications or deployments. \looseness=-1

\section{Acknowledgements} This work was supported by NSF Grant CCF 2338816. We express our thanks to Eric Mazumdar, Jan van den Brand, Jake Abernethy, Haque Ishfaq, Yunbum Kook, Mirabel Reid, and Kabeer Thockchom for insightful discussions. \looseness=-1

\bibliography{references}
\bibliographystyle{plainnat}

\newpage

\appendix
\newpage

\paragraph{Outline of the Appendices.}
\begin{itemize}
    \item \cref{sec: appendix: math background} has mathematical background and additional remarks.
    \item \cref{sec: appendix: lit review} provides a more comprehensive review of existing Markov Chain Monte Carlo (MCMC) methods.
    \item \cref{sec: appendix: algorithm_details} gives theoretical guarantees for preconditioned Langevin Monte Carlo for Thompson Sampling (LMCTS), Hamiltonian Monte Carlo for Thompson Sampling (HMCTS), and their feel-good (FG) variants.
    \item \cref{sec: appendix: alg descriptions} has descriptions of algorithms and hyperparameter settings.
    \item \cref{sec: appendix: further experimental results} has further experimental details. 
    \item \cref{sec: appendix: real_world_datasets} has details on the real-world tasks (five UCI datasets and \textsc{MNIST\_784}). 

\end{itemize}

\section{Mathematical Background and Additional Remarks}
\label{sec: appendix: math background}

\textbf{Notation.}  Let $\mathbb{Z}_{+}$ denote the set of strictly positive integers, and
    $\mathbb{R}^d$ denote the set of $d$-dimensional reals. We use $[k]$ to denote a set $\{1,\dots,k\},k\in\N^+$. Let $\|\mathbf{x}\|_2 = \sqrt{\mathbf{x}^\top\mathbf{x}}$ be the Euclidean norm of a vector $\mathbf{x}\in\R^d$. For a matrix $\mV\in\R^{m\times n}$, we denote by $\|\mV\|_2$ and $\|\mV\|_F$ its operator and Frobenius norm respectively. For a positive semi-definite (PSD) matrix $\mV\in\R^{d\times d}$ and a vector $\mathbf{x}\in\R^d$, denote the Mahalanobis (local) norm as $\|\mathbf{x}\|_\mathbf{V} = \sqrt{\mathbf{x}^\top \mV\mathbf{x}}$. 
For a function $f(T)$, we use the common big-O notation $O(f(T))$ to hide constant factors with respect to $T$, and use $\tilde{O}(f(T))$ to omit the logarithmic dependence on $T$. Finally, we summarize the notation specific to contextual bandits in \cref{table:notations}.

\begin{table}[hbt!]
  \centering
  \caption{Important notations in this paper.}\label{table:notations}
  \begin{tabular}{c|l}
  \hline
\textbf{Notation} & \textbf{Meaning} \\
 \hline
    $T$ & $T\in\mathbb{N}$ is the number of rounds in the game;\\
    $\cX_t$ & $\cX_t\subseteq \R^d$ is the action set (or context) of the agent;\\
    $x_t$ & $x_t \in \cX_t$ is the feature vector that represents an arm or action;\\
    $\theta^*$ & $\theta^* \in \R^m$ is an unknown weight parameter shared across all arms;\\
    $r_t$ & $r_t\coloneqq r_t(x)$ is the feedback reward generated and returned to the algorithm;\\
    $f_{\theta}(x)$ & $f_{\theta}(x)$ is the reward generating function: the observed reward is $r_t(x) = f_{\theta^*}(x) + \xi_t$;\\
    $\cL^{\text{TS}}$ & $\cL^{\text{TS}}$ is the Thompson Sampling loss-likelihood given by $\cL_t(\theta)=\sum_{s=1}^{t-1}\eta(f_\theta(x_s)-r_s)^2$;\\
    $\cL^{\text{FG}}$ & $\cL^{\text{FG}}$ is the Feel-Good Thompson Sampling loss likelihood given in \cref{equation: FG-TS loss}; \\
    $\cL^{\text{SFG}}$ & $\cL^{\text{SFG}}$ is the smoothed Feel-Good Thompson Sampling loss likelihood given in \cref{equation: SFG-TS loss} \\
    $\lambda$ & $\lambda>0$ is the Feel-Good parameter in FGTS; \\
    \hline
  \end{tabular}
\end{table}

\section{Relevant Literature}
\label{sec: appendix: lit review}

The field of contextual bandits has a rich and extensive history, encompassing various theoretical and algorithmic advancements. Providing an exhaustive review of the entire domain is impractical here; thus, we refer interested readers to comprehensive surveys such as \cite{lattimore2020bandit,bubeck2009pure}. Below we focus on work most relevant to Thompson sampling, Markov chain Monte Carlo (MCMC) methods, and associated variants.

\textbf{Regret bounds in Linear Contextual Bandits.} Linear contextual bandits have been extensively studied, particularly in relation to regret bounds achieved by policies based on the Upper Confidence Bound (UCB) principle, where frequentist regret bounds of $\tilde{O}(d\sqrt{T})$ are attainable \citep{pmlr-v119-zhou20a}. Thompson Sampling (TS), which involves sampling from the posterior distribution of the reward function and selecting actions accordingly, demonstrates strong empirical performance and applies naturally to both Bayesian and frequentist frameworks.  \cite{russo2016information} showed that the best-known Bayesian regret bound for TS matches the minimax optimal $\tilde{O}(d\sqrt{T})$. However, the best frequentist regret bound proven to date for TS is $O(d\sqrt{dT})$  \citep{agrawal2013thompson} due primarily to inflated posterior variance of order $\tilde{O}(d)$ Significant contributions to this area leverage concentration inequalities for self-normalized martingales, as seen in \cite{bercu2019newinsightsconcentrationinequalities,abbasi2011improved,agrawal2013thompson,bietti2021contextual,agarwal2021deep,pmlr-v32-agarwalb14}. There is also a body of work that shows that in cases where structural assumptions are made on the linear contextual bandit problem, there are algorithms that achieve the optimal regret bound; for instance, \cite{chu2011contextual} studies the case with linear payoff functions and \cite{reid2025online} studies the case with budget constraints. Both works derive the optimal $O(d\sqrt{T})$ regret bounds. 

\textbf{Preconditioning Methods.} Recent works in iterative optimization consider preconditioning as a technique to enhance algorithmic performance in contextual bandits and reinforcement learning. In such methods, the Richardson iteration of $x_{t+1} = x_t - \eta \nabla L(x_t)$ is replaced by $x_{t+1} = x_t - \eta P^{-1} \nabla L(x_t)$, where $P$ is a preconditioning matrix which satisfies the property that $P^{-1}\nabla L(\cdot)$ has condition number smaller than $\nabla L(\cdot)$, which in turn accelerates convergence. \cite{terekhov2025contextual} investigates contextual bandits utilizing pretrained neural networks, leveraging preconditioning to improve exploration and convergence. Similarly,  \cite{millard2025pearl} explores matrix preconditioning strategies within actor-critic methods in reinforcement learning, demonstrating improved stability and performance. Techniques from Gaussian Process UCB \citep{srinivas2009gaussian} and deep conditioning on overparameterized neural networks \citep{agarwal2021deep} have also been influential. \cite{staib2019escaping} provides critical insights into escaping saddle points using adaptive gradient methods such as RMSprop, which have implications for optimization in preconditioned gradient descent. Further advancements include preconditioned stochastic gradient descent  \citep{Li_2018}, preconditioned stochastic gradient Langevin dynamics (SGLD) \citep{li2016preconditioned}, and contextual bandit learning with predictable rewards \citep{agarwal2012contextualbanditlearningpredictable}.

\textbf{Beyond Standard Linear Settings.} Expanding beyond linear setting, substantial progress has been made in generalized linear bandits \citep{filippi2010parametric}, where the true reward $r$ from arm $x\in \cX_t$ at round $t$ is assumed to be from a generalized linear model (GLM). In a GLM, conditional on feature vector $x$, $r$ follows an exponential family distribution with mean $\mu(x^\top \theta^*)$ where $\theta^*$ is an unknown weight parameter shared across all arms and $\mu(\cdot)$ is called the link function. \cite{sivakumar2020structured} provides a geometric perspective on contextual bandits, introducing smoothness through Gaussian perturbations in adversarial contexts. Further, adaptive noise strategies for Thompson sampling have been analyzed rigorously, resulting in refined regret bounds \citep{ishfaq2024more,IshfaqL0MPAA24,NEURIPS2023_4a6824f8}.

\textbf{Stochastic Sampling Methods.} Stochastic sampling algorithms have significantly advanced the computational efficiency and effectiveness of contextual bandit strategies.  \cite{ahn2012bayesian} integrated Fisher information matrices to create efficient optimizers during burn-in phases.  \cite{patterson2013stochastic} developed stochastic gradient Riemannian Langevin dynamics for exploring distributions over the probability simplex, particularly in the context of Latent Dirichlet Allocation where one seeks to uncover hidden thematic structures in large corpora. Similarly, stochastic gradient Hamiltonian monte carlo (SGHMC) methods \citep{chen2014stochastic,anand2024efficientreinforcementlearningglobal} have become popular tools for managing uncertainty and improving sampling efficiency in complex spaces.

\section{Algorithm Details}
\label{sec: appendix: algorithm_details}
This section provides details on the MCMC algorithms that we evaluate for feel-good Thompson sampling as well as Thompson sampling in various contextual bandit tasks. We present sample complexities, give convergence guarantees of the MCMC sampling algorithms, and prove tight regret bounds of each variant that we consider.

\begin{table}[hbt!]
  \caption{Complexity and Regret Analysis for Linear contextual bandits (LCB)}
  \label{sample-table}
  \centering
  \begin{tabular}{lll}
    \toprule
    \multicolumn{2}{c}{Part}         \\
    \cmidrule(r){1-2}
    Algorithm & Number of LCB Iterations & LCB Regret \\
    \midrule
    UCB & $\tilde{O}(T\log T \sqrt{dT})$ & $\tilde{O}(d\sqrt{T})$.\\
    LMCTS & $\tilde{O}(\kappa T \log \sqrt{dT \log T^3}/\epsilon)$ & $\tilde{O}(d^{3/2}\sqrt{T})$ \\
    Under/Over damped LMC-TS & $O(\kappa T d/\epsilon^2)$ & $\tilde{O}(d^{3/2}\sqrt{T})$  \\
    FG-TS & $\tilde{O}(\kappa dT^4)$ & $O(d\sqrt{T})$.\\ 
    FGLMCTS & $\tilde{O}(dT^4)$ & $O(d\sqrt{T})$  \\
    FGMALATS & $\tilde{O}(dT^2)$ & $O(d\sqrt{T})$  \\
    {Preconditioned SFGLMCTS} & $\tilde{O}(T\log \sqrt{dT})$ & $O(d\sqrt{T})$  \\
    {SFGHMCTS} & $\tilde{O}(\sqrt{k}d^{1/4}T)$ & $O(d\sqrt{T})$ \\
    {Preconditioned FGHMCTS} & $\tilde{O}(d^{1/4}T)$ & $O(d\sqrt{T})$ \\
    \bottomrule
  \end{tabular}
\end{table}

\subsection{Algorithm Implementation Details}
\textbf{Cholesky Factorization and Sherman-Woodbury-Morrison.} In a number of the algorithms we consider, it is often imperative to dynamically maintain the inverse matrix $\mV_t^{-1/2}$. This can be done in $O(d^3)$ by using Sherman-Woodbury-Morrison to dynamically maintain a Cholesky factorization, and has been shown to be numerically stable in finite bit precision \citep{anand2024bit,anand2025structural}.

\textbf{Preconditioned FGLMCTS}. The preconditioned SFGLMCTS update is given by $\smash{\theta_{t,k+1}  = \theta_{t,k} - \eta\mV_t^{-1/2}\nabla \cL_t(\theta_{t,k-1}) + }\sqrt{2\eta_t\beta_t^{-1}}\mV_t^{-1/2} \mathbf{\epsilon}_{t,k},$ where $\cL_t$ is the smoothed feel-good likelihood, and $\epsilon_{t,k}\sim \cN(\mathbf{0},\mI)$ is an isotropic Gaussian vector. Here, we have preconditioned the discretized Langevin update using the matrix $\mV_t$.

\textbf{Stochastic gradient update.} This can be extended to a stochastic gradient algorithm (SG-PFGLMCTS) by letting $L_t(\theta) = \sum_{s=1}^{t-1} \ell(r_s, f(x_s, \theta))+\lambda R\|\theta\|^2$, and approximating the gradient by $
\nabla \widehat{L}_t(\theta) = 
\sum_{s \in B_t} \nabla_\theta \ell(r_s, f(x_s, \theta)) + \lambda \nabla_\theta R(\theta)$. Further, the variance can be reduced by a stochastic variance-reduced gradient algorithm (SVRGLMCTS) which we consider in our experiments.

\textbf{Preconditioned Stochastic variance reduced Gradient LMC Thompson Sampling.}
Suppose there are $n$ component functions $\{f_i(\cdot)\}_{i=1}^n$ and the objective is $F(\theta) = \frac{1}{n} \sum_{i=1}^n f_i(\theta)$. Let $\theta_{\text{ref}} \in \mathbb{R}^d$ be a ``snapshot'' parameter vector, which is updated once every \emph{outer loop}. At each iteration $k$ of an \emph{inner loop}, we form a mini-batch $I_k \subset \{1, \dots, n\}$ such that $|I_k|=B$ and compute the SVRG gradient estimate as follows:

\[
\nabla \widehat{F}_k(\theta) = 
\underbrace{\frac{1}{B} \sum_{i \in I_k} \left[ \nabla f_i(\theta) - \nabla f_i(\theta_{\text{ref}}) \right]
}_{\text{variance-reduction correction}}+\underbrace{
\nabla F(\theta_{\text{ref}})
}_{\text{snapshot full-gradient}}.
\]

This estimate $\nabla  \widehat{F}_k(\theta)$ unbiasedly approximates $\nabla F(\theta)$ but with reduced variance compared to vanilla stochastic-gradient estimates, especially once $\theta$ is close to $\theta_{\text{ref}}$.

\subsection{Proof of Convergence of Preconditioned SFGLMCTS}

The preconditioned SFGLMCTS update is given by
\begin{equation}
    \label{LMCSFTS_eqn} \theta_{t,k+1}  = \theta_{t,k} - \eta\mV_t^{-1/2}\nabla \cL_t(\theta_{t,k-1}) + \sqrt{2\eta_t\beta_t^{-1}}\mV_t^{-1/2} \mathbf{\epsilon}_{t,k},
\end{equation}
where $\cL_t$ is the smoothed feel-good likelihood, and $\epsilon_{t,k} \sim \cN(\mathbf{0},\mI)$ is an isotropic Gaussian vector. Note that this is identical to the preconditioned LMCTS update by setting the feel-good parameter $\lambda$ in $\cL_t$ to $0$. The $\theta$-update in \cref{LMCSFTS_eqn} can be viewed as the Euler-Maruyama discretization of the preconditioned Langevin dynamics  from physics given by the stochastic differential equation
\begin{equation}\mathrm d\theta(s) = -\mV_t^{-1} \nabla L_t(\mathbf{\theta}(s))\mathrm ds + \sqrt{2\beta_t^{-1}\mV_t^{-1}}\mathrm dB(s),\label{equation: preconditioned_lmc_sde}\end{equation}
where $s>0$ is a time index, $\beta>0$ is the inverse temperature parameter, and $B(t)\in\R^d$ is the Brownian motion. The SDE in \cref{equation: preconditioned_lmc_sde} is a special instance of the Riemannian Langevin Dynamics presented in \cite{girolami2011rmhmc,patterson2013stochastic} which coincides with Langevin Dynamics on a flat manifold where $V_t$ is the metric tensor matrix. Under suitable conditions on $\mV$ and $L$, this dynamics converges to a unique stationary point given by $\smash{\pi(\mathrm dx) \propto e^{-\beta L(x)}\mathrm dx}$ (\cref{lemma: precondition_langevin_dynamics_converge}). So, the discretization approximates sampling from an arbitrary distribution $\pi_t\propto \exp(-\beta_t L_t(\mathbf{\theta}))$.  \cref{lemma: precondition_langevin_dynamics_converge} shows that LMC converges to the Feel-Good Thompson Sampling likelihood of $\cL_t(\theta)$. Using standard techniques, the proof of \cref{lemma: precondition_langevin_dynamics_converge} further extends to variants of LMC with stochastic gradients \citep{patterson2013stochastic,jose2024thompson,welling2011bayesian}, dampeners \citep{zheng2024accelerating}, and preconditioners \citep{li2016preconditioned,titsias2023optimal,bhattacharya2023fast}. 

\begin{lemma}[Convergence of Preconditioned Langevin Dynamics] If $\mV_t \in \R^{d\times d}$ is an invertible symmetric matrix and $\cL:\R^d\to\R$ is a smooth function, the preconditioned FGLMCTS dynamics corresponding to the stochastic differential equation given by \[d\theta(s) = -\mV_t^{-1}\nabla \cL_t(\theta(s)) \mathrm ds + \sqrt{2(\mV_t \beta_t)^{-1}}\mathrm d\mathbf{B}(s)\] converges to a unique stationary distribution $\pi(\mathrm d\theta)\propto e^{-\beta_t \cL_t(\theta)}\mathrm d\theta $.
    \label{lemma: precondition_langevin_dynamics_converge}
\end{lemma}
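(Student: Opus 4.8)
The plan is to verify that $\pi\propto e^{-\beta_t\cL_t}$ is stationary for the preconditioned SDE by exhibiting a vanishing probability flux in the associated Fokker--Planck equation, and then to obtain uniqueness of the stationary law and convergence to it by reducing the preconditioned dynamics to the standard overdamped Langevin diffusion via a linear change of variables.

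First I would write down the generator of the diffusion. For $\mathrm d\theta(s) = -\mV_t^{-1}\nabla\cL_t(\theta(s))\,\mathrm ds + \sqrt{2(\mV_t\beta_t)^{-1}}\,\mathrm d\mathbf B(s)$ the infinitesimal generator on test functions is $\mathcal A f = -\langle \mV_t^{-1}\nabla\cL_t, \nabla f\rangle + \beta_t^{-1}\tr(\mV_t^{-1}\nabla^2 f)$, so the Fokker--Planck operator is $\mathcal A^\ast\rho = \nabla\cdot\big(\mV_t^{-1}\nabla\cL_t\,\rho + \beta_t^{-1}\mV_t^{-1}\nabla\rho\big)$. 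Here we use that $\mV_t$ is symmetric positive definite (the relevant case, since $\mV_t = \lambda\mI + \sum_s x_s x_s^\top \succ 0$; this is also what makes $(\mV_t\beta_t)^{-1}$ admit a symmetric square root). Substituting $\pi\propto e^{-\beta_t\cL_t}$ gives $\nabla\pi = -\beta_t(\nabla\cL_t)\pi$, hence $\mV_t^{-1}\nabla\cL_t\,\pi + \beta_t^{-1}\mV_t^{-1}\nabla\pi = 0$: the probability current vanishes identically, so $\mathcal A^\ast\pi = 0$, and in fact the diffusion is reversible with respect to $\pi$ (detailed balance holds).

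For uniqueness and convergence I would pass to the whitened coordinate $\phi = \mV_t^{1/2}\theta$. By It\^o's formula, $\mathrm d\phi(s) = -\nabla_\phi\tilde\cL_t(\phi(s))\,\mathrm ds + \sqrt{2\beta_t^{-1}}\,\mathrm d\mathbf B(s)$ with $\tilde\cL_t(\phi)\coloneqq\cL_t(\mV_t^{-1/2}\phi)$, since $\mV_t^{1/2}\cdot\mV_t^{-1}\nabla\cL_t(\theta) = \mV_t^{-1/2}\nabla\cL_t(\mV_t^{-1/2}\phi) = \nabla_\phi\tilde\cL_t(\phi)$ and $\mV_t^{1/2}\sqrt{2\beta_t^{-1}\mV_t^{-1}} = \sqrt{2\beta_t^{-1}}\,\mI$. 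This is exactly the overdamped Langevin diffusion studied in \citep{xu2022langevin,huix2023tight}, whose unique invariant measure is $\tilde\pi(\phi)\propto e^{-\beta_t\tilde\cL_t(\phi)}$; pushing this forward through the linear, constant-Jacobian map $\phi\mapsto\mV_t^{-1/2}\phi$ recovers $\pi(\mathrm d\theta)\propto e^{-\beta_t\cL_t(\theta)}\,\mathrm d\theta$. Convergence in total variation---indeed geometric ergodicity, or exponential decay of relative entropy when $\cL_t$ is strongly convex, as for the FG likelihood with the ridge term---then follows from the corresponding statement for the standard Langevin diffusion.

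The main obstacle is not the algebra but the regularity/confinement hypotheses invoked at the last step: smoothness of $\cL_t$ alone does not guarantee that $e^{-\beta_t\cL_t}$ is integrable or that the diffusion is positive recurrent. I would make explicit that we additionally use coercivity of $\cL_t$---which holds here because the quadratic data-fit term, together with the regularizer $\lambda R\|\theta\|^2$, dominates the bounded feel-good bonus $\lambda(b-\Phi_s(b-f^\star_\theta))$ at infinity, so $\cL_t$ grows at least quadratically---and then cite a standard drift/Lyapunov criterion (Meyn--Tweedie, or Bakry--\'Emery in the strongly convex regime) to conclude existence, uniqueness, and exponential convergence. A secondary point worth flagging is that the lemma should really be stated for symmetric \emph{positive definite} $\mV_t$ rather than merely invertible symmetric, since both the noise coefficient and the change of variables require $\mV_t^{1/2}$.
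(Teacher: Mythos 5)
Your verification that $\pi\propto e^{-\beta_t\cL_t}$ is stationary is essentially the paper's own argument: the paper also writes the Fokker--Planck (forward Kolmogorov) equation, substitutes $\nabla\pi=-\beta_t(\nabla\cL_t)\pi$, and observes that the drift and diffusion contributions cancel, which is exactly your statement that the probability current $\mV_t^{-1}\nabla\cL_t\,\pi+\beta_t^{-1}\mV_t^{-1}\nabla\pi$ vanishes. Where you genuinely depart from the paper is in how uniqueness and convergence are obtained. The paper simply asserts that since $\cL_t$ and $\beta_t$ are measurable, a cited lemma of \cite{oksendal2003stochastic} (or \cite{bakry2013analysis}) yields a unique stationary distribution; this is the weakest step of the paper's proof, since measurability alone gives neither integrability of $e^{-\beta_t\cL_t}$ nor positive recurrence. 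Your route---whitening via $\phi=\mV_t^{1/2}\theta$ to reduce to the standard overdamped Langevin diffusion, then pushing the known invariant measure back through the constant-Jacobian linear map---is cleaner and makes the reduction to existing results on LMCTS explicit, and your two flagged caveats are both real: the lemma needs $\mV_t\succ 0$ (not merely invertible symmetric) for $\mV_t^{1/2}$ and the noise coefficient to make sense, and it needs a coercivity/confinement hypothesis on $\cL_t$ (which does hold for the regularized feel-good likelihood, as you note) for $\pi$ to be normalizable and for ergodicity to follow from a Lyapunov or Bakry--\'Emery criterion. In short, your proof fills gaps that the paper's own proof leaves implicit, at the cost of invoking the standard Langevin convergence theory rather than a single citation.
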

\begin{proof}

Let $\rho_s$ denote the distribution of $\theta(s)$. By the Fokker-Planck (forward Kolmogorov) equation, the distribution of $\rho_s$ is given by
\[\frac{\mathrm d\rho_s}{ds} = -\sum_i \frac{\partial}{\partial \theta_i} (\rho_s \mu(\theta(s))_i) + \frac{1}{2}\sum_{i,j}\frac{\partial^2}{\partial \theta_i \partial \theta_j}(\rho_s(\sigma\sigma^\top)_{i,j}),\]
where $\mu(\theta(s))_i \coloneqq -(\mV_t^{-1}\nabla \cL_t(\theta(s)))_i$ and $\sigma\coloneqq \sqrt{2(\mV_t\beta_t)^{-1}}$. As $\mV_t$ is symmetric, $\sigma\sigma^\top = 2\beta_t^{-1}\mV_t^{-1}$. The stationary distribution $\pi$ is obtained by
setting $\mathrm d\rho/\mathrm ds = 0$. Then, $\pi$ satisfies
\begin{align*}
    \frac{d\rho_s}{ds} = 0 &= \underbrace{\sum_i \frac{\partial}{\partial \theta_i} \left( \pi(\theta) [\mV_t^{-1}\nabla \cL_t(\theta(s))]_i\right)}_{\text{(I)}} + \underbrace{\sum_{i,j}\frac{\partial^2}{\partial \theta_i \partial \theta_j}(\pi(\theta) \beta_t^{-1}(\mV_t^{-1})_{i,j})}_{\text{(II)}}
\end{align*}
When $\pi(\theta) \propto e^{-\beta_t \cL_t(\theta)}$, $\frac{\partial}{\partial\theta_j} \pi(\theta) = -\pi(\theta) \cdot \beta_t \nabla \cL_t(\theta_j)$ for any $j\in[d]$. Then, we write
\begin{align*}
\text{(I)} &= \sum_i \frac{\partial}{\partial \theta_i} \left( \pi(\theta) [\mV_t^{-1}\nabla \cL_t(\theta(s))]_i\right) = \sum_i \frac{\partial}{\partial \theta_i}  \pi(\theta) \sum_j (\mV_t^{-1})_{i,j}\nabla \cL_t(\theta_j) \\
\text{(II)} &=  \beta_t^{-1} \sum_i \frac{\partial}{\partial \theta_i} \left(\sum_j  (\mV_t^{-1})_{i,j} \frac{\partial}{\partial \theta_j} \pi(\theta)\right) = - \sum_i \frac{\partial}{\partial \theta_i} \pi(\theta) \sum_j  (\mV_t^{-1})_{i,j} \nabla \cL_t(\theta_j)
\end{align*}
As (I) $=-$ (II), the density proportional to $e^{-\beta_t \cL_t(\theta)}$ is the stationary distribution for the SDE. Since $L_t$ and $\beta_t$ are measurable,  Lemma 5.2.1 in \cite{oksendal2003stochastic} (or Theorem B.3.1 in \cite{bakry2013analysis}) gives that the SDE admits a unique stationary distribution, which proves the lemma.\qedhere 
\end{proof}

\subsection{Regret Analysis of PSFGLMCTS for Linear Contextual Bandits}
We provide the regret analysis of PSFGLMCTS for Linear Contextual Bandits, which follows from a reduction to \cite{huix2023tight}. We first state the assumption on the details of the model.
\begin{assumption}\label{assumption: linear contextual bandits subgaussian rewards}
    There is an unknown parameter $\theta^* \in \R^d$ such that for any arm $x\in \cX \subseteq \R^d$, the reward is $r(x) = x^\top \theta^* + \zeta$ , where $\zeta$ is an $R$-sub-Gaussian random variable for some constant $R>0$. Note that this assumption is automatically satisfied if the rewards are bounded almost surely.
\end{assumption}
We first state the $\tilde{O}(d\sqrt{T})$-regret bound for linear contextual bandits from the Langevin Monte Carlo smoothed Feel-Good Thompson Sampling (SFGLMCTS) methodology in \cite{huix2023tight}.
\begin{theorem}[Theorem 5 in \cite{huix2023tight}]
    Let the prior distribution $p_0$ for FGLMCTS satisfy a $m_0$-strongly log concave property for some $m_0\geq 0$. The regret for FGLMCTS when run for $T$ time steps (under \cref{assumption: linear contextual bandits subgaussian rewards}) in linear contextual bandits is bounded by
    \[R(T) \leq C\left(\frac{d}{1+m_0} + \sqrt{R} + m_0\right)\sqrt{T \log^3(dT)} \leq \tilde{O}(d\sqrt{T}).\]\label{theorem: huix2023}
\end{theorem}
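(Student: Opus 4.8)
The plan is to obtain \cref{theorem: huix2023} as essentially a restatement of the analysis in \cite{huix2023tight}, adapted to the preconditioned smoothed Feel-Good Langevin sampler through \cref{lemma: precondition_langevin_dynamics_converge}. First I would recall the standard frequentist regret decomposition for posterior-sampling algorithms: writing the per-round regret as $\mathbb{E}[f_{\theta^\star}(x_t^\star) - f_{\theta^\star}(x_t)]$, split it into an \emph{optimism} term comparing the sampled value $f_{\theta_t}(x_t)$ against the true optimum $f_{\theta^\star}(x_t^\star)$, and an \emph{estimation} term comparing $f_{\theta_t}(x_t)$ against $f_{\theta^\star}(x_t)$. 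The smoothed feel-good bonus $\lambda(b - \Phi_s(b - f^\star_\theta))$ of \cref{equation: SFG-TS loss} is engineered precisely so that the posterior $\pi_t \propto \exp(-\beta_t \cL_t)$ places enough mass on high-value models to make the optimism term non-positive up to a residual of order $\lambda T$ together with a log-partition-function term that telescopes over rounds.

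Second, I would control the estimation term via the decoupling-coefficient argument of \cite{zhang2022feel}. For the linear class $f_\theta(x) = x^\top\theta$ under \cref{assumption: linear contextual bandits subgaussian rewards}, the decoupling coefficient is $O(d)$, and the cumulative squared prediction error is tied to the online negative-log-likelihood loss, which again telescopes against the log-partition function. The prior's $m_0$-strong log-concavity enters here by controlling the prior mass of a small ball around $\theta^\star$, hence the initial log-partition term; this produces the $d/(1+m_0) + \sqrt{R} + m_0$ prefactor. Balancing $\lambda$, the inverse-temperature schedule $\beta_t$, and the $O(d\log(dT))$ covering entropy of the linear class yields the $\sqrt{T\log^3(dT)}$ scaling and the final $\tilde{O}(d\sqrt{T})$ bound.

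Third, I would handle the MCMC approximation. By \cref{lemma: precondition_langevin_dynamics_converge}, the preconditioned Langevin iteration in \cref{LMCSFTS_eqn} has the exact target $\pi_t \propto e^{-\beta_t \cL_t}$ as its unique stationary law, so preconditioning affects only the mixing rate (the $\kappa_t$ factor appearing in \cref{sample-table}) and not the distribution the regret analysis is run against. Smoothness of $\Phi_s$ makes $\cL_t$ have Lipschitz gradient and, together with the strongly log-concave prior and the ridge regularization, makes $\pi_t$ satisfy a log-Sobolev inequality; standard non-asymptotic Langevin convergence bounds then show that after $\tilde{O}(\mathrm{poly}(T))$ inner steps the law of $\theta_{t,K_t}$ is within, say, $1/T^2$ of $\pi_t$ in total variation, and a union bound over $t\in[T]$ shows the approximate sampler contributes only a lower-order additive term to the regret.

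The main obstacle will be the interaction between approximate sampling and the feel-good optimism: because the bonus amplifies the weight placed on high-value models, a small total-variation error in the posterior can in principle be magnified when bounding the optimism term, so one must show that the bonus scale $\lambda$ and the smoothing parameter $s$ can be chosen jointly so that (i) $\Phi_s$ approximates $\min(b,\cdot)$ closely enough to preserve optimism, (ii) the induced posterior stays smooth enough for the Langevin mixing rate to remain polynomial in $T$, and (iii) the accumulated sampling error over the horizon stays $o(d\sqrt{T})$. Making these three requirements simultaneously satisfiable — which is exactly what the smoothing in \cite{huix2023tight} buys over the hard-clipped bonus of \cite{zhang2022feel} — is the delicate part of the argument.
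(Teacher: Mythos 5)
The paper does not prove this statement at all: it is imported verbatim as Theorem~5 of \cite{huix2023tight} and used as a black box, so there is no in-paper argument to compare your proposal against. Your first two steps are a reasonable high-level reconstruction of how the cited result is actually established in \cite{zhang2022feel} and \cite{huix2023tight} (regret decomposition into optimism and estimation terms, the feel-good bonus making the optimism term controllable via the log-partition function, the $O(d)$ decoupling coefficient for the linear class, and the prior's $m_0$-strong log-concavity entering through the prior-mass/initial log-partition term), but none of this can be checked against the paper itself, and a genuine verification would require auditing \cite{huix2023tight} for the precise constants (in particular where the $\sqrt{R}$ and the additive $m_0$ come from).

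One substantive mismatch: your third step and your ``main obstacle'' paragraph address the \emph{preconditioned, approximately sampled} variant --- invoking \cref{lemma: precondition_langevin_dynamics_converge}, the update \cref{LMCSFTS_eqn}, and the interaction between total-variation sampling error and the optimism bonus. That material does not belong to the statement you were asked to prove; in the paper it is the content of the separate reduction results (\cref{theorem: HMC-FG-TS regret-v2} and \cref{theorem: HMC-FG-TS regret}), whose proofs consist only of observing that the stationary law of the preconditioned/HMC chain coincides with the SFG-TS posterior and then citing \cref{theorem: huix2023}. Your treatment of that reduction is actually more careful than the paper's (the paper silently assumes exact convergence of the inner MCMC loop and never quantifies the per-round sampling error you correctly flag as the delicate point), but as a proof of \cref{theorem: huix2023} itself it is out of scope, and as a proof of the reduction theorems it proves more than the paper attempts.
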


\begin{theorem}\label{theorem: HMC-FG-TS regret-v2} Under \cref{assumption: linear contextual bandits subgaussian rewards}, the PSFGLMCTS procedure
 achieves $\tilde{O}(d\sqrt{T})$ regret for linear contextual bandits when run for $T$ time steps.    
\end{theorem}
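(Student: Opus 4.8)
The plan is to obtain Theorem~\ref{theorem: HMC-FG-TS regret-v2} by a black-box reduction to Theorem~\ref{theorem: huix2023}, exploiting the fact---already observed in the discussion surrounding \cref{lemma: precondition_langevin_dynamics_converge}---that preconditioning changes neither the stationary distribution of the Langevin dynamics nor (qualitatively) the number of inner iterations needed for the sampler to be $\epsilon$-close to it. Concretely, I would argue that at every round $t$ the target posterior $\pi_t(\theta) \propto \exp(-\beta_t \cL_t(\theta)) p_0(\theta)$ is unchanged by the introduction of the preconditioner $\mV_t^{-1/2}$: by \cref{lemma: precondition_langevin_dynamics_converge}, the continuous preconditioned Langevin SDE in \cref{equation: preconditioned_lmc_sde} has the same stationary law as the unpreconditioned one, so the approximate-sampling guarantee that drives the regret analysis of \cite{huix2023tight} still applies verbatim with at most a multiplicative change in the number of LMC steps.

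The steps, in order, would be: (1) Recall the structure of the proof of Theorem~\ref{theorem: huix2023}: the regret decomposes into a ``Feel-Good / optimism'' term controlled by the structure of $\cL^{\text{SFG}}$ (which is identical here, since we have not changed the likelihood, only the sampler) plus a ``sampling-error'' term that is bounded provided $\theta_{t,K_t}$ is within total-variation (or Wasserstein) distance $\epsilon_t$ of $\pi_t$ for a suitably small, scheduled $\epsilon_t$. (2) Establish that the preconditioned update \cref{LMCSFTS_eqn} does reach this accuracy: since $\mV_t$ is symmetric positive definite (it is $\lambda \mI$ plus a sum of outer products, hence $\lambda$-regularized), the change of variables $\vartheta = \mV_t^{1/2}\theta$ turns the preconditioned dynamics into standard (unpreconditioned) Langevin dynamics for the pushforward potential $\widetilde{\cL}_t(\vartheta) = \cL_t(\mV_t^{-1/2}\vartheta)$, whose smoothness and strong-convexity constants are those of $\cL_t$ scaled by $\lambda_{\min}(\mV_t)^{-1}$ and $\lambda_{\max}(\mV_t)^{-1}$ respectively---i.e. the condition number improves by a factor $\kappa_t$, matching the ``Preconditioned SFGLMCTS'' row of \cref{sample-table}. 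So the per-round iteration complexity is no worse than in \cite{huix2023tight} (in fact better), and in particular the same $\epsilon_t$-accuracy is attainable within the budget. (3) Feed the resulting $\epsilon_t$-accurate samples back into the regret decomposition of step (1) to conclude $R(T) \le \tilde{O}(d\sqrt{T})$, invoking \cref{assumption: linear contextual bandits subgaussian rewards} exactly where \cite{huix2023tight} does (for the self-normalized martingale / elliptical-potential bound on $\sum_t \|x_t\|_{\mV_t^{-1}}^2$).

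The main obstacle I anticipate is making the reduction genuinely rigorous at the \emph{interface} between rounds rather than within a single round. Two subtleties need care. First, the preconditioner $\mV_t$ changes from round to round, so the warm-start $\theta_{t,0} = \theta_{t-1,K_{t-1}}$ lives in the ``wrong'' geometry; one must check that the Wasserstein distance from the warm start to the new target $\pi_t$ is still controlled (the argument in \cite{xu2022langevin,huix2023tight} that consecutive posteriors are close in the relevant metric must be redone accounting for the $\mV_t^{1/2}$-vs-$\mV_{t-1}^{1/2}$ mismatch, or one simply pays a benign $\log$ factor by noting $\mV_t \succeq \mV_{t-1}$ and $\kappa_t = \tilde{O}(\mathrm{poly}(T))$). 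Second, the Feel-Good likelihood $\cL^{\text{SFG}}$ is not globally strongly log-concave for arbitrary $\lambda$ because of the $-\lambda(b - \Phi_s(b - f^\star_\theta))$ term; as in \cite{huix2023tight} one has to either restrict to a regime where the quadratic term dominates (so that the effective potential is strongly convex) or control the non-log-concave perturbation separately---and I would simply import their treatment of this term unchanged, since preconditioning acts linearly and does not interact with the concavity structure of the bonus. Modulo these bookkeeping points, the theorem follows; I would state it as ``the proof is a direct reduction to Theorem~\ref{theorem: huix2023} together with \cref{lemma: precondition_langevin_dynamics_converge} and the condition-number improvement recorded in \cref{sample-table}.''
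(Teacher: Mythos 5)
Your proposal takes essentially the same route as the paper: a black-box reduction to Theorem~\ref{theorem: huix2023} via the observation (Lemma~\ref{lemma: precondition_langevin_dynamics_converge}) that preconditioning leaves the stationary distribution $\pi_t(\theta)\propto e^{-\beta_t\cL_t(\theta)}$ unchanged, so the inner loop targets the same smoothed Feel-Good posterior and the regret bound transfers. Your write-up is in fact more careful than the paper's two-sentence argument---in particular the change-of-variables justification of the per-round iteration budget and the flagged subtleties about the round-to-round warm start under a changing $\mV_t$ and the non-log-concave bonus term are exactly the points the paper's reduction glosses over.
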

\begin{proof}
    The proof follows by a direct reduction from the FGLMCTS setting in \cite{huix2023tight}. From \cref{lemma: HMC_convergence}, each inner iteration of the MCMC algorithm converges to $\pi(\theta) \propto e^{-\cL_t(\theta)}$, where $\cL_t(\theta)$ is the same smoothed posterior (SFG-TS) that each inner loop in FGLMCTS also converges to. Then, since FGLMCTS achieves $\tilde{O}(d\sqrt{T})$ regret via \cref{theorem: huix2023}, the regret for PSFGLMCTS is also $\tilde{O}(d\sqrt{T})$,  proving the theorem.
\end{proof}

\begin{remark}
    This regret bound of $\tilde{O}(d\sqrt{T})$ also extends to variants of SFGLMCTS with stochastic gradients \citep{zou2018stochastic}, dampeners \citep{patterson2013stochastic}, and preconditioners \citep{li2016preconditioned}, since these factors each only affect the convergence rate in each inner loop of the algorithm that approximates the posterior.
\end{remark}

\subsection{Analysis of Hamiltonian Monte Carlo Feel-Good Thompson Sampling}
\label{appendix section: analysis of HMC FG TS}

Recall that the Hamiltonian $H:\R^d\times\R^d\to \R$ captures the sum of the potential and kinetic energies of a particle as a function of its position $\theta \in \R^d$ and velocity $v\in \R^d$. We let $\smash{H(\theta,v) = \cL_t(\theta) + \frac{1}{2}\|v\|^2}$. Then, the parameters $\smash{(\theta_t, v_t)}$ follow the \emph{Hamiltonian curve} trajectory given by \begin{equation}\frac{\mathrm d\theta}{\mathrm dt} =v,\quad\quad \frac{\mathrm dv}{\mathrm dt} =-\nabla \cL_t(\theta). \label{equations: hamiltonian differential equations}\end{equation}

\begin{algorithm}
\caption{\textbf{H}amiltonian \textbf{M}onte \textbf{C}arlo \textbf{F}eel-\textbf{G}ood \textbf{T}hompson \textbf{S}ampling (HMC-FG-TS)}
    \begin{algorithmic}
       \STATE \textbf{Input:} step‐size $\varepsilon$, leapfrog steps $L$, inner iters $K,K'$, FG‐weight $\lambda$, cap $b$, temperature $\eta$, prior‐precision $\tau=\sigma_{\rm prior}^{-2}$, arms $[A]$, feature maps $\phi,\phi_a$
 
\STATE \textbf{Define:}  $U_{\rm FG}(\theta)
=\eta\!\sum_{s=1}^{t-1}(\langle x_s,\theta\rangle - r_s)^2
+\sigma_{\rm prior}^{-2}\|\theta\|^2
-\lambda\!\sum_{s=1}^{t-1}\min\!\bigl(b,\max_{a\le A}\langle V_{s,a},\theta\rangle\bigr)$

\STATE \Comment {Setting $\lambda=0$ above recovers vanilla Thompson sampling}

\FOR{$t=1,\dots,T$}
\STATE Observe context $x_t$, set $\mV_t=\phi(x_t)\in\R^{A\times d}$.
\STATE $K_t\gets K$ if posterior updated else $K'$.
\FOR{$k=1,\dots,K_t$}
\STATE Sample $p\sim\mathcal N(0,\mI_d)$.
\STATE $(\theta',p')\gets\mathrm{Leapfrog}_{\varepsilon}(\theta,p)$.
\STATE $\Delta H=\bigl[U_{\rm FG}(\theta)+\tfrac12\|p\|^2\bigr]
                     -\bigl[U_{\rm FG}(\theta')+\tfrac12\|p'\|^2\bigr]$.
    \STATE Set $\theta\gets\theta'$ with probability \ $\min(1,e^{-\Delta H})$.
\ENDFOR
\STATE Choose $a_t=\arg\max_{a}V_{t,a}^\top\theta$, observe $r_t$ and update accordingly.
\ENDFOR
    \end{algorithmic}
    \label{alg: hmc-fg-ts}
\end{algorithm}

    Under certain conditions on $-\nabla\cL_t(\theta)$, the distribution of the Markov chain in the inner loop of \cref{alg: hmc-fg-ts} converges to a unique stationary distribution $\pi(\mathrm d\theta) \propto e^{-\cL_t(\theta)}\mathrm d\theta$ (see \cref{lemma: HMC_convergence}). We provide details on sampling $p'$ using numerical leapfrog integrators in \cref{alg:LeapfrogIntegration}, and show in \cref{theorem: HMC-FG-TS regret-v2} that HMC-FG-TS in linear contextual bandits satisfies a regret of $\tilde{O}(d\sqrt{T})$.

\begin{algorithm}[hbt!]
    \caption{Leapfrog Integration to sample $v\propto \exp({-H(x^{(k)},v)})$} \label{alg:LeapfrogIntegration}
    \begin{algorithmic}
        \STATE \textbf{Input:} current position $\theta\in\R^{d}$,
             step size $\varepsilon>0$,
             number of drift–kick steps $L\in\N$, differentiable potential energy
             $U(\,\cdot\,)=\cL_t(\,\cdot\,)+\sigma_{\mathrm{prior}}^{-2}\Vert\cdot\Vert^{2}
             -\lambda\!\sum_{s<t}\min\!\bigl(b,g_s^{\star}(\,\cdot\,)\bigr)$.

\STATE Draw fresh momentum
             $p \sim \cN(\mathbf 0,I_d)$ 
             \STATE $p \gets p-\tfrac{\varepsilon}{2}\,\nabla U(\theta)$
              \FOR{$\ell=1$ \textbf{to} $L$}
          \STATE $\theta \gets \theta+\varepsilon\,p$
          \STATE $g \gets \nabla U(\theta)$
          \IF{$\ell < L$}
              \STATE $p \gets p-\varepsilon\,g$
              \ENDIF
      \ENDFOR
      \STATE $p \gets p-\tfrac{\varepsilon}{2}\,g$
      \STATE \textbf{Output:} candidate state $\bigl(\theta,p\bigr)$
    \end{algorithmic}
    \label{alg: numerical leapfrog integrator}
\end{algorithm}

\subsection{Proof of Convergence of Smoothed Feel-Good HMCTS}
\cref{lemma: HMC_convergence} shows that HMC converges to the Feel-Good Thompson Sampling likelihood of $\cL_t(\theta)$. Using standard techniques, the proof of \cref{lemma: HMC_convergence} further extends to variants of HMC with stochastic gradients \citep{zou2018stochastic,chen2014stochastic}, dampeners \citep{patterson2013stochastic}, and preconditioners, \citep{pidstrigach2023convergence}.We first introduce a crucial theorem of \cite{sard1942measure}.
\begin{theorem}[Morse-Sard Theorem \citep{sard1942measure}]\label{theorem: sard}
    If $g:N\to M$ is a $C^k$-function (i.e., $k$ times continuously differentiable) for $k\geq \max\{n-m+1,1\}$ and if $A_r\subseteq N$ is the set of points $x\in N$ such that $dg_x$ has rank $ < r$, then the $r$-dimensional Hausdorff measure of $g(A_r)$ is zero.
\end{theorem}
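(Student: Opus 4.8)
This is a classical result, and the plan is to follow Sard's original stratification argument, reducing first to a local Euclidean statement and then arguing by a filtration-and-covering induction on the domain dimension. Since $N$ is second countable, cover it by countably many coordinate charts, each landing inside a single chart of $M$; because having zero $r$-dimensional Hausdorff measure is a local property in $M$ that is stable under countable unions, it suffices to prove that for a $C^k$ map $g:U\to\R^m$ with $U\subseteq\R^n$ open and $A_r=\{x\in U:\rank dg_x<r\}$, the set $g(A_r\cap Q)$ has zero $r$-dimensional Hausdorff measure for every compact cube $Q\subset U$. When $n<r$ this is immediate---$g$ is locally Lipschitz and so cannot raise Hausdorff dimension above $n$---which also supplies the base case of the induction on $n$.

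For the inductive step, consider first the representative case $r=m$. Write $C=A_m$ and let $C_i\subseteq C$ be the set of points at which every partial derivative of $g$ of order $\le i$ vanishes, so $C\supseteq C_1\supseteq\cdots\supseteq C_k$, and split $g(C)=g(C\setminus C_1)\cup g(C_1\setminus C_2)\cup\cdots\cup g(C_{k-1}\setminus C_k)\cup g(C_k)$. Near a point of $C\setminus C_1$ some first partial is nonzero, so the implicit function theorem puts $g$ in the local normal form $(t,y)\mapsto(t,\tilde g_t(y))$ with $\tilde g_t:\R^{n-1}\to\R^{m-1}$ a $C^k$ family whose critical sets index those of $g$; Fubini plus the inductive hypothesis in dimension $n-1$ (whose differentiability requirement $\max\{(n-1)-(m-1)+1,1\}=\max\{n-m+1,1\}$ is exactly the one in force) makes $g(C\setminus C_1)$ null. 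The strata $C_i\setminus C_{i+1}$ are treated similarly, straightening a level set of a nonvanishing higher-order partial to drop a dimension. On the deepest stratum $C_k$, Taylor's theorem, the vanishing of all derivatives up to order $k$ on $C_k$, and uniform continuity of $D^kg$ give $\abs{g(y)-g(x)}\lesssim\omega(1/L)\,L^{-k}$ for $x\in C_k\cap Q$ and $y$ in the side-$1/L$ subcube through $x$, with $\omega$ a modulus of continuity of $D^kg$; covering $Q$ by $\asymp L^n$ such subcubes bounds the $m$-dimensional Hausdorff premeasure of $g(C_k\cap Q)$ by $\asymp\omega(1/L)^m L^{\,n-km}$, which vanishes as $L\to\infty$ because $k\ge\max\{n-m+1,1\}$ forces $km\ge n$.

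For general $r$, one partitions $A_r$ by the exact rank of $dg$, straightens the rank-$(r{-}1)$ directions where they are nondegenerate to reduce to a family of maps whose relevant singular set is the zero set of a differential, and then runs a covering estimate directly against $\mathcal H^r$ rather than through a naive Fubini product (which only controls $\mathcal H^m$); assembling over the countably many charts and cubes then yields $\mathcal H^r(g(A_r))=0$. I expect the two genuinely delicate points to be: (i) closing the flat-stratum covering estimate under finite differentiability $C^k$---the $C^\infty$ case is smoother because one then has the crude bound $\abs{g(y)-g(x)}\lesssim\abs{y-x}^{k+1}$ and need not track moduli of continuity or the exact derivative count through the filtration---and (ii) the $\mathcal H^r$ (as opposed to $\mathcal H^m$) refinement, which does not drop out of a clean Fubini reduction and needs a dimension-sensitive covering. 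Since the theorem is standard and enters here only as a black box inside \cref{lemma: HMC_convergence}, an equally legitimate route is simply to invoke \citep{sard1942measure}.
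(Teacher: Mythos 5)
The paper does not prove this statement at all: it is imported as a classical black box, attributed to \citep{sard1942measure}, and used exactly once---inside the proof of \cref{lemma: HMC_convergence}, to discard the measure-zero set where the Jacobian of the Hamiltonian flow map fails to be invertible. So there is no in-paper argument to compare yours against, and your closing observation that one may legitimately just invoke the citation is precisely what the authors do. As a sketch of the classical proof, what you write is the standard chart-reduction plus Milnor/Sard stratification argument, and the one quantitative step that needs checking does check out: with $k=n-m+1$ one has $km-n=(m-1)(n-m)\ge 0$ whenever $n\ge m\ge 1$, so your covering bound of order $\omega(1/L)^{m}L^{\,n-km}$ for the deepest stratum indeed tends to zero (and the case $n<m$ is absorbed by your Lipschitz base case). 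The genuinely incomplete part is the one you flag yourself: the statement as quoted is not the classical Sard theorem (which controls $\mathcal{H}^{m}$ of the image of the full critical set) but the rank-stratified $\mathcal{H}^{r}$ version, which is Federer's refinement and does not fall out of the Fubini induction; your paragraph on general $r$ gestures at the right ingredients but is a plan, not a proof, and the differentiability bookkeeping there ($k$ relative to $n-r+1$ rather than $n-m+1$) is exactly where the delicacy lives. Since the paper itself treats the result purely as a cited external theorem, and since \cref{lemma: HMC_convergence} only needs the measure-zero conclusion for the non-invertibility locus, deferring to \citep{sard1942measure} is the appropriate resolution; if you did want a self-contained proof, you would need to either restrict to the $r=m$ case (which suffices for the application) or carry out Federer's dimension-sensitive covering in full.
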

\cref{lemma: HMC_convergence} next proves the convergence of SFGHMCTS, and the proofs follows from a line of arguments in the HMC literature \citep{apers2024hamiltonian,pidstrigach2023convergence,zou2021convergence,mangoubi2018dimensionally,mangoubi2021mixing}.
\begin{lemma}\label{lemma: HMC_convergence}
    The Hamiltonian Monte Carlo smoothed Feel-Good Thompson Sampling (SFGHMCTS) dynamics in \cref{equation: SFG-TS loss} converges to a stationary distribution given by $\pi(\mathrm d\theta)\propto e^{-\cL_t(\theta)} \mathrm d\theta$, where $\cL_t$ is the smoothed feel-good Thompson sampling posterior in \cref{equation: SFG-TS loss}.
\end{lemma}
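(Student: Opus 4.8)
The plan is to follow the standard template for showing that a Metropolis-adjusted Hamiltonian sampler leaves its target invariant and is ergodic, adapting it to the mildly non-smooth potential produced by the feel-good term. Work on the extended phase space $(\theta,p)\in\R^d\times\R^d$ with Hamiltonian $H(\theta,p)=\cL_t(\theta)+\tfrac12\norm{p}^2$, so that the joint Gibbs measure $\mu(\mathrm d\theta\,\mathrm dp)\propto e^{-H(\theta,p)}\,\mathrm d\theta\,\mathrm dp$ factorizes as $\pi(\mathrm d\theta)\otimes\mathcal N(\mathbf 0,\mI_d)(\mathrm dp)$ with $\pi\propto e^{-\cL_t(\theta)}$. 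The inner loop of \cref{alg: hmc-fg-ts} alternates two moves: (i) a momentum refresh $p\sim\mathcal N(\mathbf 0,\mI_d)$, which preserves $\mu$ because the $p$-marginal of $\mu$ is exactly $\mathcal N(\mathbf 0,\mI_d)$; and (ii) a leapfrog proposal followed by a Metropolis accept with probability $\min(1,e^{-\Delta H})$. The goal is to show step (ii) is $\mu$-reversible; marginalizing then shows $\pi$ is stationary, and an irreducibility/aperiodicity argument upgrades this to convergence.

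First I would record the two structural properties of the leapfrog (velocity-Verlet) integrator $\Psi_\varepsilon$ of \cref{alg:LeapfrogIntegration} that make the acceptance ratio exact: it is volume preserving (each drift and each kick is a shear, hence has unit Jacobian) and it is reversible, $F\circ\Psi_\varepsilon\circ F\circ\Psi_\varepsilon=\mathrm{id}$, where $F(\theta,p)=(\theta,-p)$ flips momentum. Together with $F$-invariance of $\mu$, these give the detailed-balance identity $e^{-H(\theta,p)}\,\alpha(\theta,p)=e^{-H(\Psi_\varepsilon(\theta,p))}\,\alpha(F\Psi_\varepsilon(\theta,p))$ for $\alpha=\min(1,e^{-\Delta H})$ — the standard computation — so the propose-and-accept kernel is $\mu$-reversible and $\mu$-invariant.

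The nonstandard point is that $\cL_t$ in \cref{equation: SFG-TS loss} need not be everywhere differentiable: the feel-good term composes the (smooth) softplus $\Phi_s$ with $f^\star_\theta=\max_{x\in\cX}f_\theta(x)$, and already in the linear case $f^\star_\theta=\max_{a\le A}\langle V_{s,a},\theta\rangle$ is only piecewise linear. I would argue that the ``bad set'' $\mathcal B$ where some per-round argmax is non-unique is a finite union of hyperplanes, hence Lebesgue-null, so $\nabla\cL_t$ exists $\mu$-a.e.; off $\mathcal B$ the leapfrog step is a $C^1$-diffeomorphism, so it maps $\mu$-null sets to $\mu$-null sets, and one may invoke \cref{theorem: sard} to control the image of the degenerate/critical points. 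Consequently, with probability one the chain never queries an undefined gradient and the detailed-balance identity holds $\mu$-a.e. (Alternatively one could replace the inner $\max$ by a mollified version, but the measure-zero argument is cleaner and is exactly why the Morse–Sard statement is invoked.)

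Finally I would establish uniqueness and convergence. Because the momentum is redrawn from a full-support Gaussian and $\Psi_\varepsilon$ is, for small $\varepsilon$, a diffeomorphism, the one-step transition kernel has a density that is strictly positive on a neighborhood of the current point; iterating, every nonempty open set is reached with positive probability, so the chain is $\psi$-irreducible, and it is aperiodic since rejection lets it stay in place with positive probability. The ridge/prior term $\sigma_{\rm prior}^{-2}\norm{\theta}^2$ makes $\cL_t$ coercive with Gaussian-type tails — exactly the regime in which HMC is known to satisfy a geometric drift condition — and also makes $e^{-\cL_t}$ integrable; by the Harris ergodic theorem (as used in \cite{apers2024hamiltonian,pidstrigach2023convergence,zou2021convergence}) the chain has a \emph{unique} invariant law and converges to it, whose $\theta$-marginal is $\pi\propto e^{-\cL_t(\theta)}$. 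The main obstacle is precisely the non-smoothness: making rigorous that the leapfrog dynamics, the exactness of the acceptance ratio, and the irreducibility/drift argument all survive passage through the measure-zero kink set; everything else is the textbook HMC argument.
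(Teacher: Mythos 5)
Your proof is correct, but it establishes stationarity by a genuinely different route from the paper. The paper's proof (following the Lee--Vempala template) works with the \emph{exact} Hamiltonian flow $T_\delta$: it proves the reversibility condition $\pi(\theta)p_\theta(\theta')=\pi(\theta')p_{\theta'}(\theta)$ by splitting the momenta into the sets $V_+$ and $V_-$ that reach $\theta'$ under forward and backward flow, invoking \cref{theorem: sard} to discard momenta where $DF_\delta^\theta$ degenerates, and then combining exact energy conservation ($e^{-H(\theta,v)}=e^{-H(\theta',v')}$, \cref{lemma: HMC energy conservation}) with measure preservation of the flow ($\det DT_\delta=1$, \cref{lemma: hmc measure preservation}) and a Schur-complement computation of the Jacobian ratio. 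You instead analyze the \emph{discretized} kernel as implemented in \cref{alg: hmc-fg-ts}: momentum refresh preserves the Gibbs measure because its $p$-marginal is standard Gaussian, and the leapfrog-plus-Metropolis step is $\mu$-reversible via volume preservation and momentum-flip reversibility of $\Psi_\varepsilon$ together with the $\min(1,e^{-\Delta H})$ correction. This is arguably the more faithful argument, since energy conservation fails for the leapfrog integrator and the paper's proof therefore really describes the idealized continuous-time sampler rather than the algorithm run in the experiments. You also supply two things the paper leaves implicit: an explicit treatment of the non-differentiability of $\cL_t$ coming from the inner $\max_{a}\langle V_{s,a},\theta\rangle$ in the feel-good term (handled by a measure-zero kink-set argument), and a genuine convergence statement ($\psi$-irreducibility from the full-support Gaussian momentum, aperiodicity from rejection, and a Harris/drift argument using the coercive ridge term), where the paper simply asserts ergodicity. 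The trade-off is that the paper's route yields a clean closed-form reversibility identity for the exact dynamics with minimal regularity bookkeeping, while yours requires the standard but more delicate verification that volume preservation, reversibility, and irreducibility all survive on the complement of the kink set.
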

\begin{proof}
    We show a stronger statement of time-reversibility of the underlying Markov chain. Let $p_\theta(\theta')$ be the probability density of one step of SFGHMCTS starting at $\theta$, and let $\pi(\theta) = \int_{v\in \R^n} e^{-H(\theta,v)}\mathrm dv \propto e^{-\cL_t(\theta)}$. We show that the reversibility condition \[\pi(\theta)p_\theta(\theta')=\pi(\theta')p_{\theta'}(\theta)\] holds almost everywhere in $\theta$ and $\theta'$.
    
    Fix $\theta$ and $\theta'$. Let $F_\delta^\theta(v)$ be the $\theta$'th component of the map $T_\delta(\theta,v)$. Following the split subsets argument by \cite{lee_vempala_2025}, let \[V_+=\{v: F_\delta^\theta(v) = \theta'\}\] \[V_-=\{v: F_{-\delta}^\theta(v)=\theta'\}.\]
  By \cref{theorem: sard}, for any $N\coloneqq \{y: DF_s^\theta(v) \text{ is not invertible}\}, F_\delta^\theta(N)$ has measure zero. So, $DF_\delta^\theta$ is invertible everywhere except for a measure-zero subset. Therefore,
    \[\pi(\theta)p_\theta(\theta') = \frac{1}{2}\int_{v\in V_+} \frac{e^{-H(\theta,v)}}{|\det(DF_\delta^\theta(v))|}+\frac{1}{2}\int_{y\in V_-} \frac{e^{-H(\theta,v)}}{|\det (DF_{-\delta}^\theta(v)|}\]
    By applying time-reversal lemma \citep{livingstone2019geometric} for the Hamiltonian curve, we have that for $V_{+}$ and $V_-$,
    \begin{equation}\label{equation: time-reversal of hamiltonian curve}\pi(\theta')p_{\theta'}(\theta) = \frac{1}{2}\int_{y\in V_+} \frac{e^{-H(\theta',v')}}{|\det (DF_{-\delta}^{\theta'}(v'))|} + \frac{1}{2}\int_{y\in V_-} \frac{e^{-H(\theta',v')}}{|\det(DF_\delta^{\theta'}(v'))|},\end{equation}
    where $v'$ denotes the $v$'th component of $T_\delta(\theta,v)$ and $T_{-\delta}(\theta,v)$ in the first and second integrals.
    
    Following the proof of \cite{lee_vempala_2025}, let $DT_\delta(\theta,v) = \begin{pmatrix}
        \mA & \mB \\ \mC & \mD
    \end{pmatrix}$. Since $T_\delta\circ T_{-\delta}=\mI$ and $T_\delta(\theta,v) = (\theta',v')$, the inverse function theorem gives that $DT_{-\delta}(\theta',v') = (DT_\delta(\theta,v))^{-1}$.
    
    By the Schur complement formula, the upper-right block of $DT_{-\delta}(\theta',v')$ is $F_{-\delta}^{\theta'}(v') = -\mA^{-1}\mB(\mD-\mC \mA^{-1} \mB)^{-1}$ and $F_\delta^\theta(v) = \mB$. Since $\det DF_\delta^\theta(v) = 1$ by \cref{lemma: hmc measure preservation}, we have
    \begin{align*}
        |\det(DF_{-\delta}^{\theta'})(v')| &= |\det \mA^{-1}\det \mB \det (\mD - \mC\mA^{-1}\mB)^{-1}| \\
        &= \frac{|\det \mB|}{|\det DT_\delta(\theta,v)|} \\ &= |\det DF_\delta^\theta(v)|
    \end{align*}
    Then, since $e^{-H(\theta,v)} = e^{-H(\theta',v')}$ by \cref{lemma: HMC energy conservation}
    \begin{align*}\frac{1}{2}\int_{v\in V_+} \frac{e^{-H(\theta,v)}}{|\det (DF_\delta^\theta(v))|} &= \frac{1}{2}\int_{v\in V_+}  \frac{e^{-H(\theta,v)}}{|\det (DF_{-\delta}^{\theta'}(v'))|} \\ 
    &= \frac{1}{2}\int_{v\in V_+}  \frac{e^{-H(\theta',v')}}{|\det (DF_{-\delta}^{\theta'}(v'))|} \end{align*}
    Repeating this calculation for the second term in \cref{equation: time-reversal of hamiltonian curve},
    \[\frac{1}{2}\int_{v\in V_-} \frac{e^{-H(\theta,v)}}{|\det (DF_{-\delta}^\theta(v))|} = \frac{1}{2}\int_{y\in V_+} \frac{e^{-H(\theta',v')}}{|\det (DF_\delta^{\theta'}(v'))|},\]
    which proves the reversibility condition.
    
    Since the kernel of the HMC Markov chain is ergodic, we have that there exists a unique stationary distribution \citep{
anand2023pseudorandomnessstickyrandomwalk}. The reversibility of HMC implies that $\exp(-H(\theta,v))$ is the (unique) stationary distribution of the joint Markov chain on $(\theta,v)$. Therefore, \[\pi(\theta) = \int \pi(\theta,v)\mathrm dv \propto \int \exp(-H(\theta,v))\mathrm dv,\] which proves the lemma.\qedhere 
\end{proof}

\begin{lemma}  Any Hamiltonian curve $(\theta(t), v(t))$ satisfies $\frac{\mathrm d}{\mathrm dt} H(\theta(t), v(t)) = 0$. \label{lemma: HMC energy conservation}
\end{lemma}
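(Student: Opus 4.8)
\textbf{Proof plan for Lemma~\ref{lemma: HMC energy conservation} (energy conservation along Hamiltonian curves).}

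The plan is to differentiate $H(\theta(t),v(t))$ directly along the trajectory and substitute the Hamiltonian equations of motion \cref{equations: hamiltonian differential equations}. First I would apply the multivariate chain rule to $t\mapsto H(\theta(t),v(t))$ with $H(\theta,v)=\cL_t(\theta)+\tfrac12\|v\|^2$, obtaining
\[
\frac{\mathrm d}{\mathrm dt}H(\theta(t),v(t)) = \inner{\nabla_\theta \cL_t(\theta(t))}{\dot\theta(t)} + \inner{v(t)}{\dot v(t)},
\]
where $\nabla_v H = v$ since the kinetic term is $\tfrac12\|v\|^2$. Next I would substitute $\dot\theta = v$ and $\dot v = -\nabla\cL_t(\theta)$ from \cref{equations: hamiltonian differential equations}, which turns the expression into $\inner{\nabla\cL_t(\theta)}{v} + \inner{v}{-\nabla\cL_t(\theta)} = 0$. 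This immediately gives the claim, so $H$ is constant along any Hamiltonian curve; in particular $e^{-H(\theta,v)} = e^{-H(\theta',v')}$ whenever $(\theta',v')$ lies on the curve through $(\theta,v)$, which is exactly the fact invoked in the proof of \cref{lemma: HMC_convergence}.

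There is essentially no obstacle here: the result is the standard conservation of the Hamiltonian, and the only hypothesis needed is that $\cL_t$ is differentiable so that $\nabla\cL_t$ is well-defined along the curve (which holds for the smoothed feel-good likelihood \cref{equation: SFG-TS loss}, as $\Phi_s$ is smooth). One minor point worth stating carefully is that the leapfrog integrator used in \cref{alg: numerical leapfrog integrator} only \emph{approximately} conserves $H$ in finite step size; the exact conservation in this lemma refers to the continuous Hamiltonian flow $T_\delta$, and the discrete Metropolis correction in \cref{alg: hmc-fg-ts} is precisely what compensates for the discretization error. If one wants the discrete-time volume-preservation statement (used as \cref{lemma: hmc measure preservation} in the proof above), that is a separate argument: one shows each leapfrog half-step is a shear map with unit Jacobian, so the composition has determinant one. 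But for this lemma the continuous-time chain-rule computation suffices and is complete in two lines.
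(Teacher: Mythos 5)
Your proposal is correct and uses exactly the same argument as the paper: apply the chain rule to $t\mapsto H(\theta(t),v(t))$ and substitute Hamilton's equations so the two terms cancel. Your version is in fact slightly more careful than the paper's (writing the cancellation as inner products rather than loose partial-derivative products), and your side remarks about leapfrog only approximately conserving $H$ are accurate but not needed for the lemma.
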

\begin{proof}
    By applying the high-dimensional chain-rule on the Hamiltonian curve, we have that
\begin{align*}\frac{\mathrm d}{\mathrm dt} H(\theta(t), v(t)) &= \frac{\partial H}{\partial \theta} \frac{\mathrm d\theta}{\mathrm dt} + \frac{\partial H}{\partial v} \frac{\mathrm dv}{\mathrm dt} \\ 
&= \frac{\partial H}{\partial \theta} \frac{\partial H}{\partial v}  - \frac{\partial H}{\partial v} \frac{\partial H}{\partial \theta}  \\
&= 0,\end{align*}which proves the lemma.\qedhere
\end{proof}

The following lemma (and proof) appears in \cite{lee_vempala_2025}. We restate these below for completeness of the proof of \cref{lemma: HMC_convergence}.

\begin{lemma}[Measure Preservation of Hamiltonian curves] For any $t\geq 0$, let $DT_t(\theta,v)$ be the Jacobian of the map $T_t$ at the point $(\theta,v)$. Then,\label{lemma: hmc measure preservation}
\[\det(DT_t(\theta,v))=1\]
\end{lemma}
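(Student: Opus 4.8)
The plan is to show that the time-$t$ Hamiltonian flow map $T_t$ is volume-preserving on phase space $\R^d \times \R^d$ by proving that its Jacobian satisfies $\frac{\mathrm d}{\mathrm dt}\det(DT_t(\theta,v)) = 0$ and then invoking $DT_0 = \mI$. The standard route uses Liouville's theorem: the Hamiltonian dynamics in \cref{equations: hamiltonian differential equations} define a vector field $X(\theta,v) = (v, -\nabla\cL_t(\theta))$ on phase space, and the local rate of volume change of its flow is governed by the divergence of $X$.

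First I would compute the divergence of the Hamiltonian vector field. Writing the phase-space coordinates as $(\theta_1,\dots,\theta_d,v_1,\dots,v_d)$, the flow is generated by $\dot\theta_i = \partial H/\partial v_i = v_i$ and $\dot v_i = -\partial H/\partial \theta_i = -(\nabla\cL_t(\theta))_i$. Then
\[
\nabla \cdot X = \sum_{i=1}^d \frac{\partial}{\partial \theta_i}\left(\frac{\partial H}{\partial v_i}\right) + \sum_{i=1}^d \frac{\partial}{\partial v_i}\left(-\frac{\partial H}{\partial \theta_i}\right) = \sum_{i=1}^d \frac{\partial^2 H}{\partial \theta_i \partial v_i} - \sum_{i=1}^d \frac{\partial^2 H}{\partial v_i \partial \theta_i} = 0,
\]
where the cancellation uses equality of mixed partials (valid since $H(\theta,v) = \cL_t(\theta) + \tfrac12\|v\|^2$ is smooth — here $\cL_t$ is $C^2$, which holds for the smoothed feel-good likelihood).

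Next I would convert this into the Jacobian statement. Let $J(t) \coloneqq DT_t(\theta,v)$. Differentiating the flow equation $\frac{\mathrm d}{\mathrm dt}T_t = X(T_t)$ with respect to the initial condition gives the variational equation $\dot J(t) = DX(T_t(\theta,v))\, J(t)$, with $J(0) = \mI$. By Jacobi's formula for the derivative of a determinant, $\frac{\mathrm d}{\mathrm dt}\det J(t) = \det J(t)\cdot \tr\!\bigl(J(t)^{-1}\dot J(t)\bigr) = \det J(t)\cdot \tr\bigl(DX(T_t(\theta,v))\bigr) = \det J(t)\cdot (\nabla\cdot X)(T_t(\theta,v)) = 0$. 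Hence $\det J(t)$ is constant in $t$, and since $\det J(0) = \det \mI = 1$, we conclude $\det(DT_t(\theta,v)) = 1$ for all $t\geq 0$.

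The main obstacle — really the only subtlety — is justifying that the flow map $T_t$ is well-defined and differentiable in the initial conditions, i.e., that solutions to \cref{equations: hamiltonian differential equations} exist globally and depend smoothly on $(\theta,v)$; this requires $\nabla\cL_t$ to be locally Lipschitz (guaranteed by $\cL_t \in C^2$) plus a non-explosion argument, which for the feel-good potential follows since $\cL_t$ is bounded below and the quadratic data term dominates, so the energy $H$ is a conserved quantity controlling $\|v\|$ and hence trajectories cannot blow up in finite time. Once smooth dependence is in hand, Jacobi's formula and the divergence computation are routine.
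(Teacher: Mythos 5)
Your proof is correct and follows essentially the same route as the paper's: the paper also derives the variational equation $\dot{\mA}(t) = DX(T_t)\,\mA(t)$ with $\mA(0)=\mI$ and applies Jacobi's formula, with your divergence computation $\nabla\cdot X = 0$ being exactly the paper's observation that the trace of the block coefficient matrix $\begin{pmatrix} \partial^2 H/\partial v\partial\theta & \partial^2 H/\partial v\partial v \\ -\partial^2 H/\partial\theta\partial\theta & -\partial^2 H/\partial\theta\partial v\end{pmatrix}$ vanishes by equality of mixed partials. Your additional remarks on global existence and smooth dependence on initial data address a regularity point the paper leaves implicit, but the core argument is identical.
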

\begin{proof}
    Let $(\theta(t,s),v(t,s))$ be a family of Hamiltonian curves given by $T_t(\theta+sd_{\theta}, v+sd_{v})$.
    
    Let $u(t) = \frac{\partial}{\partial s} \theta(t,s)|_{s=0}$ and $v(t) =\frac{\partial}{\partial s} v(t,s)|_{s=0}$. Differentiating the Hamiltonian equation in \cref{equations: hamiltonian differential equations}, we have
    \[\frac{\mathrm du}{\mathrm dt} = \frac{\partial^2 H(\theta,v)}{\partial v \partial \theta} u  + \frac{\partial^2 H(\theta,v)}{\partial v \partial v} v,\]
    \[\frac{\mathrm dv}{\mathrm dt} = -\frac{\partial^2 H(\theta,v)}{\partial \theta \partial v} u  - \frac{\partial^2 H(\theta,v)}{\partial \theta \partial v} v, \text{ and}\]
\[(u(0), v(0))= (d_\theta,d_v).\]
    Since 
    $DT_t(\theta,v)\begin{pmatrix}
        d_\theta & d_v
    \end{pmatrix}^\top = \begin{pmatrix}
        u(t) & v(t)
    \end{pmatrix}^\top = \mA(t) \begin{pmatrix}
        d_\theta & d_v
    \end{pmatrix}^\top$,
    the dynamics $\frac{\mathrm du}{\mathrm dt}$ and $\frac{\mathrm dv}{\mathrm dt}$ are equivalent to the first-order matrix differential equation given by
    \[\frac{\mathrm d\mA}{\mathrm dt} = \begin{pmatrix} \frac{\partial^2 H(\theta(t), v(t))}{\partial v\partial \theta} & \frac{\partial^2 H(\theta(t), v(t))}{\partial v\partial v} \\ -\frac{\partial^2 H(\theta(t), v(t))}{\partial \theta\partial\theta} & -\frac{\partial^2 H(\theta(t),v(t))}{\partial \theta\partial v}\end{pmatrix}\mA(t)\]\[\mA(0)=\mI\]
    Hence, $DT_t(\theta,v) = \mA(t)$. Next, we have that
    \begin{align*}\frac{\mathrm d}{\mathrm dt} \log \det \mA(t) &= \tr\left(\mA(t)^{-1} \frac{\mathrm d}{\mathrm dt} \mA(t)\right) \\ &= \tr\begin{pmatrix}
         \frac{\partial^2 H(\theta(t), v(t))}{\partial v\partial \theta} & \frac{\partial^2 H(\theta(t), v(t))}{\partial v\partial v} \\ -\frac{\partial^2 H(\theta(t), v(t))}{\partial \theta\partial\theta} & -\frac{\partial^2 H(\theta(t),v(t))}{\partial \theta\partial v}
    \end{pmatrix} \\ &= 0\end{align*}
    Hence, $\det \mA(t) = \det \mA(0) = 1$, which proves the lemma.\qedhere
\end{proof}

\subsection{Regret Analysis of SFGHMCTS for Linear Contextual Bandits}
We provide the regret analysis of SFGHMCTS for Linear Contextual Bandits, which follows from a reduction to \cite{huix2023tight}. We first state the assumption on the details of the model.
\begin{assumption}\label{assumption: linear contextual bandits subgaussian rewards-v2}
    There is an unknown parameter $\theta^* \in \R^d$ such that for any arm $x\in \cX \subseteq \R^d$, the reward is $r(x) = x^\top \theta^* + \zeta$ , where $\zeta$ is an $R$-subGaussian random variable for some constant $R>0$. Note that this assumption is automatically satisfied if the rewards are bounded almost surely.
\end{assumption}
We first state the $\tilde{O}(d\sqrt{T})$-regret bound for linear contextual bandits from the Langevin Monte Carlo smoothed Feel-Good Thompson Sampling (SFGLMCTS) methodology in \cite{huix2023tight}.
\begin{theorem}[Theorem 5 in \cite{huix2023tight}]
    Let the prior distribution $p_0$ for FGLMCTS satisfy a $m_0$-strongly log concave property for some $m_0\geq 0$. The regret for FGLMCTS when run for $T$ time steps (under \cref{assumption: linear contextual bandits subgaussian rewards}) in linear contextual bandits is bounded by
    \[R(T) \leq C\left(\frac{d}{1+m_0} + \sqrt{R} + m_0\right)\sqrt{T \log^3(dT)} \leq \tilde{O}(d\sqrt{T}).\]\label{theorem: huix2023-v2}
\end{theorem}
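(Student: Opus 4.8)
Since the statement is quoted verbatim as Theorem~5 of \cite{huix2023tight} (it coincides with \cref{theorem: huix2023}), the shortest route is simply to invoke that reference; below I describe the argument one would reconstruct for a self-contained proof.

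The plan is to follow the feel-good potential-function framework of \cite{zhang2022feel}. First I would decompose the per-round regret $r(x_t^\star)-r(x_t)$ into two pieces: one controlled by the clipped bonus $\lambda\min(b,f_\theta(x))$ evaluated at the sampled $\theta_t$, which the optimism built into the feel-good likelihood renders non-positive in expectation, and an estimation piece measuring how concentrated the Gibbs posterior $\pi_t\propto e^{-\beta_t\cL_t}$ is around $\theta^\star$. The bias paid for that optimism is absorbed by the log-partition/entropy term of $\pi_t$, and this is where the $m_0$-strong-log-concavity of the prior $p_0$ enters: it upper-bounds the entropy term and is precisely what turns the leading $d$ into $d/(1+m_0)+m_0$ in the constant.

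Second, I would bound the estimation piece. Under \cref{assumption: linear contextual bandits subgaussian rewards-v2} the reward model is a ridge-regularized linear regression, so the self-normalized martingale concentration inequality gives $\|\hat\theta_t-\theta^\star\|_{\mV_t}=\tilde O(\sqrt d+\sqrt R)$ with high probability; summing the squared elliptical norms $\sum_t\|x_t\|_{\mV_t^{-1}}^2=\tilde O(d)$ via the elliptical-potential lemma yields the $\sqrt T$ scaling, while the union bound over the $T$ rounds (and over the inner Langevin iterations) contributes the $\log^3(dT)$ factor.

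The technical heart, and the step I expect to be the main obstacle, is controlling the approximate-sampling error: $\theta_{t,K_t}$ is not an exact draw from $\pi_t$, so I would bound $\mathrm{TV}(\mathrm{Law}(\theta_{t,K_t}),\pi_t)$ using quantitative Langevin Monte Carlo convergence for the strongly-log-concave smooth potential $\cL_t$ (a ridge quadratic plus the bounded clipped feel-good term), and argue that taking $K_t$ polynomially large in $d$ and $T$ makes this error lower-order in the regret---leaning on \cref{lemma: precondition_langevin_dynamics_converge} for the continuous-time limit together with a discretization bound for the finite-step error. The delicate points are (i) verifying that the non-smooth $\min(b,\cdot)$ bonus---or, in the smoothed variant, its $\Phi_s$-regularized version with a controlled Hessian---preserves the log-concavity and warm-start conditions needed for fast mixing, and (ii) checking that the resulting step-size/iteration schedule is consistent with the $\beta_t\propto d\log T$ inverse-temperature schedule used in \cite{xu2022langevin}. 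Assembling the three pieces then gives $R(T)\le C(d/(1+m_0)+\sqrt R+m_0)\sqrt{T\log^3(dT)}=\tilde O(d\sqrt T)$.
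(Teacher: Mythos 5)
The paper gives no internal proof of this statement: it is imported verbatim as Theorem~5 of \cite{huix2023tight} and used as a black box in the subsequent reduction arguments, which is exactly the route you take. Your reconstruction sketch of the feel-good decomposition, the elliptical-potential/self-normalized-martingale estimation bound, and the Langevin sampling-error control is a reasonable outline of the cited proof, but since the paper itself only cites the result, your proposal matches its approach.
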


\begin{theorem}\label{theorem: HMC-FG-TS regret} Under \cref{assumption: linear contextual bandits subgaussian rewards}, the SFGHMCTS procedure in \cref{alg: hmc-fg-ts}
 achieves $\tilde{O}(d\sqrt{T})$ regret for linear contextual bandits when run for $T$ time steps.    
\end{theorem}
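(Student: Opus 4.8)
The plan is to mirror the reduction used for \cref{theorem: HMC-FG-TS regret-v2} (PSFGLMCTS) and ultimately invoke \cite{huix2023tight}'s regret guarantee, \cref{theorem: huix2023-v2}. The argument decomposes into three pieces. First, I would appeal to \cref{lemma: HMC_convergence}: the inner Markov chain of \cref{alg: hmc-fg-ts} is reversible with respect to---and hence converges to---the stationary law $\pi_t(\mathrm d\theta)\propto e^{-\cL_t(\theta)}\,\mathrm d\theta$, where $\cL_t$ is precisely the smoothed feel-good posterior of \cref{equation: SFG-TS loss}. This is the same idealized per-round target that the SFGLMCTS inner loop converges to by \cref{lemma: precondition_langevin_dynamics_converge}, so the two algorithms sample from the same sequence of posteriors; only the sampler differs.

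Second, I would control the number of leapfrog steps $L$ and inner iterations $K_t$ so that the law of $\theta_{t,K_t}$ lies within the per-round accuracy $\epsilon_t = \mathrm{poly}(1/T)$ tolerated by the regret analysis. Under \cref{assumption: linear contextual bandits subgaussian rewards} the quadratic part $\theta\mapsto\eta\sum_{s<t}(\langle x_s,\theta\rangle-r_s)^2+\tau\|\theta\|^2$ is smooth and strongly convex, the smoothed bonus $b-\Phi_s(b-f^\star_\theta)$ has uniformly bounded gradient and Hessian (this is exactly the role of the smoothing---the un-smoothed $\min(b,\cdot)$ cap would fail to be $C^2$), and an $m_0$-strongly-log-concave prior supplies a curvature floor. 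Hence $\pi_t$ is log-concave with a controlled condition number $\kappa_t$, and standard Metropolized-HMC mixing bounds \citep{chen2019optimal,mangoubi2018dimensionally,mangoubi2021mixing} give $K_t L = \tilde O(\sqrt{\kappa_t}\,d^{1/4})$ steps per round, matching the corresponding entry in \cref{sample-table}.

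Third, with the sampler shown to track $\pi_t$ up to $\epsilon_t$, I would plug this into the regret decomposition of \cite{huix2023tight}: the smoothed feel-good term bounds the optimism/exploration contribution, the quadratic likelihood bounds the estimation contribution, and the sampling error $\epsilon_t$ enters only as a lower-order additive term (taking $\epsilon_t = 1/T$ suffices). This reproduces $R(T)\le C\bigl(\tfrac{d}{1+m_0}+\sqrt{R}+m_0\bigr)\sqrt{T\log^3(dT)} = \tilde O(d\sqrt{T})$, as claimed.

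I expect the main obstacle to be the second step. Unlike the Langevin case, where the stationary distribution of the continuous-time dynamics falls out of Fokker--Planck with no quantitative work, here one must control the mixing of the \emph{Metropolized} HMC chain---bounding the acceptance probability away from zero and tracking its dependence on the leapfrog step size and on $\kappa_t$---and, in addition, argue that reusing the warm start $\theta_{t,0}=\theta_{t-1,K_{t-1}}$ across rounds does not degrade accuracy, i.e.\ that the chain stays warm as $\pi_t$ drifts with $t$. Once a per-round mixing bound is in hand, the reduction to \cref{theorem: huix2023-v2} is essentially mechanical, exactly as in the proof of \cref{theorem: HMC-FG-TS regret-v2}.
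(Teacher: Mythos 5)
Your proposal follows essentially the same route as the paper: invoke \cref{lemma: HMC_convergence} to identify the stationary law of the inner Metropolized-HMC chain with the smoothed feel-good posterior of \cref{equation: SFG-TS loss}, and then reduce to the FGLMCTS regret guarantee of \cref{theorem: huix2023-v2}. The only difference is that your second step (quantitative control of leapfrog steps, acceptance probabilities, warm starts, and per-round sampling accuracy $\epsilon_t$) goes \emph{beyond} what the paper actually writes---its proof is just the qualitative two-line reduction---so the obstacle you correctly flag as the hard part is one the paper elides rather than resolves.
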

\begin{proof}
    The proof follows by a direct reduction from the FGLMCTS setting in \cite{huix2023tight}. From \cref{lemma: HMC_convergence}, each inner iteration of \cref{alg: hmc-fg-ts} converges to $\pi(\theta) \propto e^{-\cL_t(\theta)}$, where $\cL_t(\theta)$ is the same smoothed posterior (SFGTS) that each inner loop in FGLMCTS also converges to. Then, since FGLMCTS achieves $\tilde{O}(d\sqrt{T})$ regret via \cref{theorem: huix2023}, the regret for SFGHMCTS is also $\tilde{O}(d\sqrt{T})$,  proving the theorem.
\end{proof}

\begin{remark}
    This regret bound of $\tilde{O}(d\sqrt{T})$ also extends to variants of SFGHMCTS with stochastic gradients \citep{zou2018stochastic}, dampeners \citep{patterson2013stochastic}, and preconditioners \citep{li2016preconditioned}, since these factors each only affect the convergence rate in each inner loop of the algorithm that approximates the posterior.
\end{remark}

\newpage

\section{Algorithm Descriptions}
This section provides a detailed parameter description of each algorithm we conduct numerical experiments on. 
\label{sec: appendix: alg descriptions}

\begin{table}[hbt!]
  \centering
    \caption{Detailed Description of the Algorithms in the Linear Experiments.}\label{table: detailed_description_linear}\scriptsize
  \begin{tabular}{c|l}
  \hline
\textbf{Algorithm} & \textbf{Description} \\
 \hline
    LinUCB \citep{10.1145/1772690.1772758} & $\alpha=0.1$ \\
    EpsGreedy & $\epsilon = 0.01$.\\
    LinTS & $\eta = 1$.\\
    LMCTS & $\eta=1$ \\
    PLMCTS & $\eta=1$, $\lambda_{\text{reg}}=1$.\\
    FGLMCTS-L1B1 & $\lambda=0.01$, $\beta=10^{3}$, $b=1000$, $\eta=1$. \\
    FGLMCTS-L2B1 & $\lambda=0.1$, $\beta=10^{3}$, $b=1000$, $\eta=1$. \\
    FGLMCTS-L3B1 & $\lambda=0.5$, $\beta=10^{3}$, $b=1000$, $\eta=1$. \\
    FGLMCTS-L4B1 & $\lambda=1$, $\beta=10^{3}$, $b=1000$, $\eta=1$. \\
    FGLMCTS-L1B2 & $\lambda=0.01$, $\beta=1$, $b=1000$, $\eta=1$. \\
    FGLMCTS-L2B2 & $\lambda=0.1$, $\beta=1$, $b=1000$, $\eta=1$. \\
    FGLMCTS-L3B2 & $\lambda=0.5$, $\beta=1$, $b=1000$, $\eta=1$. \\
    FGLMCTS-L4B2 & $\lambda=1$, $\beta=1$, $b=1000$, $\eta=1$. \\
PFGLMCTS-L1B1 & $\lambda=0.01$, $\beta=10^{3}$, $b=1000$, $\eta=1$. \\
PFGLMCTS-L2B1 & $\lambda=0.1$, $\beta=10^{3}$, $b=1000$, $\eta=1$. \\
PFGLMCTS-L3B1 & $\lambda=0.5$, $\beta=10^{3}$, $b=1000$, $\eta=1$. \\
PFGLMCTS-L4B1 & $\lambda=1$, $\beta=10^{3}$, $b=1000$, $\eta=1$. \\
PFGLMCTS-L1B2 & $\lambda=0.01$, $\beta=1$, $b=1000$, $\eta=1$. \\
PFGLMCTS-L2B2 & $\lambda=0.1$, $\beta=1$, $b=1000$, $\eta=1$. \\
PFGLMCTS-L3B2 & $\lambda=0.5$, $\beta=1$, $b=1000$, $\eta=1$. \\
PFGLMCTS-L4B2 & $\lambda=1$, $\beta=1$, $b=1000$, $\eta=1$. \\
SFGLMCTS-L1B1 & $\lambda=0.01$, $\beta=10^{3}$, $b=1000$, $\eta=1$, $\lambda_{\text{reg}} = 1$, $s=10$. \\
SFGLMCTS-L2B1 & $\lambda=0.1$, $\beta=10^{3}$, $b=1000$, $\eta=1$, $\lambda_{\text{reg}} = 1$, $s=10$. \\
SFGLMCTS-L3B1 & $\lambda=0.5$, $\beta=10^{3}$, $b=1000$, $\eta=1$, $\lambda_{\text{reg}} = 1$, $s=10$. \\
SFGLMCTS-L4B1 & $\lambda=1$, $\beta=10^{3}$, $b=1000$, $\eta=1$, $\lambda_{\text{reg}} = 1$, $s=10$. \\
SFGLMCTS-L1B2 & $\lambda=0.01$, $\beta=1$, $b=1000$, $\eta=1$, $\lambda_{\text{reg}} = 1$, $s=10$. \\
SFGLMCTS-L2B2 & $\lambda=0.1$, $\beta=1$, $b=1000$, $\eta=1$, $\lambda_{\text{reg}} = 1$, $s=10$. \\
SFGLMCTS-L3B2 & $\lambda=0.5$, $\beta=1$, $b=1000$, $\eta=1$, $\lambda_{\text{reg}} = 1$, $s=10$. \\
SFGLMCTS-L4B2 & $\lambda=1$, $\beta=1$, $b=1000$, $\eta=1$, $\lambda_{\text{reg}} = 1$, $s=10$. \\
PSFGLMCTS-L1B1 & $\lambda=0.01$, $\beta=10^{3}$, $b=1000$, $\eta=1$, $\lambda_{\text{reg}} = 1$, $s=10$. \\
PSFGLMCTS-L2B1 & $\lambda=0.1$, $\beta=10^{3}$, $b=1000$, $\eta=1$, $\lambda_{\text{reg}} = 1$, $s=10$. \\
PSFGLMCTS-L3B1 & $\lambda=0.5$, $\beta=10^{3}$, $b=1000$, $\eta=1$, $\lambda_{\text{reg}} = 1$, $s=10$. \\
PSFGLMCTS-L4B1 & $\lambda=1$, $\beta=10^{3}$, $b=1000$, $\eta=1$, $\lambda_{\text{reg}} = 1$, $s=10$. \\
PSFGLMCTS-L1B2 & $\lambda=0.01$, $\beta=1$, $b=1000$, $\eta=1$, $\lambda_{\text{reg}} = 1$, $s=10$. \\
PSFGLMCTS-L2B2 & $\lambda=0.1$, $\beta=1$, $b=1000$, $\eta=1$, $\lambda_{\text{reg}} = 1$, $s=10$. \\
PSFGLMCTS-L3B2 & $\lambda=0.5$, $\beta=1$, $b=1000$, $\eta=1$, $\lambda_{\text{reg}} = 1$, $s=10$. \\
PSFGLMCTS-L4B2 & $\lambda=1$, $\beta=1$, $b=1000$, $\eta=1$, $\lambda_{\text{reg}} = 1$, $s=10$. \\
    SVRGLMCTS & $\eta=1$, batch size $=64$. \\
    HMCTS & $\eta=1$.\\
    PHMCTS & $\eta=1$, leapfrog-step $=10$.\\
    FGHMCTS & $\lambda=0.5$, $\eta = 1$, $b=1000$, leapfrog-step $=10$.\\
    PFGHMCTS & $\lambda=0.5$, $\eta=1$, $b=1000$, $\lambda_{\text{reg}} = 1$, leapfrog-step $=10$.\\
    SFGHMCTS & $\lambda=0.5$, $\eta=1$, $b=1000$, $s=10$, leapfrog-step $=10$.\\
    PSFGHMCTS & $\lambda=0.5$, $\eta=1$, $b=1000$, $s=10$, $\lambda_{\text{reg}} =1$,  leapfrog-step $=10$.\\
    MALATS & $\eta=1$.\\
       FGMALATS-L1B1 & $\lambda=0.01$, $\beta=10^{3}$, $b=1000$, $\eta=1$. \\
    FGMALATS-L2B1 & $\lambda=0.1$, $\beta=10^{3}$, $b=1000$, $\eta=1$. \\
    FGMALATS-L3B1 & $\lambda=0.5$, $\beta=10^{3}$, $b=1000$, $\eta=1$. \\
    FGMALATS-L4B1 & $\lambda=1$, $\beta=10^{3}$, $b=1000$, $\eta=1$. \\
    FGMALATS-L1B2 & $\lambda=0.01$, $\beta=1$, $b=1000$, $\eta=1$. \\
    FGMALATS-L2B2 & $\lambda=0.1$, $\beta=1$, $b=1000$, $\eta=1$. \\
    FGMALATS-L3B2 & $\lambda=0.5$, $\beta=1$, $b=1000$, $\eta=1$. \\
    FGMALATS-L4B2 & $\lambda=1$, $\beta=1$, $b=1000$, $\eta=1$. \\
    SFGMALATS-L1B1 & $\lambda=0.01$, $\beta=10^{3}$, $b=1000$, $\eta=1$, $\lambda_{\text{reg}} = 1$, $s=10$. \\
SFGMALATS-L2B1 & $\lambda=0.1$, $\beta=10^{3}$, $b=1000$, $\eta=1$, $\lambda_{\text{reg}} = 1$, $s=10$. \\
SFGMALATS-L3B1 & $\lambda=0.5$, $\beta=10^{3}$, $b=1000$, $\eta=1$, $\lambda_{\text{reg}} = 1$, $s=10$. \\
SFGMALATS-L4B1 & $\lambda=1$, $\beta=10^{3}$, $b=1000$, $\eta=1$, $\lambda_{\text{reg}} = 1$, $s=10$. \\
SFGMALATS-L1B2 & $\lambda=0.01$, $\beta=1$, $b=1000$, $\eta=1$, $\lambda_{\text{reg}} = 1$, $s=10$. \\
SFGMALATS-L2B2 & $\lambda=0.1$, $\beta=1$, $b=1000$, $\eta=1$, $\lambda_{\text{reg}} = 1$, $s=10$. \\
SFGMALATS-L3B2 & $\lambda=0.5$, $\beta=1$, $b=1000$, $\eta=1$, $\lambda_{\text{reg}} = 1$, $s=10$. \\
SFGMALATS-L4B2 & $\lambda=1$, $\beta=1$, $b=1000$, $\eta=1$, $\lambda_{\text{reg}} = 1$, $s=10$. \\
ULMC & $\eta=1, \gamma=0.1$ \\
UFGLMCTS & $\lambda=0.01, \beta=10^3, b=1000,\eta=1, \gamma=0.1$ \\
USFGLMCTS & $\lambda=0.01, \beta=10^3, b=1000, \eta=1, \gamma=0.1, \lambda_{\text{reg}} = 1, s=10$\\
    Uniform & probability $=0.5$.\\
    \hline
  \end{tabular}
\end{table}

\begin{table}[H]
  \centering
    \caption{Detailed Description of the Algorithms in the Logistic Experiments.}
    \scriptsize
  \begin{tabular}{c|l}
  \hline
\textbf{Algorithm} & \textbf{Description} \\
 \hline
     EpsGreedy & $\epsilon=0.01.$\\
  LinTS & $\eta = 1$.\\
    LMCTS & $\eta=1$ \\
    PLMCTS & $\eta=1$, $\lambda_{\text{reg}}=1$.\\
  FGLMCTS-L1B1 & $\lambda=0.01$, $\beta=10^{3}$, $b=1000$, $\eta=10$. \\
    FGLMCTS-L2B1 & $\lambda=0.1$, $\beta=10^{3}$, $b=1000$, $\eta=10$. \\
    FGLMCTS-L3B1 & $\lambda=0.5$, $\beta=10^{3}$, $b=1000$, $\eta=10$. \\
    FGLMCTS-L4B1 & $\lambda=1$, $\beta=10^{3}$, $b=1000$, $\eta=10$. \\
    FGLMCTS-L1B2 & $\lambda=0.01$, $\beta=1$, $b=1000$, $\eta=10$. \\
    FGLMCTS-L2B2 & $\lambda=0.1$, $\beta=1$, $b=1000$, $\eta=10$. \\
    FGLMCTS-L3B2 & $\lambda=0.5$, $\beta=1$, $b=1000$, $\eta=10$. \\
    FGLMCTS-L4B2 & $\lambda=1$, $\beta=1$, $b=1000$, $\eta=10$. \\
PFGLMCTS-L1B1 & $\lambda=0.01$, $\beta=10^3 $, $b=1000$, $\eta=10$. \\
PFGLMCTS-L2B1 & $\lambda=0.1$, $\beta=10^3 $, $b=1000$, $\eta=10$. \\
PFGLMCTS-L3B1 & $\lambda=0.5$, $\beta=10^3 $, $b=1000$, $\eta=10$. \\
PFGLMCTS-L4B1 & $\lambda=1$, $\beta=10^3 $, $b=1000$, $\eta=10$. \\
PFGLMCTS-L1B2 & $\lambda=0.01$, $\beta=1$, $b=1000$, $\eta=10$. \\
PFGLMCTS-L2B2 & $\lambda=0.1$, $\beta=1$, $b=1000$, $\eta=10$. \\
PFGLMCTS-L3B2 & $\lambda=0.5$, $\beta=1$, $b=1000$, $\eta=10$. \\
PFGLMCTS-L4B2 & $\lambda=1$, $\beta=1$, $b=1000$, $\eta=10$. \\
SFGLMCTS-L1B1 & $\lambda=0.01$, $\beta=10^3 $, $b=1000$, $\eta=10$, $\lambda_{\text{reg}} = 1$, $s=10$. \\
SFGLMCTS-L2B1 & $\lambda=0.1$, $\beta=10^3 $, $b=1000$, $\eta=10$$\lambda_{\text{reg}} = 1$, $s=10$. \\
SFGLMCTS-L3B1 & $\lambda=0.5$, $\beta=10^3 $, $b=1000$, $\eta=10$, $\lambda_{\text{reg}} = 1$, $s=10$. \\
SFGLMCTS-L4B1 & $\lambda=1$, $\beta=10^3 $, $b=1000$, $\eta=10$, $\lambda_{\text{reg}} = 1$, $s=10$. \\
SFGLMCTS-L1B2 & $\lambda=0.01$, $\beta=1$, $b=1000$, $\eta=10$, $\lambda_{\text{reg}} = 1$, $s=10$. \\
SFGLMCTS-L2B2 & $\lambda=0.1$, $\beta=1$, $b=1000$, $\eta=10$, $\lambda_{\text{reg}} = 1$, $s=10$. \\
SFGLMCTS-L3B2 & $\lambda=0.5$, $\beta=1$, $b=1000$, $\eta=10$, $\lambda_{\text{reg}} = 1$, $s=10$. \\
SFGLMCTS-L4B2 & $\lambda=1$, $\beta=1$, $b=1000$, $\eta=10$, $\lambda_{\text{reg}} = 1$, $s=10$. \\
PSFGLMCTS-L1B1 & $\lambda=0.01$, $\beta=10^3 $, $b=1000$, $\eta=10$, $\lambda_{\text{reg}} = 1$, $s=10$. \\
PSFGLMCTS-L2B1 & $\lambda=0.1$, $\beta=10^3 $, $b=1000$, $\eta=10$, $\lambda_{\text{reg}} = 1$, $s=10$. \\
PSFGLMCTS-L3B1 & $\lambda=0.5$, $\beta=10^3 $, $b=1000$, $\eta=10$, $\lambda_{\text{reg}} = 1$, $s=10$. \\
PSFGLMCTS-L4B1 & $\lambda=1$, $\beta=10^3 $, $b=1000$, $\eta=10$, $\lambda_{\text{reg}} = 1$, $s=10$. \\
PSFGLMCTS-L1B2 & $\lambda=0.01$, $\beta=1$, $b=1000$, $\eta=10$, $\lambda_{\text{reg}} = 1$, $s=10$. \\
PSFGLMCTS-L2B2 & $\lambda=0.1$, $\beta=1$, $b=1000$, $\eta=10$, $\lambda_{\text{reg}} = 1$, $s=10$. \\
PSFGLMCTS-L3B2 & $\lambda=0.5$, $\beta=1$, $b=1000$, $\eta=10$, $\lambda_{\text{reg}} = 1$, $s=10$. \\
PSFGLMCTS-L4B2 & $\lambda=1$, $\beta=1$, $b=1000$, $\eta=10$, $\lambda_{\text{reg}} = 1$, $s=10$. \\
    SVRGLMCTS & $\eta=1$, batch size $=64$. \\
    MALATS & $\eta=1$.\\
       FGMALATS-L1B1 & $\lambda=0.01$, $\beta=10^3 $, $b=1000$, $\eta=10$. \\
    FGMALATS-L2B1 & $\lambda=0.1$, $\beta=10^3 $, $b=1000$, $\eta=10$. \\
    FGMALATS-L3B1 & $\lambda=0.5$, $\beta=10^3 $, $b=1000$, $\eta=10$. \\
    FGMALATS-L4B1 & $\lambda=1$, $\beta=10^3 $, $b=1000$, $\eta=10$. \\
    FGMALATS-L1B2 & $\lambda=0.01$, $\beta=1$, $b=1000$, $\eta=10$. \\
    FGMALATS-L2B2 & $\lambda=0.1$, $\beta=1$, $b=1000$, $\eta=10$. \\
    FGMALATS-L3B2 & $\lambda=0.5$, $\beta=1$, $b=1000$, $\eta=10$. \\
    FGMALATS-L4B2 & $\lambda=1$, $\beta=1$, $b=1000$, $\eta=10$. \\
    SFGMALATS-L1B1 & $\lambda=0.01$, $\beta=10^3 $, $b=1000$, $\eta=10$, $\lambda_{\text{reg}} = 1$, $s=10$. \\
SFGMALATS-L2B1 & $\lambda=0.1$, $\beta=10^3 $, $b=1000$, $\eta=10$$\lambda_{\text{reg}} = 1$, $s=10$. \\
SFGMALATS-L3B1 & $\lambda=0.5$, $\beta=10^3 $, $b=1000$, $\eta=10$, $\lambda_{\text{reg}} = 1$, $s=10$. \\
SFGMALATS-L4B1 & $\lambda=1$, $\beta=10^3 $, $b=1000$, $\eta=10$, $\lambda_{\text{reg}} = 1$, $s=10$. \\
SFGMALATS-L1B2 & $\lambda=0.01$, $\beta=1$, $b=1000$, $\eta=10$, $\lambda_{\text{reg}} = 1$, $s=10$. \\
SFGMALATS-L2B2 & $\lambda=0.1$, $\beta=1$, $b=1000$, $\eta=10$, $\lambda_{\text{reg}} = 1$, $s=10$. \\
SFGMALATS-L3B2 & $\lambda=0.5$, $\beta=1$, $b=1000$, $\eta=10$, $\lambda_{\text{reg}} = 1$, $s=10$. \\
SFGMALATS-L4B2 & $\lambda=1$, $\beta=1$, $b=1000$, $\eta=10$, $\lambda_{\text{reg}} = 1$, $s=10$. \\
    \hline
  \end{tabular}
\end{table}

\begin{table}[H]
  \centering
    \caption{Detailed Description of the Algorithms in the Neural Experiments: Adult (with $\ell_2$ loss).}
  \begin{tabular}{c|l}
  \hline
\textbf{Algorithm} & \textbf{Description} \\
 \hline
    Neural-$\epsilon$-Greedy & $\epsilon=0.01$.\\
    LMCTS & $\beta^{-1} =0.00001$, LeakyReLU. \\
    FGLMCTS & $\beta^{-1} =0.00001$, LeakyReLU, $\lambda=0.1$, $b=10$. \\ 
    SFGLMCTS & $\beta^{-1} =0.00001$, LeakyReLU, $\lambda=0.1$, $b=10$, $s=10$. \\
    NeuralUCB & $\lambda_{\text{reg}}=0.001$.\\
    NeuralTS & ReLU, $\nu = 0.000001$, $\lambda_{\text{reg}}=0.01$\\ 
    FG-NeuralTS&  ReLU, $\nu = 0.000001$, $\lambda_{\text{reg}}=0.01$, $\lambda=1$, $b=1$ .\\
SFG-NeuralTS& ReLU, $\nu = 0.000001$, $\lambda_{\text{reg}}=0.01$, $\lambda=1$, $b=1$, $s=10$.\\
    \hline
  \end{tabular}
\end{table}

\begin{table}[H]
  \centering
    \caption{Detailed Description of the Algorithms in the Neural Experiments: Covertype ($\ell_2$ loss).}
  \begin{tabular}{c|l}
  \hline
\textbf{Algorithm} & \textbf{Description} \\
 \hline
    Neural-$\epsilon$-Greedy & $\epsilon=0.4$.\\
    LMCTS & BCE loss, $\beta^{-1} =0.000001$, LeakyReLU. \\
    FGLMCTS & BCE loss,  $\beta^{-1} =0.00001$, LeakyReLU, $\lambda=0.05$, $b=10$. \\ 
    SFGLMCTS & BCE loss,  $\beta^{-1} =0.00001$, LeakyReLU, $\lambda=0.05$, $b=10$, $s=10$. \\
    NeuralUCB & BCE loss, $\lambda_{\text{reg}}=0.001$.\\
        NeuralTS &  ReLU, $\nu=0.01$,  $\lambda_{\text{reg}}=0.001$.
\\
    FG-NeuralTS&  ReLU, $\nu=0.01$, $\lambda_{\text{reg}}=0.001$, $\lambda=1.0$, $b=1.0$.\\
    SFG-NeuralTS&  ReLU, $\nu=0.01$, $\lambda_{\text{reg}}=0.001$, $\lambda=1.0$, $b=1.0$, $s=10$.\\
    \hline
  \end{tabular}
\end{table}

\begin{table}[H]
  \centering
    \caption{Detailed Description of the Algorithms in the Neural Experiments: Magic ($\ell_2$ loss).}
  \begin{tabular}{c|l}
  \hline
\textbf{Algorithm} & \textbf{Description} \\
 \hline
    Neural-$\epsilon$-Greedy & $\epsilon=0.1$.\\
    LMCTS & $\beta^{-1} =0.001$, ReLU. \\
    FGLMCTS & $\beta^{-1} =0.001$, ReLU, $\lambda=0.05$, $b=10$. \\ 
    SFGLMCTS & $\beta^{-1} =0.001$, ReLU, $\lambda=0.05$, $b=10$, $s=10$. \\
    NeuralUCB & $\lambda_{\text{reg}}=0.001$.\\
        NeuralTS &  ReLU, $\nu=0.00001$.\\
    FG-NeuralTS&  ReLU, $\nu=0.00001$, $\lambda = 1$, $b=1$.\\
    SFG-NeuralTS&  ReLU, $\nu=0.00001$, $\lambda = 1$, $b=1$, $s=10$.\\
    \hline
  \end{tabular}
\end{table}

\begin{table}[H]
  \centering
    \caption{Detailed Description of the Algorithms in the Neural Experiments: Mushroom ($\ell_2$ loss).}
  \begin{tabular}{c|l}
  \hline
\textbf{Algorithm} & \textbf{Description} \\
 \hline
    Neural-$\epsilon$-Greedy & $\epsilon=0.1$.\\
    LMCTS & $\beta^{-1} =0.00001$, LeakyReLU. \\
    FGLMCTS & $\beta^{-1} =0.00001$, LeakyReLU, $\lambda=0.05$, $b=10$. \\ 
    SFGLMCTS & $\beta^{-1} =0.00001$, LeakyReLU, $\lambda=0.05$, $b=10$, $s=10$. \\
    NeuralUCB & $\lambda_{\text{reg}}=0.00001$.\\
        NeuralTS & ReLU, $\nu = 0.00001$, $\lambda_{\text{reg}}=0.00001$.\\
    FG-NeuralTS& ReLU, $\nu = 0.00001$, $\lambda_{\text{reg}}=0.00001$, $\lambda = 1$, $b=1$.\\
    SFG-NeuralTS& ReLU, $\nu = 0.00001$, $\lambda_{\text{reg}}=0.00001$, $\lambda = 1$, $b=1$, $s=10$.\\
    \hline
  \end{tabular}
\end{table}

\begin{table}[H]
  \centering
    \caption{Detailed Description of the Algorithms in the Neural Experiments: Shuttle ($\ell_2$ loss).}
  \begin{tabular}{c|l}
  \hline
\textbf{Algorithm} & \textbf{Description} \\
 \hline
    Neural-$\epsilon$-Greedy & $\epsilon=0.01$.\\
    LMCTS & $\beta^{-1} =0.0001$, LeakyReLU. \\
    FGLMCTS & $\beta^{-1} =0.0001$, LeakyReLU, $\lambda=0.05$, $b=10$. \\ 
    SFGLMCTS & $\beta^{-1} =0.0001$, LeakyReLU, $\lambda=0.05$, $b=10$, $s=10$. \\
    NeuralUCB & $\lambda_{\text{reg}}=0.0001$.\\
        NeuralTS & ReLU, $\nu = 0.000001$, $\lambda_{\text{reg}}=0.01$.\\
    FG-NeuralTS&  ReLU, $\nu = 0.000001$, $\lambda_{\text{reg}}=0.01$, $\lambda = 1$, $b=1$.\\
    SFG-NeuralTS& ReLU, $\nu = 0.000001$, $\lambda_{\text{reg}}=0.01$, $\lambda = 1$, $b=1$, $s=1-$. \\
    \hline
  \end{tabular}
\end{table}

\begin{table}[H]
  \centering
    \caption{Detailed Description of the Algorithms in the Neural Experiments: \textsc{MNIST\_784} ($\ell_2$ loss).}
  \begin{tabular}{c|l}
  \hline
\textbf{Algorithm} & \textbf{Description} \\
 \hline
    Neural-$\epsilon$-Greedy & $\epsilon=0.01$.\\
    LMCTS & $\beta^{-1} =0.00001$, LeakyReLU. \\
    FGLMCTS & $\beta^{-1} =0.00001$, LeakyReLU, $\lambda=0.05$, $b=10$. \\ 
    SFGLMCTS & $\beta^{-1} =0.00001$, LeakyReLU, $\lambda=0.05$, $b=10$, $s=10$. \\
    NeuralUCB & $\lambda_{\text{reg}}=0.00001$.\\
        NeuralTS & ReLU, $\nu = 0.00001$, $\lambda_{\text{reg}}=0.00001$.\\
    FG-NeuralTS& ReLU, $\nu = 0.00001$, $\lambda_{\text{reg}}=0.00001$, $\lambda = 0.05$, $b=10$.\\
    SFG-NeuralTS& ReLU, $\nu = 0.00001$, $\lambda_{\text{reg}}=0.00001$, $\lambda = 0.05$, $b=10$, $s=10$.\\
    \hline
  \end{tabular}
\end{table}

\begin{table}[H]
  \centering
    \caption{Detailed Description of the Algorithms in the Neural Experiments: \textsc{Financial} ($\ell_2$ loss).}
  \begin{tabular}{c|l}
  \hline
\textbf{Algorithm} & \textbf{Description} \\
 \hline
    Neural-$\epsilon$-Greedy & $\epsilon=0.1$. \\
    LMCTS & $\beta^{-1} = 0.001$, LeakyReLU.  \\
    FGLMCTS & $\beta^{-1} = 0.001$, LeakyReLU, $\lambda=0.1, b=1.0$. \\ 
    SFGLMCTS & $\beta^{-1} = 0.001$, LeakyReLU, $\lambda=0.1, b=1.0, s=10$.   \\
    NeuralUCB & $\lambda_{\text{reg}}=0.0001$ \\
        NeuralTS & ReLU, $\nu=0.000001$, $\lambda_{\text{reg}}=0.01$.\\
    FG-NeuralTS& ReLU, $\nu=0.000001$, $\lambda_{\text{reg}}=0.01$, $\lambda=0.01$, $b=1$.\\
    SFG-NeuralTS& ReLU, $\nu=0.000001$, $\lambda_{\text{reg}}=0.01$, $\lambda=0.01$, $b=1$, $s=10$.\\
    \hline
  \end{tabular}
\end{table}

\begin{table}[H]
  \centering
    \caption{Detailed Description of the Algorithms in the Neural Experiments: \textsc{Jester} ($\ell_2$ loss).}
  \begin{tabular}{c|l}
  \hline
\textbf{Algorithm} & \textbf{Description} \\
 \hline
    Neural-$\epsilon$-Greedy & $\epsilon=0.1$. \\
    LMCTS & $\beta^{-1} = 0.001$, LeakyReLU.  \\
    FGLMCTS & $\beta^{-1} = 0.001$, LeakyReLU, $\lambda = 0.1, b=1.0$. \\ 
    SFGLMCTS & $\beta^{-1} = 0.001$, LeakyReLU, $\lambda = 0.1, b=1.0, s=10$.   \\
    NeuralUCB & $\lambda_{\text{reg}} = 0.0001$.  \\
        NeuralTS & ReLU, $\nu = 0.000001$, $\lambda_{\text{reg}}=0.01$.\\
    FG-NeuralTS& ReLU, $\nu = 0.000001$, $\lambda_{\text{reg}}=0.01$, $\lambda=1$, $b=1$.\\
    SFG-NeuralTS& ReLU, $\nu = 0.000001$, $\lambda_{\text{reg}}=0.01$, $\lambda=1$, $b=1$, $s=10$.\\
    \hline
  \end{tabular}
\end{table}

\begin{table}[H]
  \centering
    \caption{Detailed Description of the Algorithms in the Neural Experiments: \textsc{RestaurantRatings} ($\ell_2$ loss).}
  \begin{tabular}{c|l}
  \hline
\textbf{Algorithm} & \textbf{Description} \\
 \hline
    Neural-$\epsilon$-Greedy & $\epsilon=0.01$. \\
    LMCTS & $\beta^{-1} = 0.0001$, LeakyReLU.  \\
    FGLMCTS & $\beta^{-1} = 0.0001$, LeakyReLU, $\lambda = 0.01, b=1.0$. \\ 
    SFGLMCTS & $\beta^{-1} = 0.0001$, LeakyReLU, $\lambda = 0.01, b=1.0, s=10$.   \\
    NeuralUCB & $\lambda_{\text{reg}} = 0.01$.  \\
        NeuralTS & ReLU, $\nu = 0.00001$, $\lambda_{\text{reg}}=0.01$.\\
    FG-NeuralTS& ReLU, $\nu = 0.00001$, $\lambda_{\text{reg}}=0.01$, $\lambda=0.01$, $b=1.0$.\\
    SFG-NeuralTS& ReLU, $\nu = 0.00001$, $\lambda_{\text{reg}}=0.01$, $\lambda=0.01$, $b=1.0$, $s=10$.\\
    \hline
  \end{tabular}
\end{table}

\begin{table}[H]
  \centering
    \caption{Detailed Description of the Algorithms in the Neural Experiments: \textsc{CIFAR-10} ($\ell_2$ loss).}
  \begin{tabular}{c|l}
  \hline
\textbf{Algorithm} & \textbf{Description} \\
 \hline
    Neural-$\epsilon$-Greedy & $\epsilon=0.01$. \\
    LMCTS & $\beta^{-1} = 0.000001$, LeakyReLU.  \\
    FGLMCTS & $\beta^{-1} = 0.000001$, LeakyReLU, $\lambda = 0.01, b=1.0$. \\ 
    SFGLMCTS & $\beta^{-1} = 0.000001$, LeakyReLU, $\lambda = 0.01, b=1.0, s=10$.   \\
    NeuralUCB & $\lambda_{\text{reg}} = 0.01$.  \\
        NeuralTS & ReLU, $\nu = 0.00001$, $\lambda_{\text{reg}}=0.01$.\\
    FG-NeuralTS& ReLU, $\nu = 0.00001$, $\lambda_{\text{reg}}=0.01$, $\lambda=0.01$, $b=1.0$.\\
    SFG-NeuralTS& ReLU, $\nu = 0.00001$, $\lambda_{\text{reg}}=0.01$, $\lambda=0.01$, $b=1.0$, $s=10$.\\
    \hline
  \end{tabular}
\end{table}

\section{Further Experimental Results}
\label{sec: appendix: further experimental results}
This section lists all our main experimental results.

\subsection{Linear Contextual Bandits}
This subsection provides our main experimental results for our linear contextual bandit experiments.

\textbf{Low-dimensional setting.} For the $d=20$ setting, \cref{table:linear_bandits_low_lambda_beta_10,table:linear_bandits_low_lambda_beta_1} provide our experiments over linear contextual bandits for $\beta=10^3 $ and $\beta=1$ (respectively), where we modify the feel-good parameter $\lambda$ in the loss-likelihood for values $\lambda\in\{0.01,0.1,0.5,1.0\}$. The corresponding vanilla Thompson sampling cumulative regret values are equivalent to setting $\lambda=0$. For these values, we direct the reader to \cref{tab:linear-log-results}. These experiments strongly indicate that setting $\lambda=0.01$ outperforms the case where $\lambda=0$ (as well as other choices of $\lambda$). Especially for SFGMALATS, which enjoys optimal regret in this setting, the regret is considerably smaller than that of LMCTS and FGLMCTS.

\begin{table}[H]
  \centering
    \caption{Cumulative Regret incurred by the Linear bandits in $d=20$ and $\beta=10^3 $. Values reported are the mean over 10 independent trials with standard deviation.}\label{table:linear_bandits_low_lambda_beta_10}
\begin{tabular}{lcccc}
    \toprule
    \textbf{Algorithm} & $\lambda=0.01$ & $\lambda=0.1$ & $\lambda=0.5$ & $\lambda=1.0$  \\
    \midrule
    FGLMCTS & 62.7 $\pm$ 12.0 & \textbf{50.8} $\pm$ 15.8 & 235.8 $\pm$ 185.8 & 612.7 $\pm$ 340.4 \\
FGMALATS & 62.7 $\pm$ 25.5 & 63.7 $\pm$ 14.5 & 212.8 $\pm$ 170.0 & 528.3 $\pm$ 377.2\\
PFGLMCTS & 132.7 $\pm$ 19.1 & 140.3 $\pm$ 24.4 & 193.2 $\pm$ 47.2 & 332.9 $\pm$ 221.2\\
PSFGLMCTS & 137.1 $\pm$ 27.5 & 130.5 $\pm$ 13.8 & 183.1 $\pm$ 37.3 & 286.1 $\pm$ 134.1\\
SFGLMCTS & 65.2 $\pm$ 12.3 & 82.0 $\pm$ 32.6 & 223.8 $\pm$ 190.1 & 546.0 $\pm$ 357.6\\
SFGMALATS & \textbf{56.2} $\pm$ 22.8 & 68.4 $\pm$ 26.2 & \textbf{173.4} $\pm$ 110.1 & \textbf{486.2} $\pm$ 347.1\\
    \bottomrule
  \end{tabular}
\end{table}

\begin{table}[H]
  \centering
    \caption{Cumulative Regret incurred by the Linear bandits in $d=20$ and $\beta=1$. Values reported are the mean over 10 independent trials with standard deviation.}\label{table:linear_bandits_low_lambda_beta_1}
  \begin{tabular}{lcccc}
  \toprule 
\textbf{Algorithm} & $\lambda=0.01$ & $\lambda=0.1$ & $\lambda=0.5$ & $\lambda=1$ \\
 \midrule
    FGLMCTS & 2019.4 $\pm$ 163.3 & 2031.8 $\pm$ 163.3 & 2058.1 $\pm$ 203.8 & 2164.5 $\pm$ 203.2 \\
FGMALATS & \textbf{61.5} $\pm$ 32.9 & \textbf{58.1} $\pm$ 11.0 & \textbf{337.7} $\pm$ 162.4 & 760.0 $\pm$ 310.2\\
PFGLMCTS & 2177.0 $\pm$ 168.8 & 2070.4 $\pm$ 189.0 & 2419.6 $\pm$ 384.9 & 2472.2 $\pm$ 409.3\\
PSFGLMCTS & 2196.4 $\pm$ 338.4 & 2252.5 $\pm$ 286.5 & 2296.2 $\pm$ 324.4 & 2157.8 $\pm$ 414.3\\
SFGLMCTS & 2003.2 $\pm$ 162.2 & 2018.2 $\pm$ 170.6 & 2062.7 $\pm$ 181.2 & 2112.4 $\pm$ 211.7\\
SFGMALATS & 70.3 $\pm$ 43.5 & 669.1 $\pm$ 1839.6 & 366.0 $\pm$ 183.6 & \textbf{758.7} $\pm$ 303.5\\

    \bottomrule
  \end{tabular}
\end{table}

\textbf{High-dimensional setting.} For the $d=40$ setting, \cref{table:linear_bandits_high_lambda_beta_10,table:linear_bandits_high_lambda_beta_1} provide our experiments over linear contextual bandits for $\beta=10^3 $ and $\beta=1$ (respectively), where we modify the feel-good parameter $\lambda$ in the loss-likelihood for values $\lambda\in\{0.01,0.1,0.5,1.0\}$. As before, we see that setting the feel-good parameter $\lambda$ to be $0.01$ outperforms the case where $\lambda=0$ (as well as other choices of $\lambda)$. As before, this particularly carries forward for FGMALATS and SFGMALATS which experience the lowest cumulative regrets.

\begin{table}[ht]
\centering
\caption{Cumulative Regret incurred by the Linear bandits in $d=40$ and $\beta=10^3 $. Values reported are the mean over 10 independent trials with standard deviation.}
\begin{tabular}{lcccc}
\toprule
\textbf{Algorithm} & $\lambda=0.01$ & $\lambda=0.1$ & $\lambda=0.5$ & $\lambda=1.0$ \\
\midrule
FGLMCTS & $131.2\pm 11.7$ & $122.1\pm 15.1$ & $178.4\pm 21.0$ & $241.1\pm 33.8$ \\
FGMALATS & $100.4\pm 20.9$ & \textbf{99.0} $\pm$ 20.6 & $153.9\pm 25.1$ & $269.3\pm122.6$ \\
PFGLMCTS & $150.1\pm 15.8$ & $135.5\pm 15.4$ & $179.8\pm 23.2$ & $274.2\pm 39.0$ \\
PSFGLMCTS & $142.0\pm 15.1$ & $169.5\pm 20.2$ & $208.4\pm 24.8$ & $277.9\pm 43.1$ \\
SFGLMCTS & $126.2\pm 21.7$ & $131.1\pm  6.5$ & $176.3\pm 18.7$ & $262.8\pm 76.5$ \\
SFGMALATS & \textbf{98.2} $\pm$ 19.5 & $102.6\pm 14.4$ & \textbf{143.0} $\pm$ 20.2 & \textbf{215.1} $\pm$ 54.0 \\

\bottomrule
\end{tabular}
\label{table:linear_bandits_high_lambda_beta_10}
\end{table}

\begin{table}[H]
  \centering
    \caption{Cumulative Regret incurred by the Linear bandits in $d=40$ and $\beta=1$. Values reported are the mean over 10 independent trials with standard deviation.}\label{table:linear_bandits_high_lambda_beta_1}
\begin{tabular}{lcccc}
\toprule
\textbf{Algorithm} & $\lambda=0.01$ & $\lambda=0.1$ & $\lambda=0.5$ & $\lambda=1.0$  \\
\midrule
FGLMCTS & $4527.6 \pm 169.2$ & $4514.4 \pm 179.7$ & $4447.9 \pm 147.8$ & $4400.5 \pm 151.3$ \\
FGMALATS & \textbf{94.3} $\pm$ 23.4 & \textbf{98.2} $\pm$ 24.2 & $140.8 \pm 25.4$ & $428.7 \pm 166.7$ \\
PFGLMCTS & $4930.6 \pm 321.8$ & $4802.5 \pm 444.9$ & $4725.0 \pm 491.9$ & $4642.6 \pm 328.6$ \\
PSFGLMCTS & $4855.1 \pm 350.1$ & $4852.9 \pm 351.2$ & $4708.2 \pm 300.2$ & $4665.1 \pm 435.1$ \\
SFGLMCTS & $4499.2 \pm 118.1$ & $4486.4 \pm 135.7$ & $4406.7 \pm 139.6$ & $4395.4 \pm 166.3$ \\
SFGMALATS & $100.1 \pm 19.2$ & $110.6 \pm 12.7$ & \textbf{130.1} $\pm$ 16.7 & \textbf{425.7} $\pm$ 155.3 \\
\bottomrule
\end{tabular}
\end{table}

\subsubsection{Posterior Analysis}

 A central theme of our work is that given
MCMC-based TS variants compute an approximate posterior $\tilde{\pi}_t$ through sampling procedures, we wish to measure how faithfully these MCMC approximations capture the true Bayesian posterior? To disentangle these factors, following \cite{riquelme2018deep}, we conduct a controlled posterior quality analysis where all algorithms observe identical data, allowing us to isolate approximation quality from exploration strategy. Our analysis is motivated by the following observation: if $\tilde{\pi}_t$  is consistently close to the true posterior $\pi_t$ across timesteps and arms, then the MCMC approximation preserves the Bayesian reasoning that underpins TS' theoretical properties. Conversely, significant divergence between $\tilde{\pi}_t$  and $\pi_t$ suggests that the algorithm's beliefs are systematically biased, potentially undermining exploration-exploitation balance.

We consider a linear contextual bandit with $K = 6$ arms in $d = 20$ dimensions, following the model proposed by \cite{russo2016information}. For each arm $i\in\{1,\dots,K\}$, the reward function is:
$$r_{i,t} = X_t^\top \beta_i + \epsilon_t, \quad \epsilon_t \sim \mathcal{N}(0, \sigma^2)$$
where $X_t \in \R^d$ is the context vector and $\beta_i \in \R^d$ is the arm-specific parameter. We set the prior $\beta_i\sim \cN(0,\lambda^{-1}\mI d)$ with $\lambda = 1.0$ and observation noise $\sigma = 0.5$.

To ensure a fair comparison across algorithms, we generate a fixed dataset that all algorithms observe. We first sample \emph{true parameters} $\beta_i \sim \cN(0,\mI_d)$ for each arm $i$, representing the ground truth (which is unknown to the algorithms). To generate the contexts, we form $T=2000$ context vectors with planted correlation structures to induce non-isotropic posteriors. We begin with $X_t^{(0)} \sim \cN(0, \mI_d)$ and apply transformations $X_t^{(1)} \gets 1.7 \cdot X_t^{(1)}$ and $X_t^{(2)} \gets 0.55 \cdot X_t^{(2)} + 0.6 \cdot X_t^{(1)}$. This creates elliptical posteriors as seen in \cref{fig:posterior analysis}. Next, to generate the rewards, for each time step $t$, each algorithm chooses action $a_t$ and observes reward $r_t = X_t^\top \beta_{a_t} + \epsilon_t$ where $\epsilon_t\sim \cN(0,\sigma^2)$.

\textbf{Computing the true posterior.}
For the linear-Gaussian model, the posterior distribution admits a closed-form solution. Let $D_t^{(i)} = \{(X_s, r_s): a_s = i, s\leq t\}$ denote the data collected for arm $i$ up to time $t$, with $n_t^{(i)} = |D_t^{(i)}|$ observations. The posterior for $\beta_i$ is Gaussian with \begin{equation}\Lambda_{\text{post}}^{(i)} = \lambda I_d + \frac{\eta}{\sigma^2} \sum_{(X,r) \in D_t^{(i)}} X X^\top,\end{equation}

\begin{equation}\Sigma_{\text{post}}^{(i)} = (\Lambda_{\text{post}}^{(i)})^{-1},\end{equation}

\begin{equation}\mu_{\text{post}}^{(i)} = \Sigma_{\text{post}}^{(i)} \cdot \frac{\eta}{\sigma^2} \sum_{(X,r) \in D_t^{(i)}} X r,\end{equation}

where $\eta$ is the inverse temperature parameter. This analytical posterior for each $i$ is $(\pi^\star_t)^{(i)} \sim \cN(\mu_{post}^{(i)}, \Sigma_{post}^{(i)})$, which serves as our ground truth for comparison. The linear bandit problem with block-diagonal feature map $\phi(X,a)$  results in independent posteriors for each arm. When arm $j\neq i$ is played, the posterior $\beta_i$ remains unchanged. This allows us to analyze each arm's posterior separately.\looseness=-1

For each algorithm, we extract posterior samples at the final timestep $T$. LinTS maintains the exact posterior analytically via Sherman-Morrison updates. Therefore, we directly sample from $\cN(\mu_{\text{LinTS}}, \Sigma_{\text{LinTS}})$ to obtain 1500 posterior samples. However, for the MCMC variants such as LMCTS, MALATS, and preconditioned variants, we extract samples by repeatedly invoking the underlying MCMC sampling algorithm. Each call performs $K = 100$ MCMC iterations (Langevin steps, MALA steps, or leapfrog integrations for HMC), producing one sample from the approximate posterior $\tilde{\pi}_t$.  We provide 2D scatter plots of $(\beta_1, \beta_2)$ projections showing 1500 samples from $\pi^\star_t$ (green) and $\tilde{\pi}_t$ (red). Overlapping clouds indicate good approximation, whereas separation reveals bias.\looseness=-1

\begin{figure}[hbt!]
    \centering
    \includegraphics[width=1\linewidth]{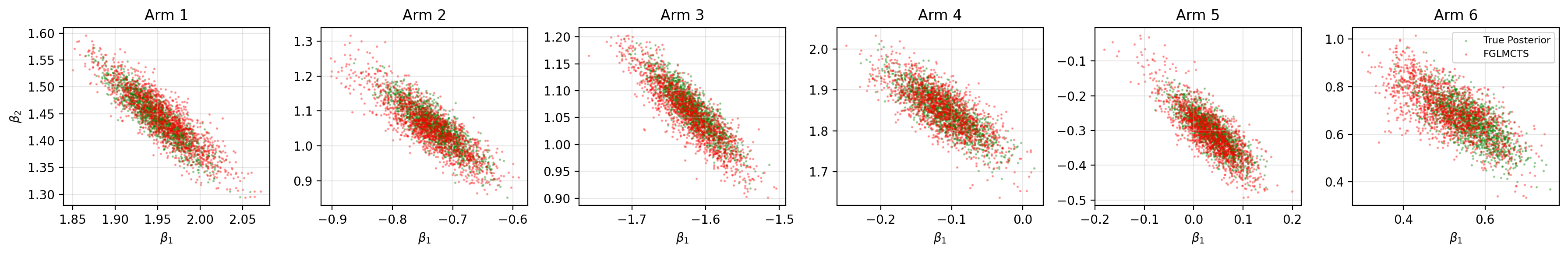}
    \includegraphics[width=1\linewidth]{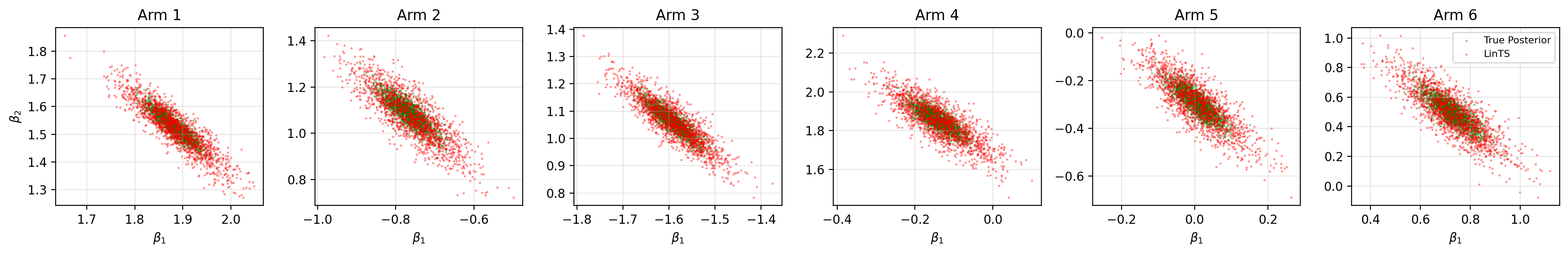}
    \includegraphics[width=1\linewidth]{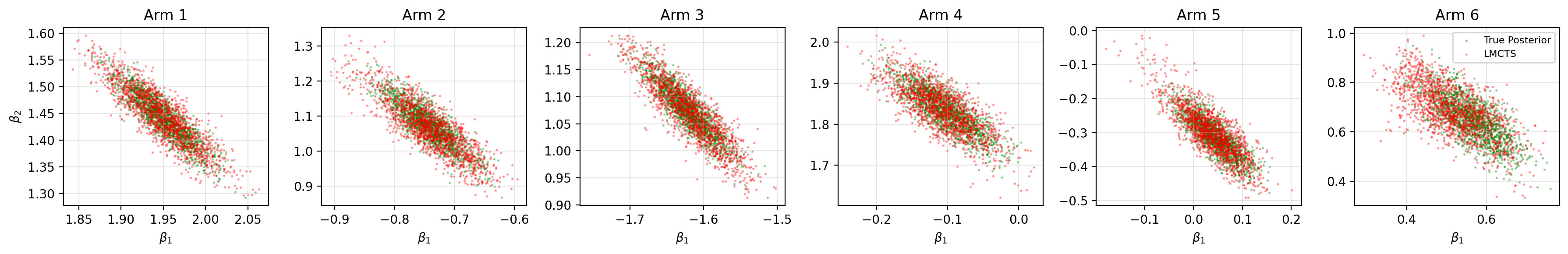}
    \includegraphics[width=1\linewidth]{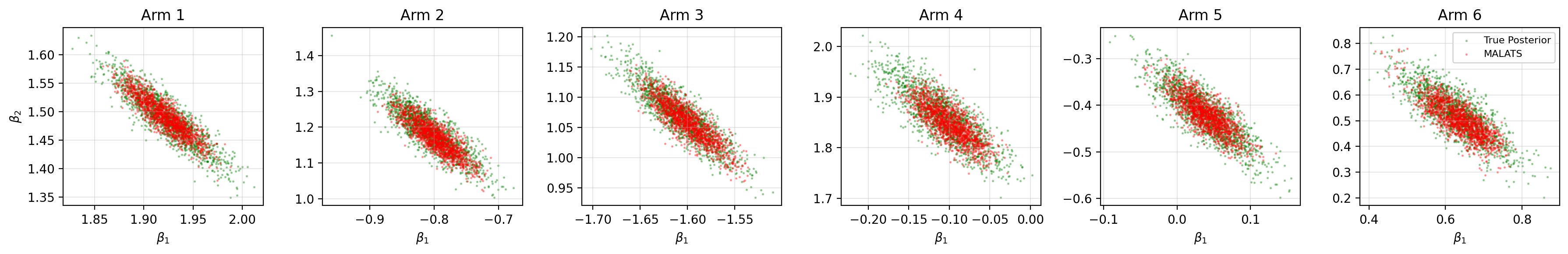}
    \includegraphics[width=1\linewidth]{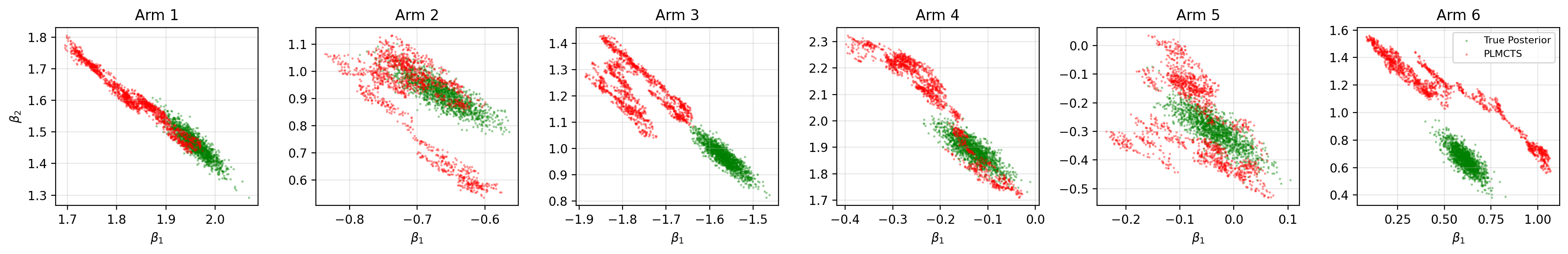}
    \caption{MCMC Sampling with the True Linear Posteriors. From top to bottom, we plot the \textcolor{ForestGreen}{true posteriors} in \textcolor{ForestGreen}{green}, and the \textcolor{red}{sampled posteriors} of Feel-Good LMCTS, Linear TS, LMCTS, MALATS, and PLMCTS in \textcolor{red}{blue}.}
    \label{fig:posterior analysis}
\end{figure}

Based on \cref{fig:posterior analysis}, we see that an algorithm may exhibit high cumulative regret due to under-exploration  or due to poor approximation. Posterior analysis disambiguates these failure modes. For instance, PLMCTS' approximation of the true posterior is most egregious and displays a bias in its approximation; conversely, MALATS' and LinTS' posterior approximation are tight and match the true posterior. While LMCTS and FGLMCTS appear to be unbiased estimators of the true posterior, their variances can be much larger in under-explored arms.

\subsection{Logistic Bandits} 
This subsection provides our main experimental results for our logistic bandit experiments.

\textbf{Low-dimensional setting.} For the $d=20$ setting, \cref{table:logistic_bandits_low_lambda_beta_1,table:logistic_bandits_low_lambda_beta_10} provide our experiments over logistic bandits for $\beta=10^3 $ and $\beta=1$ (respectively), where we modify the feel-good parameter $\lambda$ in the loss-likelihood for values $\lambda\in\{0.01,0.1,0.5,1.0\}$. As before, we see that setting the feel-good parameter $\lambda$ to be 0.01 in FGMALATS and SFGMALATS outperforms the case where $\lambda=0$ (as well as other choices of $\lambda)$.

\begin{table}[H]
  \centering
    \caption{Cumulative Regret incurred by the Logistic bandits in $d=20$ and $\beta=10^3 $. Values reported are the mean over 10 independent trials with standard deviation.}\label{table:logistic_bandits_low_lambda_beta_10}
\begin{tabular}{lcccc}
\toprule
\textbf{Algorithm} & $\lambda=0.01$ & $\lambda=0.1$ & $\lambda=0.5$ & $\lambda=1.0$  \\
\midrule
FGLMCTS & \textbf{263.2} $\pm$ 70.4 & 285.2 $\pm$ 58.0 & \textbf{229.8} $\pm$ 37.0 & 265.6 $\pm$ 70.0 \\
FGMALATS & 293.1 $\pm$ 96.1 & 284.7 $\pm$107.2 & 295.6 $\pm$106.5 & 288.2 $\pm$ 95.9 \\
PFGLMCTS & 913.6 $\pm$322.3 & 791.5 $\pm$208.8 & 849.6 $\pm$230.2 & 868.9 $\pm$226.6 \\
PSFGLMCTS & 821.9 $\pm$246.9 & 755.2 $\pm$203.7 & 691.2 $\pm$127.6 & 733.1 $\pm$204.1 \\
SFGLMCTS & 295.8 $\pm$ 63.2 & \textbf{297.8} $\pm$ 78.3 & 305.6 $\pm$109.9 & 242.3 $\pm$ 58.1 \\
SFGMALATS & 297.7 $\pm$ 81.1 & 326.1 $\pm$ 52.5 & 283.5 $\pm$107.1 & \textbf{226.7} $\pm$ 67.5 \\

\bottomrule
\end{tabular}
\end{table}

\begin{table}[hbt!]
  \centering
    \caption{Cumulative Regret incurred by the Logistic bandits in $d=20$ and $\beta=1$. Values reported are the mean over 10 independent trials with standard deviation.}\label{table:logistic_bandits_low_lambda_beta_1}
\begin{tabular}{lcccc}
\toprule
\textbf{Algorithm} & $\lambda=0.01$ & $\lambda=0.1$ & $\lambda=0.5$ & $\lambda=1.0$  \\
\midrule
FGLMCTS & $794.0 \pm 169.3$ & $828.8 \pm 108.1$ & $838.1 \pm 101.5$ & $755.1 \pm 156.7$   \\
FGMALATS & $299.0 \pm  50.8$ & $282.7 \pm  49.5$ & \textbf{254.6} $\pm$ 121.9 & $245.5 \pm  90.5$   \\
PFGLMCTS & $2412.8 \pm 535.9$ & $2347.9 \pm 519.8$ & $2844.7 \pm1068.6$ & $3106.0 \pm 975.9$  \\
PSFGLMCTS & $2619.3 \pm1072.6$ & $2832.3 \pm 771.7$ & $2478.7 \pm 814.6$ & $3119.9 \pm 736.4$  \\
SFGLMCTS & $831.7 \pm 195.5$ & $884.1 \pm 159.9$ & $747.9 \pm  76.2$ & $842.3 \pm 118.3$   \\
SFGMALATS & \textbf{237.5} $\pm$ 88.9 & \textbf{241.4} $\pm$ 68.5 & $246.2 \pm 54.6$ & \textbf{233.0} $\pm$ 39.9   \\

\bottomrule
\end{tabular}
\end{table}

\textbf{High-dimensional setting.} For the $d=40$ setting, \cref{table:logistic_bandits_high_lambda_beta_1,table:logistic_bandits_high_lambda_beta_10} provide our experiments over logistic bandits for $\beta=10^3 $ and $\beta=1$ (respectively), where we modify the feel-good parameter $\lambda$ in the loss-likelihood for values $\lambda\in\{0.01,0.1,0.5,1.0\}$. Here, setting the feel-good parameter $\lambda$ to be either 0.01 or 0.1 favors each algorithm differently. These benefits are most evident in FGMALATS, SFGLMCTS, and SFGMALATS.

\begin{table}[H]
  \centering
    \caption{Cumulative Regret incurred by the Logistic bandits in $d=40$ and $\beta=10^3 $. Values reported are the mean over 10 independent trials with standard deviation.}\label{table:logistic_bandits_high_lambda_beta_10}
\begin{tabular}{lcccc}
\toprule
\textbf{Algorithm} & $\lambda=0.01$ & $\lambda=0.1$ & $\lambda=0.5$ & $\lambda=1.0$  \\
\midrule
FGLMCTS & $391.1 \pm 103.8$ & $415.6 \pm  66.3$ & $396.8 \pm  77.5$ & $354.9 \pm  73.6$   \\
FGMALATS & $394.3 \pm  69.5$ & \textbf{365.7} $\pm$ 50.0 & $385.0 \pm  51.1$ & $353.1 \pm  39.8$   \\
PFGLMCTS & $964.3 \pm 148.7$ & $1025.6 \pm 141.3$ & $954.8 \pm 230.7$ & $1009.6 \pm 173.7$  \\
PSFGLMCTS & $1006.8 \pm 242.5$ & $972.3 \pm 182.0$ & $999.3 \pm 167.4$ & $1014.0 \pm 193.9$  \\
SFGLMCTS & $413.6 \pm  64.8$ & $400.5 \pm  72.8$ & $371.7 \pm  65.9$ & \textbf{340.3} $\pm$ 77.7   \\
SFGMALATS & \textbf{413.1} $\pm$ 77.6 & $367.6 \pm  52.8$ & \textbf{356.6} $\pm$ 54.0 & $364.9 \pm  70.3$   \\

\bottomrule
\end{tabular}
\end{table}

\begin{table}[H]
  \centering
    \caption{Cumulative Regret incurred by the Logistic bandits in $d=40$ and $\beta=1$. Values reported are the mean over 10 independent trials with standard deviation.}\label{table:logistic_bandits_high_lambda_beta_1}
\begin{tabular}{lcccc}
\toprule
\textbf{Algorithm} & $\lambda=0.01$ & $\lambda=0.1$ & $\lambda=0.5$ & $\lambda=1.0$  \\
\midrule
FGLMCTS & $1486.4 \pm 114.0$ & $1531.5 \pm 128.0$ & $1606.5 \pm 120.0$ & $1521.5 \pm 153.4$ \\
FGMALATS & \textbf{359.6} $\pm$ 58.2 & $372.1 \pm 61.1$ & $401.0 \pm 56.6$ & $428.0 \pm 43.5$ \\
PFGLMCTS & $2315.0 \pm 207.2$ & $2523.4 \pm 222.3$ & $2787.6 \pm 348.2$ & $2816.6 \pm 281.8$ \\
PSFGLMCTS & $3206.1 \pm 200.1$ & $1572.3 \pm 100.1$ & $1532.0 \pm 230.7$ & $1723.7 \pm 139.9$  \\
SFGLMCTS & $475.3 \pm  46.4$ & $480.1 \pm  52.8$ & $527.7 \pm  85.5$ & $540.0 \pm  91.1$   \\
SFGMALATS & $464.4 \pm  78.2$ & \textbf{399.1} $\pm$ 62.1 & \textbf{416.0} $\pm$ 68.0 & \textbf{385.5} $\pm$ 58.2   \\

\bottomrule
\end{tabular}
\end{table}

\subsection{Wheel Bandits}\label{sec: wheel bandits}
This subsection provides our experimental results for the wheel bandit problem. \cref{table:wheel-bandit-results} shows our experiments for the wheel bandit in the $d=2$ setting, where we fix the feel-good parameter $\lambda=0.01$ and inverse temperature $\beta=10^3$. We observe that PSFGLMCTS, PFGLMCTS, and FGMALATS generally outperform the other variants. This is also true as $\delta=0.99$ and the performance of all algorithms increases (as expected). We observe here that the (surprising) performance of the preconditioned feel-good variants of LMCTS can be attributed to approximating the posterior shifted due to the preconditioning, which (due to the sensitivity of the wheel bandit task) is benign.

\begin{table}[H]
  \centering
    \caption{Cumulative Regret incurred by the Wheel bandits in $d=2$ dimensions. All the Feel-Good variants had $\lambda=0.01, \beta=10^3$, and all the smoothed Feel-Good variants have $s=10$. The values reported are the mean over 10 independent trials with standard deviation.}\label{table:wheel-bandit-results}

    \begin{tabular}{lcccc}
\toprule
\textbf{Algorithm} & $\delta=0.01$ & $\delta=0.1$ & $\delta=0.5$ & $\delta=0.99$  \\
\midrule
LMCTS & 3350.5 $\pm$ 10.0 & 3349.2 $\pm$ 12.6 & 2581.0 $\pm$ 9.9 & 316.0 $\pm$ 4.7\\
SVRGLMCTS & 3392.5 $\pm$ 16.9 & 3334.7 $\pm$ 42.2 & 2568.3  $\pm$ 23.6 & 313.3 $\pm$ 13.0 \\
PLMCTS & 3375.5 $\pm$ 18.1 & 3352.1 $\pm$ 2.7 & 2577.0 $\pm$ 14.2 & 320.6 $\pm$ 14.7 \\
MALATS & 3404.6 $\pm$ 12.8 & 3316.2 $\pm$ 14.5 & 2614.9 $\pm$ 21.9 & 314.1 $\pm$ 35.8 \\
SFGMALATS & 3383.9 $\pm$ 15.7 & 3339.5 $\pm$ 18.3 & 2569.8 $\pm$ 11.2 & 328.3 $\pm$ 7.5 \\
FGMALATS & 3349.2 $\pm$ 14.2 & \textbf{3284.9} $\pm$ 21.6 & \textbf{2561.3} $\pm$ 13.8 & 309.0 $\pm$ 6.9 \\
PSFGLMCTS & \textbf{3340.4} $\pm$ 16.8 & 3347.0 $\pm$ 19.4 & 2569.3 $\pm$ 12.7 & \textbf{294.3} $\pm$ 8.2 \\
FGLMCTS & 3417.7 $\pm$ 13.5 & 3350.6 $\pm$ 17.9 & 2562.0 $\pm$ 14.6 & 322.0 $\pm$ 9.1 \\
PFGLMCTS & 3349.5 $\pm$ 15.9 & 3357.4 $\pm$ 20.1 & 2577.8 $\pm$ 11.9 & 304.3 $\pm$ 7.8 \\
SFGLMCTS & 3344.2 $\pm$ 14.7 & 3378.1 $\pm$ 16.5 & 2577.8 $\pm$ 13.2 & 321.4 $\pm$ 8.5\\

\bottomrule
\end{tabular}
\end{table}

\subsection{Other Experiments}
\cref{lmc_variants_linear} provides details of the linear contextual bandits experiments with every LMC variant. We see here that LMCTS, PLMCTS, and SVRGLMCTS have the lowest standard deviation, and that LMCTS and SVRGLMCTS have the lowest regret. Next, \cref{mala_variants_linear} provides similar experiments over the MALA suite of algorithms. Our experiments show that MALATS, FGMALATS, and SFGMALATS conclusively outperform any of the experiments from the LMC suite. Moreover, PHMCTS, while competitive (and with minimal standard deviation), is not as performant as MALATS. Finally, we provide results for the LMC algorithms with damping for the linear contextual bandit setting in \cref{ulmc-variants}. These results show that overdamped (vanilla) Langevin methods significantly outperform any of the corresponding underdamped experiments.

\begin{table}[H]
  \centering
    \caption{LMC-TS variants ($d=20$ dimensions and $\beta=10^3 $)}\label{lmc_variants_linear}
  \begin{tabular}{lc}
  \toprule 
\textbf{Algorithm} & Cumulative regret \\
 \midrule
    LMCTS & \textbf{62.6} $\pm$ 9.5\\
    PLMCTS & 134.4 $\pm$ 19.9\\
    FGLMCTS-L1B1 & 263.2 $\pm$ 70.4 \\
    FGLMCTS-L2B1 & 285.2 $\pm$ 58.0 \\
    FGLMCTS-L3B1 & 229.8 $\pm$ 37.0 \\
    FGLMCTS-L4B1 & 265.6 $\pm$ 70.0\\
    PFGLMCTS-L1B1 & 913.6 $\pm$ 322.3 \\
    PFGLMCTS-L2B1 &791.5 $\pm$ 208.8\\
    PFGLMCTS-L3B1 & 849.6 $\pm$ 230.2\\
    PFGLMCTS-L4B1 & 868.9 $\pm$ 226.6\\
    SFGLMCTS-L1B1 & 295.8 $\pm$ 63.2\\
    SFGLMCTS-L2B1 &297.8 $\pm$ 78.3\\
    SFGLMCTS-L3B1 & 305.6 $\pm$ 109.9\\
    SFGLMCTS-L4B1 & 242.3 $\pm$ 58.1\\
    PSFGLMCTS-L1B1 & 821.9 $\pm$ 246.9\\
    PSFGLMCTS-L2B1 & 755.2 $\pm$ 203.7\\
    PSFGLMCTS-L3B1 & 691.2 $\pm$ 127.6\\
    PSFGLMCTS-L4B1 & 733.1 $\pm$ 204.1\\
    SVRGLMCTS & 73.2 $\pm$ 31.1\\
    \bottomrule
  \end{tabular}
\end{table}

\begin{table}[H]
  \centering
    \caption{MALA-TS/HMC-TS variants ($d=20$ and $\beta=10^3 )$}\label{mala_variants_linear}
  \begin{tabular}{lc}
  \toprule 
\textbf{Algorithm} & Cumulative regret \\
 \midrule
    MALATS & 61.3 $\pm$ 26.6\\
    FGMALATS-L1B1 & 62.6 $\pm$ 25.5\\
    FGMALATS-L2B1 & 63.7 $\pm$ 14.5\\
    FGMALATS-L3B1 & 212.8 $\pm$ 170.0\\
    FGMALATS-L4B1 & 528.3 $\pm$ 377.2 \\
    SFGMALATS-L1B1 & \textbf{56.2} $\pm$ 22.8\\
    SFGMALATS-L2B1 & 68.4 $\pm$ 26.2\\
    SFGMALATS-L3B1 & 173.4 $\pm$ 110.1\\
    SFGMALATS-L4B1 & 486.2 $\pm$ 347.1 \\
    HMCTS & 241.2 $\pm$ 107.0\\
    PHMCTS & 90.0 $\pm$ 9.2\\
    FGHMCTS & 262.5 $\pm$ 85.7\\
    PFGHMCTS & 282.7 $\pm$ 156.0\\
    SFGHMCTS & 395.9 $\pm$ 504.7\\
    PSFGHMCTS & 248.1 $\pm$ 147.4\\
    \bottomrule
  \end{tabular}
\end{table}

\begin{table}[H]
  \centering
    \caption{Other Benchmarked linear and logistic experiments (with $d=20$ and $\beta=10^{3}$)}\label{svrglinetc}
  \begin{tabular}{lcc}
  \toprule 
\textbf{Algorithm} & Cumulative regret in linear setting & Cumulative regret in logistic setting \\
 \midrule
    LinUCB & \textbf{73.0} $\pm$ 13.8 & \textbf{176.9} $\pm$ 41.9\\
    EpsGreedy & 19879 $\pm$ 7454.3 & 2899.2 $\pm$ 677.9\\
    LinTS & 114.7 $\pm$ 8.8 & 179.9 $\pm$ 53.2\\
    Uniform & 5010 $\pm$ 2500 & 5008 $\pm$ 2500\\
    \bottomrule
  \end{tabular}
\end{table}

\begin{table}[H]
  \centering
    \caption{Linear experiments on Damping (with $d=20,
    \lambda=0.01, \gamma=0.1$, and $\beta=10^{3}$)}\label{ulmc-variants}
  \begin{tabular}{lcc}
  \toprule 
\textbf{Algorithm} & Cumulative regret without damping & Cumulative regret with damping \\
 \midrule
    LMCTS/ULMCTS & \textbf{62.6} $\pm$ 9.5 & 2609.9 $\pm$ 238.1\\
FGLMCTS/UFGLMCTS & \textbf{62.7} $\pm$ 12.0 & 3155.0 $\pm$ 291.6\\
SFGLMCTS/USFGLMCTS & \textbf{65.2} $\pm$ 12.3 & 2781.2 $\pm$ 164.8\\

    \bottomrule
  \end{tabular}
\end{table}

\subsection{Neural Bandits} \label{appendix: neural bandit experiments}
This subsection provides our main experimental results for our neural bandit experiments. \cref{neural:cumulative_regret,neural:simple_regret} detail the results of our experiments on classification tasks on 7 UCI datasets ((\textsc{Adult}, \textsc{Shuttle}, \textsc{MagicTelescope}, \textsc{Mushroom}, \textsc{Covertype}), the \textsc{RestaurantRatings} (SCI) and \textsc{Jester} datasets), \textsc{Financial} dataset and two vision benchmarks, \textsc{MNIST\_784} and \textsc{CIFAR-10}. Our experiments show that FGLMCTS generally loses its competitive edge on the neural tasks: for instance, the simple/cumulative regret of SFGLMCTS and FGLMCTS is much worse than vanilla LMCTS on a number of tasks.

\begin{table}[H]
  \centering
    \caption{Cumulative Regret incurred by the Neural Models. Values reported are the mean over 5 independent trials with standard deviation.}\label{neural:cumulative_regret}\footnotesize
  \begin{tabular}{lccccc}
    \toprule
    \textbf{Dataset} & LMCTS & FGLMCTS & SFGLMCTS & Neural-$\epsilon$-Greedy & NeuralUCB \\
    \midrule
    Adult & 2456.6 $\pm$ 36.5 & 3505.0 $\pm$ 2257.5 & 4505.6 $\pm$ 2772.0 & 2658.0 $\pm$ 362.7 & \textbf{2444.4} $\pm$ 160.1 \\
    Covertype & 7594.0 $\pm$ 892.0 & 7567.8 $\pm$ 454.5 & 8006.0 $\pm$ 1035.6 & \textbf{4629.4} $\pm$ 132.3 & 4798.4 $\pm$ 102.2 \\
    Magic Telescope & 2220.0 $\pm$ 40.7 & 2197.6 $\pm$ 167.7 & \textbf{2193.2} $\pm$ 34.0 & 2005.2 $\pm$ 53.5 & 2112.2 $\pm$ 16.6  \\
    Mushroom & 324.6 $\pm$ 102.6 & 283.2 $\pm$ 20.2 & 440.6 $\pm$ 89.5 & \textbf{124.0} $\pm$ 41.4 & 145.6 $\pm$ 25.2 \\
    Shuttle & \textbf{210.2} $\pm$ 49.0 & 214.4 $\pm$ 51.6 & 1503.0 $\pm$ 2721.0 & 372.4 $\pm$ 425.8 & 2981.2 $\pm$ 4225.9 \\
    MNIST\_784 & 2854.6 $\pm$ 2945.9 & \textbf{2542.6} $\pm$ 2366.2 & 2935.0 $\pm$ 3349.5 & 3248.0 $\pm$ 1709.0 & 5442.8 $\pm$ 356.2 \\
    Financial & 474.6 $\pm$ 23.7 & 475.8 $\pm$ 7.6 & 478.4 $\pm$ 6.1 & 471.5 $\pm$ 4.7 & \textbf{431.5} $\pm$ 3.1\\
    Jester & 3468.8 $\pm$ 16.8 & 3469.7 $\pm$ 11.9 & 3497.7 $\pm$ 21.5 & \textbf{3492.3} $\pm$ 25.5 & 3505.9 $\pm$ 15.2\\
    RestaurantRatings & 8452.6 $\pm$ 237.8 & \textbf{8362.0} $\pm$ 389.8 & 8646.2 $\pm$ 268.9 & 8814.8 $\pm$ 2.2 & 8826.6 $\pm$ 1.8 \\
    CIFAR-10 & \textbf{16962.8} $\pm$ 693.2 & 17344.8 $\pm$ 297.8 & 17686.8 $\pm$ 914.2 & 17217.4 $\pm$ 356.3 & 20815.0 $\pm$ 203.3 \\
    \bottomrule
  \end{tabular}
\end{table}

\begin{table}[H]
  \centering
    \caption{Simple Regret incurred by the Neural Models. Values reported are the mean over 5 independent trials with standard deviation.}\label{neural:simple_regret}
  \vspace{0.5ex}
  \footnotesize
\begin{tabular}{lccccc}
\toprule
\textbf{Dataset} & LMCTS & FGLMCTS & SFGLMCTS & Neural-$\epsilon$-Greedy & NeuralUCB \\
\midrule
Adult & 121.6 $\pm$ 5.2 & 176.6 $\pm$ 98.0 & 220.6 $\pm$ 124.6 & 117.8 $\pm$ 7.2 & \textbf{113.6} $\pm$ 6.8    \\
Covertype & 232.0 $\pm$ 31.6 & 222.8 $\pm$ 38.1 & 245.6 $\pm$ 38.7 & \textbf{112.2} $\pm$ 12.2 & 120.8 $\pm$ 9.4    \\
Magic Telescope & 88.6 $\pm$ 2.4 & 91.4 $\pm$ 3.4 & \textbf{82.8} $\pm$ 8.9 & 86.4 $\pm$ 5.5 & 92.4 $\pm$ 10.8    \\
Mushroom & 1.2 $\pm$ 1.30 & 1.2 $\pm$ 1.30 & 1.2 $\pm$ 1.30 & \textbf{0.0} $\pm$ 0.0&  1.8 $\pm$ 2.49 \\
Shuttle & 3.6 $\pm$ 1.0 & \textbf{3.0} $\pm$ 0.9 & 13.2 $\pm$ 18.1 & 2.8 $\pm$ 0.8 & 110.6 $\pm$ 194.7  \\
MNIST\_784 & 109.4 $\pm$ 163.1 & \textbf{94.0} $\pm$ 109.6 & 124.6 $\pm$ 168.5 & 108.6 $\pm$ 98.0 & 235.6 $\pm$ 21.9 \\
Financial & 24.6 $\pm$ 2.5 & 24.1 $\pm$ 0.7 & 24.2 $\pm$ 1.5 & 24.3 $\pm$ 1.1 & \textbf{21.9} $\pm$ 0.7\\
Jester & 173.3 $\pm$ 4.9 & 172.4 $\pm$ 5.2 & \textbf{171.5} $\pm$ 6.9 & 175.4 $\pm$ 3.9 & 171.4 $\pm$ 2.8 \\
RestaurantRatings & 421.2 $\pm$ 13.4 & \textbf{415.0} $\pm$ 22.6 & 439.0 $\pm$ 11.3 & 444.0 $\pm$ 0.0 & 439.0 $\pm$ 0.0 \\
CIFAR-10 & \textbf{300.8} $\pm$ 19.1 & 314.6 $\pm$ 15.4 & 325.0 $\pm$ 25.5 & 323.8 $\pm$ 11.9 & 403.4 $\pm$ 8.6 \\
\bottomrule
\end{tabular}
\end{table}

\begin{table}[H]
  \centering
    \caption{Cumulative Regret incurred by the Neural Thompson Sampling models. Values reported are the mean over 5 independent trials with standard deviation.}\label{neuralTS: cum_regret}\footnotesize
  \begin{tabular}{lccc}
    \toprule
    \textbf{Dataset} & NeuralTS & FG-NeuralTS & SFG-NeuralTS \\
    \midrule
    Adult & 3128.0 $\pm$ 1187.5 & 2659.4 $\pm$ 73.0 & \textbf{2483.8} $\pm$ 29.5 \\
    Covertype & \textbf{5867.4} $\pm$ 319.6 & 12816.0 $\pm$ 13.8 & 8868.4 $\pm$ 271.0 \\
    Magic Telescope & \textbf{2089.2} $\pm$ 32.1 & 3732.0 $\pm$ 50.9 & 4668.2 $\pm$ 1625.1 \\
    Mushroom & \textbf{117.2} $\pm$ 33.4 & 4018.6 $\pm$ 100.3 & 5234.2 $\pm$ 1293.8 \\
    Shuttle & \textbf{757.0} $\pm$ 1178.9 & 2867.0 $\pm$ 264.4 & 8414.4 $\pm$ 3482.5 \\
    MNIST\_784 & \textbf{2505.4} $\pm$ 570.9 & 8960.4 $\pm$ 45.0 & 8990.2 $\pm$ 40.0 \\
    Financial & \textbf{6324.0} $\pm$ 917.4 & 7131.2 $\pm$ 196.8 & 6403.2 $\pm$ 365.8 \\
    Jester & \textbf{38.6} $\pm$ 42.3 & 2295.8 $\pm$ 1593.0 & 435.2 $\pm$ 306.0 \\
    RestaurantRatings & \textbf{8695.0} $\pm$ 33.2 & 8675.80 $\pm$ 19.31 & 8759.6 $\pm$ 23.7 \\
    CIFAR-10 & \textbf{20568.6} $\pm$ 81.5 & 21252.60 $\pm$ 162.90 & 21291.8 $\pm$ 203.9 \\
    \bottomrule
\end{tabular}
\end{table}

\begin{table}[H]
  \centering
    \caption{Simple Regret incurred by the Neural Thompson Sampling models. Values reported are the mean over 5 independent trials with standard deviation.}\label{neuralTS: sim_regret}\footnotesize
 \begin{tabular}{lccc}
    \toprule
    \textbf{Dataset} & NeuralTS & FG-NeuralTS & SFG-NeuralTS \\
    \midrule
    Adult & \textbf{107.6} $\pm$ 12.8 & 121.6 $\pm$ 6.1 & 127.8 $\pm$ 6.3 \\
    Covertype & \textbf{141.6} $\pm$ 9.6 & 424.6 $\pm$ 10.0 & 258.6 $\pm$ 20.1 \\
    Magic Telescope & \textbf{91.4} $\pm$ 7.9 & 174.6 $\pm$ 13.0 & 233.6 $\pm$ 80.3 \\
    Mushroom & \textbf{0.0} $\pm$ 0.0 & 187.6 $\pm$ 7.8 & 265.8 $\pm$ 56.8 \\
    Shuttle & \textbf{3.6} $\pm$ 3.6 & 111.4 $\pm$ 8.5 & 419.8 $\pm$ 168.8 \\
    MNIST\_784 & \textbf{36.4} $\pm$ 5.8 & 446.8 $\pm$ 7.2 & 449.0 $\pm$ 11.0 \\
    Financial & \textbf{311.2} $\pm$ 49.1 & 356.2 $\pm$ 8.5 & 311.2 $\pm$ 19.9 \\
    Jester & \textbf{0.0} $\pm$ 0.0 & 45.4 $\pm$ 45.7 & 0.2 $\pm$ 0.4 \\
    RestaurantRatings & {432.0} $\pm$ 3.8 & \textbf{430.80} $\pm$ 5.07 & 432.2 $\pm$ 5.8 \\
    CIFAR-10 & \textbf{397.4} $\pm$ 9.6 & 839.60 $\pm$ 9.7 & 6.50 $\pm$ 9.0 \\
    \bottomrule
\end{tabular}
\end{table}

\subsection{Ablation Studies for Preconditioning}
\label{sec:appendix-ablation-preconditioning}
We detail our ablation study on the effect of preconditioning in MCMC-TS and MCMC-FGTS. Preconditioning is widely believed to be useful for faster convergence of various optimization routines \citep{titsias2023optimal,li2016preconditioned,pidstrigach2023convergence,millard2025pearl,bhattacharya2023fast}. Nevertheless, our experiments reveal that in linear and logistic bandits, adding a preconditioner leads to generally higher regrets (with an exception in HMCTS, where adding a preconditioner has a generally positive effect on the cumulative regret).

\textbf{Linear Bandits.} \cref{table:precondition_linear} details the effect of preconditioning in our linear experiments.

\begin{table}[H]
  \centering
    \caption{Cumulative regret of linear bandits with and without preconditioning ($d=20,\beta=10^3 )$}\label{table:precondition_linear}
  \begin{tabular}{lcc}
  \toprule 
\textbf{Algorithm} & No Preconditioning & Preconditioning \\
 \midrule
LMCTS & \textbf{62.6} $\pm$ 9.5 & 134.4 $\pm$ 19.9\\ 
FGLMCTS-L1B1 & \textbf{263.2} $\pm$ 70.4 & 913.6 $\pm$ 322.3\\ 
FGLMCTS-L2B1 & \textbf{285.2} $\pm$ 58.0 & 791.5 $\pm$ 208.8\\ 
FGLMCTS-L3B1 & \textbf{229.8} $\pm$ 37.0 & 849.6 $\pm$ 230.2\\ 
FGLMCTS-L4B1 & \textbf{265.6} $\pm$ 70.0 & 868.9 $\pm$ 226.6\\ 
SFGLMCTS-L1B1 & \textbf{295.8} $\pm$ 63.2 & 821.9 $\pm$ 246.9\\
SFGLMCTS-L2B1 & \textbf{297.8} $\pm$ 78.3 & 755.2 $\pm$ 203.7\\
SFGLMCTS-L3B1 & \textbf{305.6} $\pm$ 109.9 & 691.2 $\pm$ 127.6\\
SFGLMCTS-L4B1 & \textbf{242.3} $\pm$ 58.1 & 733.1 $\pm$ 204.1\\
HMCTS & 241.2 $\pm$ 107.0 & \textbf{90.0} $\pm$ 9.2\\
FGHMCTS & \textbf{262.5} $\pm$ 85.7 & 282.7 $\pm$ 156.0\\
SFGHMCTS & 395.9 $\pm$ 504.7 & \textbf{248.1} $\pm$ 147.4\\
    \bottomrule
  \end{tabular}
\end{table}

\textbf{Logistic Bandits.} \cref{table:precondition_logistic} details the effect of preconditioning in our experiments over the logistic bandit setting.

\begin{table}[H]
  \centering
    \caption{Cumulative regret of logistic bandits with and without preconditioning $d=20, \beta=10^3 $.}\label{table:precondition_logistic}
  \begin{tabular}{lcc}
  \toprule 
\textbf{Algorithm} & No Preconditioning & Preconditioning \\
 \midrule
LMCTS & \textbf{202.7} $\pm$ 44.1 & 889.7 $\pm$ 248.0\\ 
FGLMCTS-L1B1 & \textbf{263.2} $\pm$ 70.4 & 913.6 $\pm$ 322.3\\ 
FGLMCTS-L2B1 & \textbf{285.2} $\pm$ 58.0 & 791.5 $\pm$ 208.8\\ 
FGLMCTS-L3B1 & \textbf{229.8} $\pm$ 37.0 & 849.6 $\pm$ 230.2\\ 
FGLMCTS-L4B1 & \textbf{265.5} $\pm$ 70.0 & 868.9 $\pm$ 226.6\\ 
SFGLMCTS-L1B1 & \textbf{295.8} $\pm$ 63.2 & 821.9 $\pm$ 246.9\\
SFGLMCTS-L2B1 & \textbf{297.8} $\pm$ 78.3 & 755.2 $\pm$ 203.7\\
SFGLMCTS-L3B1 & \textbf{305.6} $\pm$ 109.9 & {691.2} $\pm$ 127.6\\
SFGLMCTS-L4B1 & \textbf{242.3} $\pm$ 58.1 & 733.1 $\pm$ 240.1\\
    \bottomrule
  \end{tabular}
\end{table}

\section{Datasets}
\label{sec: appendix: real_world_datasets}

This section lists the datasets we benchmark our MCMC algorithms on.

\begin{table}[h]
\centering
\caption{Statistics of the benchmark datasets.  Context dimension equals $\text{\#arms}\times\text{\#attributes}$.}
\vspace{0.5ex}
\begin{tabular}{lcccc}
\toprule
Dataset & Attributes ($d$) & Arms ($N$) & Context dim ($Nd$) & Instances \\
\midrule
Adult  & 14 & 2 & 28 & 48\,842   \\
Covertype & 54 & 7 & 378 & 581\,012 \\
Magic Telescope & 10 & 2 & 20 & 19\,020  \\
Mushroom & 22 & 2 & 48\;* & 8\,124   \\
Shuttle & 9 & 7 & 63 & 58\,000  \\
MNIST\_784 & 784 & 10 & 7\,840 & 70\,000  \\
Jester & 32 & 8 & 256 & 19\,181\\
Financial & 21 & 8 & 168 & 3713\\
CIFAR-10 & 3072 & 10 & 30\,720 &10\,000 \\
Restaurant & 128 & 127 & 16\,256 & 1\,161 \\
Random-Synthetic-Linear-20 & 20 & 5 & 100 & 10\,000\\
Random-Synthetic-Linear-40 & 40 & 5 & 200 & 10\,000\\
Random-Synthetic-Logistic-20 & 20 & 50 & 1\,000 & 10\,000\\
Random-Synthetic-Logistic-40 & 40 & 50 & 2\,000 & 10\,000\\
Wheel-2-$\delta$a & 2 & 5 & 10 & 5\,000\\
Wheel-2-$\delta$b & 2 & 5 & 10 & 5\,000\\
Wheel-2-$\delta$c & 2 & 5 & 10 & 5\,000\\
Wheel-2-$\delta$d & 2 & 5 & 10 & 5\,000\\
\bottomrule
\end{tabular}
\begin{flushleft}
\scriptsize *After one-hot encoding of categorical attributes.
\end{flushleft}
\label{table: datasets_stats}
\end{table}

\subsection{Real-World Datasets}

We add to the collection of datasets that \cite{riquelme2018deep} tested Thompson Sampling on. For convenience, we provide a description of these datasets below.

\textbf{Mushroom.} The Mushroom Dataset \citep{lincoff1997field} contains 22 mushroom features and labels for whether the mushroom is safe to consume: $\{$``poisonous'', ``safe'' $\}$. As in \cite{blundell2015weight}, we create a bandit problem where the agent must decide whether to eat the mushroom. In this setting, eating a safe mushroom delivers a deterministic reward of $+5$, and eating a poisonous mushroom provides a randomized reward of $+5$ with probability $1/2$ and $-35$ reward with probability $1/2$. If the agent does not eat a mushroom, then the reward is $0$. In this case, we set $T=10000$.

\textbf{Shuttle (Statlog).} The Shuttle Statlog Dataset \citep{asuncion2007uci} provides the value of $d=9$ indicators during a space shuttle flight, and the goal is to predict the state of the radiator subsystem of the shuttle. There are $k = 7$ possible states, and if the agent selects the right state, then reward $1$ is generated. Otherwise, the agent obtains no reward ($r = 0$). We set $T = 10000$. 

\textbf{Covertype.} The Covertype Dataset \citep{asuncion2007uci} classifies the cover type of northern Colorado forest areas in $k = 7$ classes, based on $d = 54$ features, including elevation, slope, aspect, and soil type. Again, the agent obtains reward 1 if the correct class is selected, and 0 otherwise. We run the bandit for $T = 15000$ iterations.

\textbf{Financial.} Following \cite{riquelme2018deep}, we created a Stock Dataset by pulling the prices of $d=21$ publicly-traded companies in NYSE and NASDAQ for the last $14$ years ($T=3713$ days). For each day, the context was the open-to-close price changes for each stock. We synthetically created the arms to be linear combinations of the contexts, representing $k=8$ different potential portfolios.

\textbf{Jester.} Following \cite{riquelme2018deep}, we created a recommendation system bandit problem as follows. The Jester Dataset (\cite{goldberg2001eigentaste}) provides continuous ratings in $[-10, 10]$ for $100$ jokes from a total of 73421 users. We find a complete subset of $T = 19181$ users rating all $40$ jokes. As in \cite{riquelme2018deep}, we take $d = 32$ of the ratings as the context of the user, and $k = 8$ as the arms. The agent recommends one joke, and obtains the reward corresponding to the rating of the user for the selected joke.

\textbf{Adult.} The Adult Dataset \citep{kohavi1996scaling,asuncion2007uci} comprises personal information from the US Census Bureau database, and we consider the $d = 14$ different occupations as feasible actions, based on $94$ covariates. As in previous datasets, the agent obtains reward 1 for making the right prediction, and 0 otherwise. We set $T = 10000$.

\textbf{Telescope.} The MAGIC Telescope dataset \citep{asuncion2007uci} comprises of $d=10$ real-valued features on cosmic ray events (length, energy, etc.), and a binary label for whether the event is a gamma-ray or a hadron. As in before, the agent obtains a reward $1$ for making the right prediction, and $0$ otherwise. We set  $T=10000$.

\textbf{MNIST\_784}. MNIST \citep{lecun1998mnist} comprises of $d=784$ real-valued features corresponding to flattened $26\times 26$ images of $k=10$ hand-drawn digits from $0$ to $9$. The task again is to make a prediction $a \in \{0,\dots,9\}$ of the digit in the image, where the agent gets reward $1$ for making the right prediction and $0$ otherwise. We set the horizon of the game to be $T=10000$.

\textbf{CIFAR-10}. CIFAR-10 \citep{krizhevsky2009learning} comprises of $d=3072$ real-valued features corresponding to flattened $32\times 32\times 3$ RGB images of $k=10$ object classes (airplanes, cars, birds, cats, deer, dogs, frogs, horses, ships, and trucks). The task is to make a prediction $a \in \{0,\dots,9\}$ of the object class in the image, where the agent gets reward $1$ for making the correct prediction and $0$ otherwise. We set the horizon of the game to be $T=25000$.

\textbf{RestaurantRatings}. Restaurant \citep{restaurantratings} contains of $d=128$ real-valued features corresponding to user and restaurant contextual information from the UCI Restaurant and Consumer Data. The dataset contains $k=127$ restaurants and the task is to make a recommendation $a \in \{0,\dots,126\}$ for a restaurant, where the agent gets reward $1$ for a positive rating and $0$ otherwise. We set $T=10000$.

\subsection{Synthetic Datasets}

\begin{figure}
\centering\includegraphics[width=\linewidth]{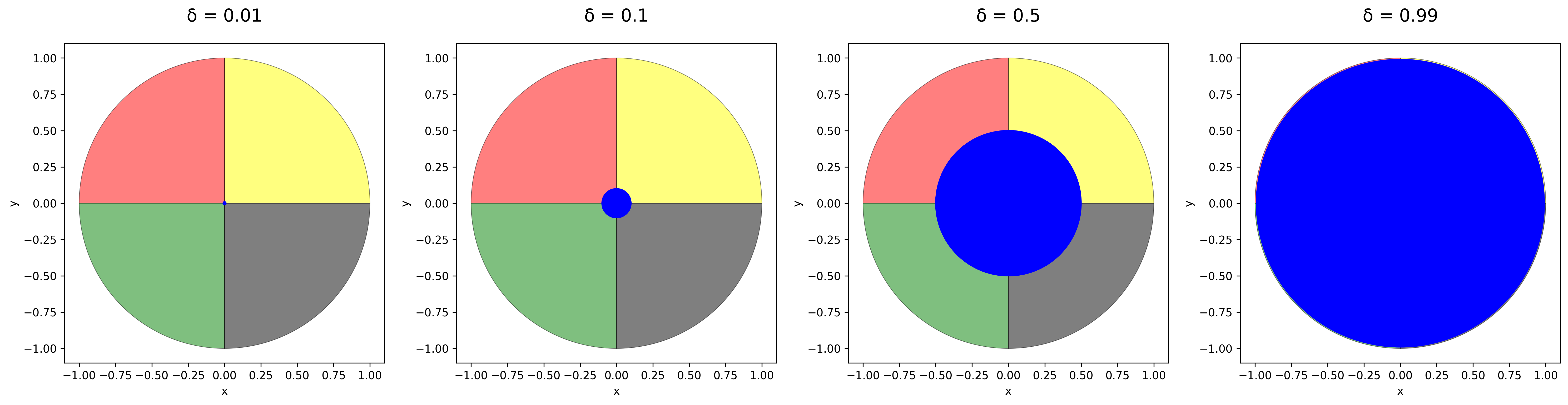}
    \caption{Wheel bandits for increasing values of $\delta \in (0,1)$, where the optimal action for the blue, red, green, black, and yellow regions are given by actions $1,2,3,4,5$, respectively.}
    \label{fig:wheel-bandit}
    \vspace{0.5cm}
\end{figure}
\textbf{Linear Contextual Bandit.} Fix a dimension $d$ and horizon $T=10\,000$. The linear contextual bandit environment is given as follows: at round $t \in [T]$, the agent observes a context $\cX_t\sim\mathcal N(\mathbf 0_{4}, \mI_4)$, chooses action $x_t\in[K]$ with $K=5$, and receives a noisy linear reward ${r_t=\phi(\cX_t,x_t)^\top\theta^\star+\varepsilon_t}$, where ${\varepsilon_t\sim\mathcal N(0,\sigma^2), \sigma = 0.5}$ and $  {\theta^\star \in \mathbb R^{d}}$. We place a Gaussian prior, ${\theta_0 \sim \mathcal N\bigl(\mathbf0_{d},\sigma_0^2 \mI\bigr)}$ with $\sigma_0=0.01$. The feature map $\phi: \mathbb R^4 \times [K] \to \mathbb R^{d}$ is the standard block concatenation $(\phi(\cX_t,0),\dots,\phi(\cX_t,K))$ where ${\phi(\cX_t,i)=e_i\cdot \cX_t}$ (where $e_i$ is the $i$'th standard basis vector). Now, $\theta^\star$ naturally decomposes into $k$ context-specific blocks. Since the likelihood and prior are Gaussian, TS admits a closed-form posterior update, giving a convenient ground-truth baseline for our approximate sampling methods. We derive the following synthetic datasets:
\begin{enumerate}
    \item \textbf{Random-Synthetic-Linear-20:} set $d=20$ in the linear contextual bandit environment.
\item \textbf{Random-Synthetic-Linear-40:} set $d=40$ in the linear contextual bandit environment.
\end{enumerate}

\textbf{Logistic Contextual Bandit.} Fix a dimension $d$. We further consider a logistic contextual bandit to introduce non-linear reward dependencies. With ${T=10\,000}$ and ${K=50}$ arms, at each round ${t \in [T]}$, the learner observes a collection of arm‐specific context vectors ${\cX_{t,a}  \sim \mathcal{N}\bigl(\mathbf 0_{d},\,I_{d}\bigr),}$ where ${a=1, \dots,50}$ each of which is then normalized to unit norm. The learner selects an arm $a_t \in [K]$ and obtains a Bernoulli reward ${r_t \sim\mathrm{Bern} \bigl(\sigma\bigl(\phi(\tilde \cX_{t,a_t})^\top\theta^\star\bigr)\bigr),}$ where ${\sigma(u) = \frac{1}{1+e^{-u}}}$, ${\theta^\star \sim\mathcal{N}\bigl(\mathbf0_{d},\mI_{d}\bigr)}$ (scaled to unit norm), and $\smash{\phi: \R^{d}\to\R^{d}}$ where $\smash{\phi(\cX_t)=\cX_t}$ is the identity feature map extracting the $d$-dimensional context for each arm. We place a Gaussian prior ${\theta_0 \sim\mathcal{N} \bigl (\mathbf0_{d}, \sigma_0^2 \mI\bigr)}$, with $\sigma_0=0.01$. Similarly, we derive the following synthetic datasets:
\begin{enumerate}
    \item \textbf{Random-Synthetic-Logistic-20:} set $d=20$ in the logistic contextual bandit environment.
    \item \textbf{Random-Synthetic-Logistic-40:} set $d=40$ in the logistic contextual bandit environment.
\end{enumerate}

\textbf{Wheel Bandit.} Fix $\delta>0$. The wheel bandit, as defined in \cite{riquelme2018deep}, is a contextual bandit problem with the following structure (see \cref{fig:wheel-bandit}). Let $d = 2$ be the context dimension and $\delta \in (0,1)$ be the exploration parameter. Contexts are sampled uniformly at random from the unit circle in $\mathbb{R}^2$, denoted as $X \sim \mathcal{U}(D)$. The problem consists of $k = 5$ possible actions $a_1,\dots,a_5$. Action $a_1$ provides reward $r \sim \mathcal{N}(\mu_1, \sigma^2)$, independent of context. In the inner region, where $\|X\| \leq \delta$, $a_2,\dots,a_5$ are sub-optimal with $r \sim \mathcal{N}(\mu_2, \sigma^2)$, where $\mu_2 < \mu_1$. In the outer region, where $\|X\| > \delta$, the optimal action depends on the quadrant of the context $X = (X_1, X_2)$ where for $(X_1 > 0, X_2 > 0), a_2$ is optimal, $(X_1 > 0, X_2 < 0), a_3$ is optimal, $(X_1 < 0, X_2 < 0)$, $a_4$ is optimal, and $(X_1 < 0, X_2 > 0), a_5$ is optimal. The optimal action provides $r \sim \mathcal{N}(\mu_3, \sigma^2)$ for $\mu_3 \gg \mu_1$, whereas other actions (including $a_1$) provide $r \sim \mathcal{N}(\mu_2, \sigma^2)$. We set $\mu_1 = 1.2$, $\mu_2 = 1.0$, $\mu_3 = 50.0$, and $\sigma = 0.01$, and let the horizon of the game be $T=5000$. As the probability of a context falling in the high-reward region is $1 - \delta^2$, we expect algorithms to get stuck repeatedly selecting $a_1$ for large $\delta$. We derive the following synthetic datasets:
\begin{enumerate}
    \item \textbf{Wheel-2-$\delta$a:} set $\delta=0.01$ in the wheel bandit environment.
    \item \textbf{Wheel-2-$\delta$b:} set $\delta=0.1$ in the wheel bandit environment.
    \item \textbf{Wheel-2-$\delta$c.} set $\delta=0.5$ in the wheel bandit environment.
    \item \textbf{Wheel-2-$\delta$d.} set $\delta=0.99$ in the wheel bandit environment.
\end{enumerate}

\newpage

\end{document}